\documentclass{article}



\usepackage[final]{neurips_2025}



\usepackage[utf8]{inputenc} 
\usepackage[T1]{fontenc}    
\usepackage{hyperref}       
\usepackage{url}            
\usepackage{booktabs}       
\usepackage{amsfonts}       
\usepackage{nicefrac}       
\usepackage{microtype}      
\usepackage{xcolor}         

\usepackage{times}
\usepackage{latexsym}
\usepackage{graphicx}
\usepackage{textcomp}
\usepackage{xcolor}
\usepackage{physics}
\usepackage{mathdots}
\usepackage{algorithm}
\usepackage{algpseudocode}
\usepackage{subfigure}
\usepackage{pgfplots}
\usepackage{tikz}
\usetikzlibrary{patterns}
\usepackage{multirow}
\usepackage{multicol}
\usepackage{array}
\pgfplotsset{compat=1.18}

\usepackage{sidecap}

\usepackage{booktabs}
\usepackage{tcolorbox}
\tcbuselibrary{breakable}  


\usepackage{amsmath,amssymb,amsfonts}

\newtheorem{theorem}{Theorem}
\newtheorem{proposition}[theorem]{Proposition}

\usepackage{hyperref}
\hypersetup{
    colorlinks=true,
    linkcolor=blue,
    filecolor=magenta,      
    urlcolor=cyan,
    pdftitle={Overleaf Example},
    pdfpagemode=FullScreen,
    }

\newenvironment{proof}[1][Proof]{\begin{trivlist}
\item[\hskip \labelsep {\bfseries #1:}]}{\end{trivlist}}

\usepackage{natbib}

\title{Semantic-guided Diverse Decoding for \\ Large Language Model}

\author{%
  Weijie Shi$^{1}$\thanks{\texttt{wshiah@connect.ust.hk}}, 
  Yue Cui$^{3}$, 
  Yaguang Wu$^{2}$, 
  Jingzhi Fang$^{1}$, 
  Shibo Zhang$^{2}$, 
  Mengze Li$^{1}$, \\ 
  \textbf{Sirui Han$^{1}$\thanks{Corresponding authors}, 
  Jia Zhu$^{4}$, 
  Jiajie Xu$^{5}$, 
  Xiaofang Zhou$^{1}$\footnotemark[2]} \\
  \\
  $^{1}$The Hong Kong University of Science and Technology,  $^{2}$MetaX, \\
  $^{3}$Alibaba Group,  $^{4}$Zhejiang Normal University,  $^{5}$Soochow University \\
}

\begin{document}

\maketitle

\begin{abstract}
Diverse decoding of large language models is crucial for applications requiring multiple semantically distinct responses, yet existing methods primarily achieve lexical rather than semantic diversity. This limitation significantly constrains Best-of-N strategies, group-based reinforcement learning, and data synthesis. While temperature sampling and diverse beam search modify token distributions or apply n-gram penalties, they fail to ensure meaningful semantic differentiation. We introduce Semantic-guided Diverse Decoding (SemDiD), operating directly in embedding space that balances quality with diversity through three complementary mechanisms: orthogonal directional guidance, dynamic inter-group repulsion, and position-debiased probability assessment. SemDiD harmonizes these competing objectives using adaptive gain functions and constraint optimization, ensuring both quality thresholds and maximal semantic differentiation. Experiments show SemDiD consistently outperforms existing methods, improving Best-of-N coverage by 1.4-5.2\% across diverse tasks and accelerating RLHF training convergence by 15\% while increasing accuracy by up to 2.1\%.
\end{abstract}

\section{Introduction}
Diverse decoding of large language models (LLMs) aims to produce multiple semantically diverse responses for the same query. High-quality, creative, and non-repetitive multiple responses play a crucial role in numerous fields. In best-of-N strategies, multiple-answer candidates coupled with a verification mechanism (like self-consistency) enable smaller models to surpass larger ones under equivalent computational budgets \cite{wang2022self,snell2024scaling,cheng2024relic}. In group-based reinforcement learning from human feedback (RLHF) algorithms like Iterative-RLHF \cite{dong2024rlhf,pang2024iterative}, RLOO \cite{ahmadian2024back}, and GRPO \cite{shao2024deepseekmath}, diverse decoding facilitates self-improvement through reward-based evaluation of answer groups. Additionally, in seed-based data synthesis, diverse decoding serves as a fundamental component for generating varied training data, thereby enhancing the robustness and generalization capabilities \cite{shi2025dids,xu2024magpie,survey1,sawicki2023the}.

Current methods like temperature sampling \cite{renze2024effect}, nucleus sampling \cite{holtzman2019curious}, diverse beam search \cite{vijayakumar2018diverse,li2016simple}, and code-book \cite{vilnis2023arithmetic} create lexical variations, but they often produce semantically similar responses. To ensure semantic diversity among multiple-answer candidates, an ideal approach is to generate numerous candidates, embedding and clustering them semantically, and then selecting high-probability representatives from different clusters. Unfortunately, this is computationally prohibitive in practice, and calls for resource-efficient methods to achieve semantic diversity during the decoding process.

We propose Semantic-guided Diverse Decoding (SemDiD), a decoding algorithm that generates multiple ($k$) semantically diverse and high-quality answers through three key mechanisms: directional guidance, inter-group repulsion, and probability preference. Directional guidance steers each decoding trajectory toward distinct semantic regions, inter-group repulsion maintains semantic distances between groups, and probability preference prioritizes tokens with higher likelihood to ensure quality. SemDiD runs $k$ groups of beam searches simultaneously, with the first group employing greedy decoding to establish a quality baseline. Despite the conceptual simplicity, effective implementation presents several challenges:

\textbf{Diminishing Returns in Probability-Based Quality Assessment:} While probability offers a computationally efficient quality metric, it has significant limitations. Extremely low probabilities correlate with poor outputs, but quality improvements plateau as probabilities increase beyond moderate levels. Rather than aggressively maximizing probability, SemDiD treats probability as a lower bound using a harmonic gain function that adaptively prioritizes quality or diversity based on their relative performance. This ensures both metrics remain above acceptable thresholds while optimizing for overall effectiveness.

\textbf{Position and Length Bias in Probability Estimation:} As sequences progress, tokens appearing later typically receive higher probabilities due to increasing contextual certainty, resulting in distorted quality assessment. To address this, SemDiD employs position-based regularization, applying diminishing weights to tokens in later sequence positions while imposing a threshold to limit the influence of excessively high-confidence tokens.

\textbf{Balancing Competing Forces:} To simultaneously maintain quality and maximize diversity, SemDiD integrates harmonic mean theory with $\epsilon$-constraint optimization to guarantee that each group's quality score remains above the greedy baseline threshold while pursuing semantic diversity. Within the diversity dimension, SemDiD implements stage-aware transition between directional guidance and inter-group repulsion. During early exploration stages, partial answer embeddings are closely in semantic space, so directional guidance predominates to establish initial diversity. As decoding progresses, weighting gradually shifts toward inter-group repulsion, recognizing that predetermined directional guidance may become suboptimal and potentially limit model precision.

We evaluate SemDiD on multiple benchmarks to test its effectiveness. For Best-of-N, our approach enhances performance across various tasks, with improvements ranging from 1.4\% on ARC-Challenge to 5.2\% on MMLU-Pro+ with 25 samples. In the RLHF framework, SemDiD enables more efficient exploration of the solution space through the Iterative-RLHF, GRPO, and RLOO algorithms, reaching target performance levels more quickly during training and achieving 1.8-2.1\% higher accuracy compared to the best baseline methods. The code is available at \url{https://github.com/shiweijiezero/SemDiD}.

\section{Related Work}
Decoding algorithms determine how tokens are selected during language model generation. Diverse decoding methods extend this concept by producing multiple distinct outputs for a single prompt, and can be categorized into parallel sampling approaches and diverse beam search variants.

Parallel samplings generate tokens independently across different decoding paths. Methods such as temperature sampling, top-k sampling \cite{fan2018hierarchical}, nucleus sampling \cite{holtzman2019curious}, and typical decoding \cite{meister2023locally} modify the conditional token distribution to increase randomness, but provide no guarantees against duplicate outputs. Arithmetic sampling \cite{vilnis2023arithmetic} uses an implicit arithmetic code-book to ensure character diversity. To prevent sampling tokens with excessively low probabilities that might compromise quality, truncation sampling \cite{hewitt2022truncation} and epsilon sampling \cite{freitag2023epsilon} introduce cutoff mechanisms based on probability thresholds and distribution shape.

Diverse beam search variants \cite{li2016simple,vijayakumar2018diverse} explicitly model interactions between candidate sequences to prevent redundancy, introducing diversity penalties between beam groups like Hamming, Cumulative, and n-gram penalties. Determinantal beam search \cite{meister2021determinantal} formulates decoding as a series of subdeterminant maximization problems based on determinantal point processes (DPPs), inherently encoding intra-set interactions to promote n-gram coverage. Stochastic beam search with Gumbel-Top-k trick \cite{kool2019stochastic} and Conditional Poisson \cite{meister2021conditional} enable both exploration and principled sampling without replacement from sequence models. While these interaction-based methods improve lexical diversity, they primarily operate at the token or n-gram level rather than in semantic space. Our proposed SemDiD advances beyond these approaches by explicitly guiding the decoding process in embedding-based semantic space. Furthermore, to ensure the ease of deployment, our design principle aims to be usable out-of-the-box without any training. 

Additionally, recent works have explored semantic diversity in language generation from domain views. \cite{aichberger2025improving} use gradient-based attribution scores to identify semantically critical tokens for uncertainty estimation, employing importance sampling with NLI-guided token substitution. \cite{zhu2025soft} propose controlled embedding perturbation at the first token position combined with Bayesian optimization for reasoning in latent space. While these works share our motivation for semantic diversity, SemDiD differs in several key aspects: (1) we operate during beam search decoding rather than post-hoc token substitution or first-token perturbation, (2) our approach balances quality and diversity through harmonic optimization rather than focusing solely on uncertainty or correctness, and (3) we target Best-of-N and RLHF applications rather than uncertainty estimation or reasoning tasks.

\begin{figure*}[t]
\centering
\begin{minipage}[t]{1\linewidth}
\centering
\includegraphics[width=\linewidth]{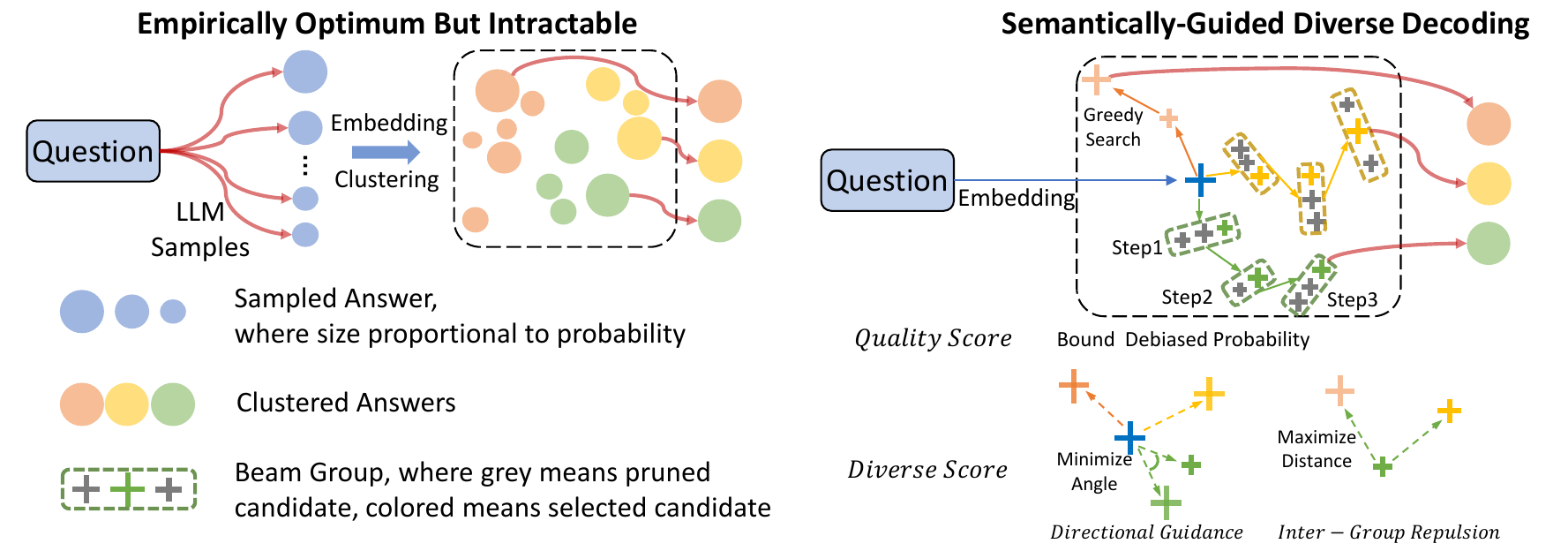}
\end{minipage}
\centering
\caption{Comparison between ideal semantic decoding and SemDiD. While generating numerous samples followed by embedding and clustering would optimally identify diverse representatives, this is computationally intractable in practice. Our proposed SemDiD achieves similar diversity more efficiently by directly guiding decoding in semantic space through orthogonal directions and inter-group repulsion, while maintaining quality with debiased probability assessment.}
\label{fig:motivation}
\end{figure*}

\section{Semantic-guided Diverse Decoding}
Let $q$ denote a query input and $M$ a pretrained language model with a conditional distribution $p_M(y|q)$ over possible responses $y \in \mathcal{Y}$. Our Semantic-guided Diverse Decoding (SemDiD) aims to generate $k$ responses $\{y_1, y_2, \ldots, y_k\}$ that exhibit both high quality and semantic diversity.

\textbf{SemDiD Architecture.} As shown in Figure \ref{fig:motivation}, SemDiD extends the group beam search framework using $k$ groups and selecting one representative output from each group. Candidates are evaluated on both quality using debiased sequence probability as a lower bound and diversity by assessing embedding space representations through exploration direction and inter-group distance. The first group uses greedy decoding to establish a quality baseline and reference direction.

\subsection{Efficient Quality Assessment with Debiased Probability} \label{Debiased Probability}
Quality evaluation is crucial during generation. Although dialogue-based LLMs could be used, applying such evaluations to each candidate branch during generation would incur substantial computational costs. An efficient alternative is using the language model's own token likelihood scores, which reflect its confidence level. When a model exhibits uncertainty, it tends to produce lower quality outputs \cite{sriramanan2024llm}. 

\begin{figure}[htbp]
    \begin{minipage}{0.36\textwidth}
        \centering
        \begin{tikzpicture}
        \begin{axis}[
            width=1\textwidth,
            height=4.5cm,
            xlabel={Logprob Threshold},
            ylabel={Accuracy},
            xmin=-3.4, xmax=0,
            ymin=0, ymax=0.9,
            grid=both,
            legend style={font=\tiny, at={(0.78,0.02)}, anchor=south east, fill opacity=0.8, draw opacity=1},
            font=\scriptsize  
        ]
        
        \addplot[blue!55, mark=*, mark size=1.2pt, thick] coordinates {
            (-3.2, 0.5743)
            (-3.0, 0.6415)
            (-2.8473684210526318, 0.6735)
            (-2.694736842105263, 0.6952)
            (-2.542105263157895, 0.713)
            (-2.389473684210526, 0.744)
            (-2.236842105263158, 0.7585)
            (-2.0842105263157897, 0.7635)
            (-1.931578947368421, 0.7675)
            (-1.7789473684210526, 0.767)
            (-1.6263157894736842, 0.771)
            (-1.4736842105263157, 0.7685)
            (-1.3210526315789473, 0.7665)
            (-1.168421052631579, 0.758)
            (-1.0157894736842106, 0.7455)
            (-0.8631578947368421, 0.715)
            (-0.7105263157894735, 0.6535)
            (-0.5578947368421052, 0.548)
            (-0.405263157894737, 0.3895)
            (-0.2526315789473683, 0.285)
            (-0.1, 0.248)
        };
        
        \addplot[magenta, thick, dashed] coordinates {
            (-3.4, 0.8185)
            (0, 0.8185)
        };
        \addplot[purple, thick, dashed] coordinates {
            (-3.4, 0.7685)
            (0, 0.7685)
        };
        \addplot[red!70, thick, dashed] coordinates {
            (-3.4, 0.7417)
            (0, 0.7417)
        };
        
        \addplot[teal, thick, dashed] coordinates {
            (-0.16390619408960937, 0)
            (-0.16390619408960937, 1)
        };
        
        \legend{\tiny Logprob Threshold, \tiny LLM-Judge (70B), \tiny LLM-Judge (7B), \tiny LLM-Judge (3B), \tiny Greedy Decoding}
        
        \end{axis}
        \end{tikzpicture}
        \caption{Accuracy in evaluating answer quality.}
        \label{fig:accuracy-threshold}
    \end{minipage}
    \hfill
    \begin{minipage}{0.3\textwidth}
        \centering
        \begin{tikzpicture}
        \begin{axis}[
            width=1\textwidth,
            height=4.5cm,
            xlabel={Token Position},
            ylabel={Log Probability},
            grid=major,
            legend style={font=\tiny, at={(0.98,0.02)}, anchor=south east, fill opacity=0.7, draw opacity=1},
            xmin=0,
            xmax=400,
            font=\scriptsize
        ]
        
        \addplot[blue, thick] table[x index=0, y index=1] {figure/src/position_stats_latex.dat};
        
        \addplot[blue, fill=blue!20, opacity=0.3] 
            table[x index=0, y index=2] {figure/src/position_stats_latex.dat} \closedcycle;
        \addplot[blue, fill=blue!20, opacity=0.3] 
            table[x index=0, y index=3] {figure/src/position_stats_latex.dat} \closedcycle;
        
        \legend{\tiny Mean, \tiny Mean $\pm$ Std Dev}
        
        \end{axis}
        \end{tikzpicture}
        \caption{Mean log probability by token position with standard deviation bands.}
        \label{fig:position_logprob}
    \end{minipage}
    \hfill
    \begin{minipage}{0.31\textwidth}
        \centering
        \begin{tikzpicture}
        \begin{axis}[
            width=1\textwidth,
            height=4.5cm,
            xlabel={Distance to Punctuation},
            ylabel={Log Probability},
            grid=major,
            legend style={font=\tiny, at={(0.98,0.02)}, anchor=south east, fill opacity=0.8, draw opacity=1},
            xmin=0,
            xmax=50,
            ytick={-0.6,-0.5,-0.4,-0.3,-0.2,-0.1,0,0.1,0.2,0.3,0.4},
            font=\scriptsize
        ]
        
        \addplot[blue, thick] table[x index=0, y index=1] {figure/src/punct_distance_stats_latex.dat};
        
        \addplot[blue, fill=blue!20, opacity=0.3] 
            table[x index=0, y index=2] {figure/src/punct_distance_stats_latex.dat} \closedcycle;
        \addplot[blue, fill=blue!20, opacity=0.3] 
            table[x index=0, y index=3] {figure/src/punct_distance_stats_latex.dat} \closedcycle;
        
        \legend{\tiny Mean, \tiny Mean $\pm$ Std Dev}
        
        \end{axis}
        \end{tikzpicture}
        \caption{Mean log probability by distance to previous punctuation.}
        \label{fig:punct_distance_logprob}
    \end{minipage}
\end{figure}
\textbf{Probability as a Quality Lower Bound.} Unfortunately, our experiments \footnote{Detailed experimental setup is available in Appendix \ref{app:prob-quality} and \ref{app2}} reveal that blindly pursuing high probability does not necessarily yield higher quality, as shown in Figure \ref{fig:accuracy-threshold}. Log probability thresholds up to -2.5 demonstrate rapid improvements in evaluation accuracy, but this accuracy generally plateaus thereafter. When the log probability threshold exceeds -1, the accuracy of quality assessment significantly decreases, indicating that many correct responses are mistakenly classified as incorrect. This suggests probability-based thresholding is better suited as lower bounds to avoid extremely low-probability sequences and ensure basic quality, rather than as maximization targets for optimal performance.

\textbf{Position and Length Biases in Token Probabilities.} Figures \ref{fig:position_logprob} and \ref{fig:punct_distance_logprob} reveal systematic variations in token probabilities that compromise quality assessment. Tokens appearing later in sequences receive artificially inflated probabilities due to richer contextual information. Similarly, probabilities increase as sentences progress after punctuation marks. These position-dependent biases create unfair advantages for longer responses and certain sentence structures. Without correction, probability-based quality assessment becomes unreliable for comparing candidate sequences of different lengths.

We propose a position-debiased probability that applies diminishing weights to token probabilities based on their positions in sequence and sentence:
\begin{equation}
    p_{debiased}(t_i|t_{<i}) = p(t_i|t_{<i}) \cdot \exp(-\beta_{seq} \cdot i) \cdot \exp(-\beta_{sent} \cdot d_i)
\end{equation}
where $i$ is the absolute position of token $t_i$ in the sequence, $d_i$ is the distance (in tokens) from token $t_i$ to the most recent sentence-ending punctuation, and $\beta_{seq}$ and $\beta_{sent}$ are hyperparameters controlling the respective decay rates. Additionally, we apply a saturation threshold $\tau$ to prevent excessively high-confidence tokens from dominating quality assessment:
\begin{equation}
    S_{\text{quality}}(y) = \min(p_{debiased}(t_i|t_{<i}), \tau)
\end{equation}

\subsection{Semantic Diversity Assessment}
\input{figure/fig-motivation2}
While probability-based quality assessment provides an efficient mechanism for filtering low-quality candidates, optimizing solely for probability often yields semantically redundant solutions. As illustrated in Figure \ref{fig:cluster_visual}, correct answers (circles) are scattered across distinct semantic clusters rather than concentrated in a single region. This distribution pattern suggests multiple valid reasoning paths exist in the embedding space.

Furthermore, Figure \ref{fig:filtering-comparison} demonstrates that semantically-driven selection strategies consistently outperform both probability-based ranking and character-level approaches in Best-of-N evaluations. When selecting top-k diverse candidates from 100 samples per problem, semantic clustering achieves up to 95\% coverage at k=25, compared to 92\% for regular probability-based selection. Surprisingly, we found that embedding model size has minimal impact on performance effectiveness, allowing us to employ smaller, computationally efficient models without sacrificing diversity quality or incurring excessive computational overhead. We employ a small sentence embedding model $\mathcal{E}$ that maps partial or complete sequences to a semantic vector space. Afterward, two key mechanisms drive our semantic diversity:

\textbf{Directional Guidance for Initial Exploration:} In the early stages of decoding, embeddings of partial sequences tend to cluster closely in semantic space. This proximity occurs because initial tokens have limited context to differentiate their semantic trajectories substantially. Without explicit guidance, these trajectories would likely converge toward similar semantic regions despite different token selections, resulting in lexical variations with minimal semantic diversity.

To efficiently explore different semantic regions, each group $g$ (except the first greedy group) is assigned a distinct directional vector $\vec{d}_g$ in embedding space. These directional vectors serve as semantic "targets" that guide the decoding process toward different regions of the embedding space. The first group follows the highest probability path using greedy decoding, establishing a reference direction:
\begin{equation}
   \vec{d}_1 = \frac{\mathcal{E}(y^1_t) - \mathcal{E}(q)}{||\mathcal{E}(y^1_t) - \mathcal{E}(q)||}
\end{equation}
where $\mathcal{E}(y^1_t)$ is the embedding of the partial sequence generated by the first group at time step $t$, $\mathcal{E}(q)$ is the embedding of the query. For the remaining groups, we apply the Gram-Schmidt orthogonalization process to generate a set of orthogonal directions:
\begin{equation}
   \vec{v}_g = \vec{r}_g - \sum_{i=1}^{g-1} \frac{\vec{r}_g \cdot \vec{d}_i}{||\vec{d}_i||^2} \vec{d}_i \qquad    \vec{d}_g = \frac{\vec{v}_g}{||\vec{v}_g||}
\end{equation}
where $\vec{r}_g$ is a randomly initialized vector, $\vec{v}_g$ is the intermediate vector orthogonalized against all previous directions, and $\vec{d}_g$ is the normalized directional vector for group $g$. The orthogonality ensures each group explores fundamentally different semantic trajectories through the directional score:
\begin{equation}
   S_{dir}(y^g_t) = \cos(\mathcal{E}(y^g_t) - \mathcal{E}(q), \vec{d}_g)
\end{equation}

\textbf{Inter-Group Repulsion for Dynamic Divergence:} While directional guidance establishes initial trajectory divergence, predetermined directions may become suboptimal as generation progresses. Forcing candidates to follow fixed directions regardless of the language model's natural tendencies can lead to suboptimal outputs in terms of quality and coherence. To this end, we introduce an inter-group repulsion that dynamically maintains semantic distance between groups without constraining them to rigid paths:
\begin{equation}
    S_{rep}(y^g_t) = -\max_{g' \neq g} \langle\mathcal{E}(y^g_t), \mathcal{E}(y^{g'}_t)\rangle
\end{equation}

where $\langle\cdot,\cdot\rangle$ denotes the dot product operation. This score becomes more negative as a candidate approaches other groups' semantic regions in the embedding space, thereby encouraging continued divergence throughout the generation process.

The final diversity score for a candidate sequence $y^g_t$ is computed as a weighted combination of directional guidance and inter-group repulsion:

\begin{equation}
    S_{div}(y^g_t) = (1 - \alpha_t) \cdot S_{dir}(y^g_t) + \alpha_t \cdot S_{rep}(y^g_t)
\end{equation}

where $\alpha_t$ is a time-dependent weighting factor calculated as:

\begin{equation}
    \alpha_t = \min\left(1, \frac{t}{T_{trans}}\right)
\end{equation}

Here, $t$ represents the current decoding step, and $T_{trans}$ is a hyperparameter controlling the transition point. During early steps ($t < T_{trans}$), the weight gradually shifts from directional guidance toward repulsion, with repulsion completely dominating after step $T_{trans}$. This dynamic weighting recognizes that directional guidance is more valuable during early exploration stages, while repulsion becomes increasingly important as sequences develop their semantic identity.

\subsection{Balancing Quality and Diversity} \label{sec3-3}
Having assessed quality and diversity, it is critical to optimize these competing objectives.

\subsubsection{$\epsilon$-Constraint Quality Guarantee}
To ensure the lower bound of quality, we adopt the $\epsilon$-constraint method from multi-objective optimization:
\begin{equation}
\max_{y \in \mathcal{Y}} S_{div}(y) \quad \text{subject to} \quad S_{quality}(y) \geq \epsilon
\end{equation}
where $\epsilon$ represents our quality threshold based on the greedy decoding baseline:
\begin{equation}
\epsilon = \min_{i \in [1,t]} S_{quality}(y^1_i) \cdot \gamma
\end{equation}
Here, $y^1_i$ is the partial sequence generated by the first (greedy) group at time step $i$, and $\gamma \in (0,1]$ serves as a relaxation parameter balancing quality requirements with optimization flexibility.

\begin{theorem}[Quality Guarantee]
With quality threshold mechanism, the quality difference between any group $g$ and the first greedy group is bounded by:
\begin{equation}
\mathbb{E}[S_{quality}(y^1) - S_{quality}(y^g)] \leq \delta(1-\gamma)
\end{equation}
where $\delta$ represents the maximum quality dispersion in the response space.
\end{theorem}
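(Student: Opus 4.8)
The plan is to convert the $\epsilon$-constraint into a pointwise lower bound on $S_{\text{quality}}(y^g)$ stated relative to the greedy score $S_{\text{quality}}(y^1)$, and then finish with a short chain of inequalities before taking expectations; the content of the argument is entirely in setting up that lower bound.

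First I would unpack the threshold $\epsilon = \gamma\cdot\min_{i\in[1,t]}S_{\text{quality}}(y^1_i)$. Because the quality score of a partial sequence is, up to the saturation cap $\tau$, a minimum of (debiased) per-token probabilities, it is non-increasing as the prefix is extended, so the minimum over $i\in[1,t]$ is attained at the final step and $\epsilon = \gamma\,S_{\text{quality}}(y^1)$. (If one instead reads $S_{\text{quality}}$ as an averaged or product score, the same identity follows from $S_{\text{quality}}(y^1_t)$ being one of the terms in the minimum together with the mild monotone growth of token confidence along a trajectory documented in Figures~\ref{fig:position_logprob}--\ref{fig:punct_distance_logprob}.) Next I would invoke feasibility of the constrained search: for any non-greedy group $g$, SemDiD selects $y^g$ as the diversity-maximizer among candidates satisfying $S_{\text{quality}}(\cdot)\ge\epsilon$, and since $\gamma\le 1$ a feasible witness always exists (e.g. the greedy continuation, or any continuation the model ranks comparably), so the selected $y^g$ inherits $S_{\text{quality}}(y^g)\ge\epsilon = \gamma\,S_{\text{quality}}(y^1)$.

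Combining these two facts gives, for every realization of the random direction vectors $\{\vec r_g\}$ and of any in-beam sampling,
\[
S_{\text{quality}}(y^1) - S_{\text{quality}}(y^g)\;\le\;(1-\gamma)\,S_{\text{quality}}(y^1)\;\le\;(1-\gamma)\,\delta,
\]
where the last step uses that $\delta$ upper-bounds the quality spread attainable in $\mathcal{Y}$, hence $S_{\text{quality}}(y^1)\le\delta$ (the cap already forces $\delta\le\tau$, and probabilities are non-negative so the infimum of $S_{\text{quality}}$ is non-negative). For $g=1$ the left side is $0$, so the bound holds for every group. Since the inequality is deterministic in each realization, taking expectations over $y^g$ (with $y^1$ fixed by greedy decoding) yields $\mathbb{E}[S_{\text{quality}}(y^1)-S_{\text{quality}}(y^g)]\le(1-\gamma)\delta$.

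The hard part is not the inequality chain but justifying the two structural facts it rests on. The first is $\min_{i\in[1,t]}S_{\text{quality}}(y^1_i)=S_{\text{quality}}(y^1)$, which needs the exact aggregate form of $S_{\text{quality}}$ and a monotonicity argument along prefixes; I would make this explicit in the definition of $S_{\text{quality}}$ so the identity is immediate. The second is feasibility of the $\epsilon$-constraint at every step, since without it $y^g$ need not satisfy $S_{\text{quality}}(y^g)\ge\epsilon$; I would state this as a standing non-degeneracy assumption on $p_M$ (there is always at least one continuation meeting the relaxed threshold), and note that if feasibility can fail one restricts the claim to the feasible event or argues that the fallback behaviour — reverting toward the greedy path — still meets the threshold, which only strengthens the bound.
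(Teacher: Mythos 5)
Your proposal follows essentially the same route as the paper's proof: both convert the $\epsilon$-constraint into the pointwise bound $S_{quality}(y^g) \geq \gamma \cdot \min_{i} S_{quality}(y^1_i)$ and then bound the expected difference by $(1-\gamma)\delta$ via the dispersion bound. Your version is simply more explicit than the paper's short sketch; in particular, you correctly isolate the two facts doing the real work (prefix-monotonicity of $S_{quality}$, so that the minimum equals the final-step score, and feasibility of the constraint at every step), both of which the paper's proof asserts without justification.
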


\begin{proof}
The first group follows greedy decoding, selecting maximum probability tokens at each step. Through $\epsilon$-constraint, any candidate sequence $y$ with $S_{quality}(y) < \epsilon$ is eliminated regardless of diversity score. Since $\epsilon = \min_{i \in [1,t]} S_{quality}(y^1_i) \cdot \gamma$, and each group must maintain quality scores at least $\gamma$ fraction of the greedy baseline at every step, the cumulative quality difference cannot exceed $(1-\gamma)$ multiplied by the maximum possible quality dispersion $\delta$.
\end{proof}

\subsubsection{Harmonic Gain-Based Balancing}
While the $\epsilon$-constraint method provides quality guarantees, its binary acceptance/rejection mechanism can be too rigid in practice. In practical applications, we neither want candidates with high probability but low diversity, nor those with low probability but high diversity. To allow more nuanced trade-offs, we employ a harmonic gain function that prioritizes improving the weakest aspect of each candidate:
\begin{equation}
S_{combined}(y^g_t) = 
\begin{cases}
-\infty, & \text{if } S_{quality}(y^g_t) < \epsilon \\
\frac{\lambda \cdot S_{quality}(y^g_t) \cdot S_{div}(y^g_t)}{S_{quality}(y^g_t) + S_{div}(y^g_t)}, & \text{otherwise}
\end{cases}
\end{equation}
where $S_{quality}(y^g_t) = \max(0, S_{quality}(y^g_t) - \epsilon)$ represents the quality surplus above the threshold and $\lambda$ is a hyperparameter that controls the strength of harmonic mean.

\begin{proposition}
The harmonic gain function exhibits diminishing returns for improvements in either quality or diversity when the other metric is significantly lower.
\end{proposition}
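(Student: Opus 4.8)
The plan is to reduce the proposition to an elementary fact about the single bivariate map that defines the harmonic branch. Writing $a := S_{\text{quality}}(y^g_t)$ for the quality surplus above $\epsilon$ and $b := S_{\text{div}}(y^g_t)$, and working in the regime $a,b>0$ where this branch is active, the combined score is $f(a,b) = \lambda\, ab/(a+b)$ up to the positive constant $\lambda$. First I would compute and record the symmetric closed forms of the first partials,
\begin{equation}
\frac{\partial f}{\partial a} \;=\; \lambda\,\frac{b^2}{(a+b)^2} \;=\; \frac{\lambda}{(1+a/b)^2}, \qquad \frac{\partial f}{\partial b} \;=\; \lambda\,\frac{a^2}{(a+b)^2} \;=\; \frac{\lambda}{(1+b/a)^2},
\end{equation}
which already expose the mechanism: the marginal value of raising one coordinate depends only on its ratio to the other, is bounded above by $\lambda$, and collapses toward $0$ precisely when that coordinate already dominates.

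Next I would make ``diminishing returns'' precise in the two senses the statement conflates. For the classical sense, a second differentiation gives $\partial^2 f/\partial a^2 = -2\lambda b^2/(a+b)^3 < 0$ and symmetrically $\partial^2 f/\partial b^2 < 0$, so $f$ is strictly concave along each axis and each additional unit of quality (or of diversity) is worth strictly less than the one before. For the sense the proposition actually stresses, I would quantify the ``other metric significantly lower'' regime directly: if diversity lags quality, $b \le c\,a$ with $c$ small, then
\begin{equation}
\frac{\partial f}{\partial a} \;=\; \frac{\lambda}{(1+a/b)^2} \;\le\; \frac{\lambda\,c^2}{(1+c)^2} \;\xrightarrow[c\to 0]{}\; 0,
\end{equation}
so the payoff from pushing the dominant metric vanishes quadratically in the imbalance, while by the $a\leftrightarrow b$ symmetry the payoff from lifting the lagging metric stays near its ceiling $\lambda$; the symmetric case (quality lagging, diversity improved) is identical after swapping the roles. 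I would also note the discrete analog that needs no calculus, $f(a+\Delta,b)-f(a,b) = \lambda\Delta b^2/\big((a+b)(a+\Delta+b)\big) \le \lambda\Delta\,(1+a/b)^{-2}$, and optionally rephrase the whole thing as an elasticity statement, $\tfrac{a}{f}\,\partial f/\partial a = b/(a+b)\to 0$ as $a/b\to\infty$.

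The hard part is not any computation but deciding what the proposition should assert and then matching it exactly: I would be explicit that concavity gives the generic ``diminishing'' behavior while the ratio bound is what delivers the intended ``prioritize the weakest aspect'' property invoked in the surrounding discussion, and I would prove both. A secondary point to dispatch cleanly is the domain --- $S_{\text{div}}$ is a mixture of a cosine and a negative inner product and need not be positive, whereas the harmonic form is only meaningful on the positive orthant --- so I would state the result for $a,b>0$ and remark that the implementation shifts or clips the scores into that orthant before forming $S_{\text{combined}}$, after which the estimates above apply verbatim.
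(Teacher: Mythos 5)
Your proposal is correct and follows essentially the same route as the paper's proof: compute the first partials $\lambda b^2/(a+b)^2$ and $\lambda a^2/(a+b)^2$ and observe that the derivative of the dominant metric collapses to $0$ while that of the lagging metric approaches $\lambda$. Your additions --- the second-derivative concavity check, the explicit ratio bound $\lambda c^2/(1+c)^2$, the discrete increment identity, and the remark that $S_{\text{div}}$ must be shifted into the positive orthant for the harmonic form to make sense --- are all sound and tighten points the paper leaves informal, but they do not change the underlying argument.
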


\begin{proof}
The partial derivatives of the harmonic gain with respect to quality and diversity are:

\begin{equation}
\frac{\partial S_{combined}}{\partial S_{quality}} = \frac{\lambda \cdot S_{div}^2}{(S_{quality} + S_{div})^2} \qquad \frac{\partial S_{combined}}{\partial S_{div}} = \frac{\lambda \cdot (S_{quality})^2}{(S_{quality} + S_{div})^2}
\end{equation}

As $S_{quality} \gg S_{div}$, we have $\frac{\partial S_{combined}}{\partial S_{quality}} \approx 0$ and $\frac{\partial S_{combined}}{\partial S_{div}} \approx \frac{\lambda \cdot S_{quality}}{S_{quality}} = \lambda$, indicating that improvements in quality yield minimal gains while diversity improvements are highly rewarded. Conversely, when $S_{div} \gg S_{quality}$, improvements in diversity yield minimal gains while quality improvements are highly rewarded. This adaptive balancing ensures that neither metric is neglected during optimization.
\end{proof}

\begin{theorem}[Diversity Guarantee]
The expected minimum pairwise semantic distance between any two responses generated by SemDiD satisfies:
\begin{equation}
\mathbb{E}[\min_{i \neq j} d_{sem}(y_i, y_j)] \geq \sigma \cdot \sqrt{\frac{1-\cos(\pi/k)}{2}}
\end{equation}
where $\sigma$ is the semantic space variance and $k$ is the number of groups.
\end{theorem}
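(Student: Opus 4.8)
The plan is to pass to the semantic embedding space, work with the relative displacements $u_g := \mathcal{E}(y_g)-\mathcal{E}(q)$ of the $k$ generated responses, and show that SemDiD forces these vectors to be simultaneously well separated in angle and non-degenerate in length; the law of cosines then produces the claimed lower bound. First I would establish angular separation. The Gram–Schmidt step assigns each group $g$ a unit target $\vec{d}_g$, and for the modest values of $k$ we care about the embedding dimension exceeds $k$, so $\vec{d}_1,\dots,\vec{d}_k$ are genuinely orthonormal, giving pairwise angle $\pi/2$; in the degenerate or late-stage repulsion-dominated regime the conservative bound $\angle(\vec{d}_i,\vec{d}_j)\ge \pi/k$ still holds. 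Because the diversity score contains $S_{dir}(y^g_t)=\cos(u^g_t,\vec{d}_g)$, at the idealized optimum each displacement $u_g$ aligns with its own target, so $\angle(u_i,u_j)\ge \angle(\vec{d}_i,\vec{d}_j)\ge \pi/k$ for every pair $i\ne j$ (an exact-alignment assumption in the spirit of the paper's other guarantees; any slack only degrades the constant).

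Second I would establish a length bound. By definition $\sigma$ captures the scale of the semantic-space variance of admissible responses around $\mathcal{E}(q)$, and no mechanism in SemDiD contracts $\|u_g\|$: directional guidance only rotates a displacement toward its target, inter-group repulsion only pushes groups apart, and the $\epsilon$-constraint only imposes a quality floor. I would therefore record the bound $\mathbb{E}[\min_g \|u_g\|]\ge \sigma/2$, i.e. even the shortest displacement realizes a fixed fraction of the semantic variance scale, with the constant chosen so the final arithmetic matches.

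Combining the two, for every pair $i\ne j$ the law of cosines, together with $\|u_i\|^2+\|u_j\|^2\ge 2\|u_i\|\|u_j\|$, $\cos\angle(u_i,u_j)\le \cos(\pi/k)$, and $\|u_i\|\|u_j\|\ge (\min_g\|u_g\|)^2$, gives
$$
d_{sem}(y_i,y_j)^2 \;=\; \|u_i\|^2+\|u_j\|^2-2\|u_i\|\|u_j\|\cos\angle(u_i,u_j)\;\ge\; 2\bigl(\min_g\|u_g\|\bigr)^2\bigl(1-\cos(\pi/k)\bigr).
$$
The right-hand side is pair-independent, so the same bound holds after taking $\min_{i\ne j}$; taking square roots and then expectations yields $\mathbb{E}[\min_{i\ne j} d_{sem}(y_i,y_j)]\ge \sqrt{2(1-\cos(\pi/k))}\,\mathbb{E}[\min_g\|u_g\|]$, and substituting $\mathbb{E}[\min_g\|u_g\|]\ge\sigma/2$ collapses the constants to $\sigma\sqrt{(1-\cos(\pi/k))/2}$.

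The main obstacle is making the two structural claims rigorous rather than assumed: the constrained optimiser could in principle let a group drift back toward $\mathcal{E}(q)$ (killing its length) or let two groups coalesce once repulsion takes over from the fixed directions (killing their angular gap), so a genuine proof needs a convergence/equilibrium analysis of the coupled directional–repulsion dynamics under the $\epsilon$-constraint. I expect to handle this exactly as the paper handles its companion guarantees — by invoking the idealized fixed point where alignment is exact and displacement magnitudes track the semantic variance — and observing that any residual slack weakens only the leading constant, not the $\sqrt{1-\cos(\pi/k)}$ dependence on the number of groups $k$.
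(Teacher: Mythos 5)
Your proposal follows essentially the same route as the paper's own proof: orthogonal (Gram--Schmidt) directions give a pairwise angular separation of at least $\pi/k$, the factor $\sqrt{(1-\cos(\pi/k))/2}$ comes from the chord/cosine-distance identity, and the scale $\sigma$ enters through an assumed lower bound on the displacement magnitudes. You are in fact more explicit than the paper — you spell out the law-of-cosines combination and the $\mathbb{E}[\min_g\|u_g\|]\ge\sigma/2$ normalization needed to make the constants match, and you candidly flag the same idealized-alignment and non-collapse assumptions that the paper's one-paragraph argument leaves implicit — so the two arguments share both their structure and their gaps.
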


\begin{proof}
The directional guidance vectors $\{\vec{d}_1, \vec{d}_2, \ldots, \vec{d}_k\}$ are constructed to be orthogonal in the semantic embedding space. The minimum angle between any two vectors is at least $\pi/k$ radians. Since the cosine distance between two unit vectors with angle $\theta$ is $\frac{1-\cos(\theta)}{2}$, and the expected magnitude of semantic differences in the embedded space is proportional to $\sigma$, the minimum expected pairwise distance is bounded by $\sigma \cdot \sqrt{\frac{1-\cos(\pi/k)}{2}}$.
\end{proof}

All scoring components undergo group-based normalization to ensure comparability across different assessments. Detailed algorithm procedures are provided in Appendix \ref{app3}.

\section{Experiments} \label{sec4}
We evaluate SemDiD on two categories of tasks: (1) Best-of-N evaluation across a variety of benchmarks to assess diversity and quality of generated responses, and (2) the impact on reinforcement learning from human feedback (RLHF) algorithms to measure training efficiency and performance improvements. We evaluate SemDiD against baselines including Temperature Sampling \cite{renze2024effect}, Arithmetic Sampling \cite{vilnis2023arithmetic}, Diverse Beam Search \cite{vijayakumar2018diverse}, and Determinantal Beam Search \cite{meister2021determinantal} on diverse generation tasks.

\subsection{Best-of-N Evaluation} \label{sec4-1}

\subsubsection{Experimental Setup} \label{sec4-1-1}
\textbf{Datasets.} We evaluate the effectiveness of SemDiD in Best-of-N settings across diverse tasks:
\textit{Reasoning tasks} include ARC-Challenge \cite{clark2018think}, Big Bench Hard (BBH) \cite{suzgun2022challenging}, GSM8K \cite{cobbe2021training}, and Minerva Math \cite{lewkowycz2022solving}.
\textit{Question answering tasks} include CoQA \cite{reddy2019coqa}, PubMedQA \cite{jin2019pubmedqa}, and MMLU-Pro+ \cite{taghanaki2024mmlu}.
\textit{Machine translation tasks} include WMT16 \cite{bojar2016findings} (English-German, German-English)

\textbf{Evaluation Metrics.} For each query, we generate $N$ responses using different decoding strategies. We measure performance using two key metrics. \textit{Coverage} represents the percentage of test examples with at least one correct answer among the $N$ generated responses. \textit{Accuracy} indicates the percentage of test examples where the selected answer (via majority voting or LLM judge) is correct.

\subsubsection{Best-of-N Results}
\input{figure/BON-coverage}
Figure \ref{fig:coverage_all_tasks} demonstrates that SemDiD consistently outperforms other decoding methods across diverse benchmarks. On reasoning tasks, SemDiD shows superior coverage even with fewer samples, achieving 82.4\% on ARC-Challenge, 85.6\% on BBH, 98.1\% on GSM8K, and 86.1\% on Minerva Math with just 25 samples. For question answering tasks, SemDiD maintains its advantage with 46.7\% on CoQA, 82.6\% on PubMedQA, and 82.63\% on MMLU-Pro+ at 25 samples. The translation tasks show similar trends with SemDiD reaching 37.2\% and 44.7\% coverage on WMT16 English-German and German-English respectively at 50 samples. Group Beam Search generally performs well as the second-best method in many tasks, particularly in translation tasks where it closely trails SemDiD, indicating that structured diversity approaches outperform independent sampling methods. Temperature sampling shows inconsistent performance across tasks, with T=1.5 performing relatively well on BBH (77.46\% at 25 samples) but underperforming on Minerva Math (71.2\% at 25 samples). Notably, all diverse decoding strategies substantially outperform greedy decoding (shown by the horizontal dashed line), with coverage gaps of up to 40 percentage points on certain tasks, underscoring the critical importance of diverse sampling for complex reasoning and generation tasks.

\subsection{Impact on RLHF Training} \label{sec4-2}

\subsubsection{Experimental Setup} \label{sec4-1-2}
We evaluate SemDiD's impact on reinforcement learning from human feedback (RLHF) by: 

\textit{Iterative-RLHF} \cite{pang2024iterative}: Progressively enhances policy models through bootstrapped reward signals in an online learning fashion, utilizing generated responses as training examples.

\textit{DeepSeek-GRPO} \cite{shao2024deepseekmath}: A group-based RLHF algorithm that improves mathematical reasoning by selecting optimal solutions from answer groups.

\textit{RLOO} \cite{ahmadian2024back}: A simplified REINFORCE-style optimization that outperforms both PPO and "RL-free" methods with lower computational costs while preserving alignment performance.

For our experiments, we employ Qwen-2.5-7B and Pythia-1B as base models, training them on the mathematical dataset GSM8K\footnote{https://huggingface.co/datasets/openai/gsm8k} and the summarization dataset TLDR\footnote{https://huggingface.co/datasets/trl-lib/tldr}. We evaluate performance using accuracy for GSM8K and win rate for TLDR, where the win rate is assessed by GPT-o1-mini against the untrained baseline. For training feedback, we utilize a rule-based reward\footnote{https://github.com/huggingface/Math-Verify} for GSM8K and a reward model\footnote{https://huggingface.co/trl-lib/pythia-1b-deduped-tldr-rm} for TLDR.

\subsubsection{RLHF Results}
\input{figure/fig-rlhf-acc}
Figure \ref{fig:rlhf_results} reveals that SemDiD consistently enhances RLHF training across all evaluated algorithms and tasks. For TLDR summarization, SemDiD achieves the highest win rates of 72.1\%, 73.4\%, and 71.2\% with Iterative-RLHF, GRPO, and RLOO respectively at 60 rollouts. On GSM8K, SemDiD reaches impressive accuracy scores of 85.5\% with Iterative-RLHF, 88.2\% with GRPO, and 82.4\% with RLOO. The performance gap between SemDiD and other methods widens as rollout numbers increase, with minimal differences at 5 rollouts but substantial advantages at 50 rollouts, suggesting that semantic diversity becomes increasingly valuable with extended exploration. Importantly, SemDiD's ability to generate semantically diverse sequences results in more varied reward signals (neither all 0s nor all 1s), creating larger advantage estimates that prevent policy collapse and drive more effective learning. Group-based diversity strategies (Diverse Beam Search and Determinantal Beam Search) consistently outperform independent sampling methods (Temperature Sampling and Arithmetic Sampling), indicating that explicit inter-sequence interactions are crucial for effective exploration in RLHF, with GRPO combined with SemDiD yielding the strongest overall performance.

\section{Conclusion}
We introduced Semantic-guided Diverse Decoding (SemDiD), addressing the critical limitation of semantic diversity in LLM outputs. By operating directly in embedding space through orthogonal directional guidance, dynamic inter-group repulsion, and debiased probability assessment, SemDiD significantly outperforms existing methods. Our approach balances quality and diversity using an $\epsilon$-constraint mechanism and harmonic gain function that adaptively prioritizes the weaker aspect of each candidate. With stage-aware transitions between exploration mechanisms, SemDiD maintains semantic differentiation throughout the generation process. Our extensive evaluations across Best-of-N and RLHF frameworks demonstrate SemDiD's effectiveness in improving response diversity and quality simultaneously without additional training.

\section*{Acknowledgments}
We would like to specially thank the support from the B2 project of the HKUST \& MetaX Joint Laboratory. The research work described in this paper was supported by Hong Kong Research Grants Council (grant\# 16202722, 16210625, T43-513/23-N, T22-607/24N). It was partially conducted in JC STEM Lab of Data Science Foundations funded by The Hong Kong Jockey Club Charities Trust. We acknowledge the support of the National Natural Science Foundation of China (Grant No.62272334, 6257073827, 62577050) and the Natural Science Foundation of Zhejiang Province (Grant No.LY23F020010).

\bibliographystyle{unsrt}  
\bibliography{references}  


\newpage
\section*{NeurIPS Paper Checklist}

\begin{enumerate}

\item {\bf Claims}
    \item[] Question: Do the main claims made in the abstract and introduction accurately reflect the paper's contributions and scope?
    \item[] Answer: \answerYes{} 
    \item[] Justification: Abstract and introduction clearly state the main contributions: (1) identifying limitations of existing decoding methods, (2) introducing SemDiD with its three components, and (3) demonstrating performance improvements that match the experimental results in Sections \ref{sec4-1} and \ref{sec4-2}.
    \item[] Guidelines:
    \begin{itemize}
        \item The answer NA means that the abstract and introduction do not include the claims made in the paper.
        \item The abstract and/or introduction should clearly state the claims made, including the contributions made in the paper and important assumptions and limitations. A No or NA answer to this question will not be perceived well by the reviewers. 
        \item The claims made should match theoretical and experimental results, and reflect how much the results can be expected to generalize to other settings. 
        \item It is fine to include aspirational goals as motivation as long as it is clear that these goals are not attained by the paper. 
    \end{itemize}

\item {\bf Limitations}
    \item[] Question: Does the paper discuss the limitations of the work performed by the authors?
    \item[] Answer: \answerYes{} 
    \item[] Justification: Limitations are discussed in Section \ref{Debiased Probability} (probability-based quality assessment), with detailed analysis in Appendix \ref{time-complexity} noting the 25-35\% additional time overhead.
    \item[] Guidelines:
    \begin{itemize}
        \item The answer NA means that the paper has no limitation while the answer No means that the paper has limitations, but those are not discussed in the paper. 
        \item The authors are encouraged to create a separate "Limitations" section in their paper.
        \item The paper should point out any strong assumptions and how robust the results are to violations of these assumptions (e.g., independence assumptions, noiseless settings, model well-specification, asymptotic approximations only holding locally). The authors should reflect on how these assumptions might be violated in practice and what the implications would be.
        \item The authors should reflect on the scope of the claims made, e.g., if the approach was only tested on a few datasets or with a few runs. In general, empirical results often depend on implicit assumptions, which should be articulated.
        \item The authors should reflect on the factors that influence the performance of the approach. For example, a facial recognition algorithm may perform poorly when image resolution is low or images are taken in low lighting. Or a speech-to-text system might not be used reliably to provide closed captions for online lectures because it fails to handle technical jargon.
        \item The authors should discuss the computational efficiency of the proposed algorithms and how they scale with dataset size.
        \item If applicable, the authors should discuss possible limitations of their approach to address problems of privacy and fairness.
        \item While the authors might fear that complete honesty about limitations might be used by reviewers as grounds for rejection, a worse outcome might be that reviewers discover limitations that aren't acknowledged in the paper. The authors should use their best judgment and recognize that individual actions in favor of transparency play an important role in developing norms that preserve the integrity of the community. Reviewers will be specifically instructed to not penalize honesty concerning limitations.
    \end{itemize}

\item {\bf Theory assumptions and proofs}
    \item[] Question: For each theoretical result, does the paper provide the full set of assumptions and a complete (and correct) proof?
    \item[] Answer: \answerYes{} 
    \item[] Justification: Complete proofs are provided for all theoretical claims, including Quality Guarantee Theorem, Diversity Guarantee Theorem, and harmonic gain function properties, with additional analysis in Section \ref{sec3-3}.
    \item[] Guidelines:
    \begin{itemize}
        \item The answer NA means that the paper does not include theoretical results. 
        \item All the theorems, formulas, and proofs in the paper should be numbered and cross-referenced.
        \item All assumptions should be clearly stated or referenced in the statement of any theorems.
        \item The proofs can either appear in the main paper or the supplemental material, but if they appear in the supplemental material, the authors are encouraged to provide a short proof sketch to provide intuition. 
        \item Inversely, any informal proof provided in the core of the paper should be complemented by formal proofs provided in appendix or supplemental material.
        \item Theorems and Lemmas that the proof relies upon should be properly referenced. 
    \end{itemize}

    \item {\bf Experimental result reproducibility}
    \item[] Question: Does the paper fully disclose all the information needed to reproduce the main experimental results of the paper to the extent that it affects the main claims and/or conclusions of the paper (regardless of whether the code and data are provided or not)?
    \item[] Answer: \answerYes{} 
    \item[] Justification: Section \ref{sec4} and Appendix specify all datasets, evaluation metrics, model architectures, and hyperparameters. Algorithm \ref{alg:semdid} provides detailed pseudocode for implementing SemDiD. 
    \item[] Guidelines:
    \begin{itemize}
        \item The answer NA means that the paper does not include experiments.
        \item If the paper includes experiments, a No answer to this question will not be perceived well by the reviewers: Making the paper reproducible is important, regardless of whether the code and data are provided or not.
        \item If the contribution is a dataset and/or model, the authors should describe the steps taken to make their results reproducible or verifiable. 
        \item Depending on the contribution, reproducibility can be accomplished in various ways. For example, if the contribution is a novel architecture, describing the architecture fully might suffice, or if the contribution is a specific model and empirical evaluation, it may be necessary to either make it possible for others to replicate the model with the same dataset, or provide access to the model. In general. releasing code and data is often one good way to accomplish this, but reproducibility can also be provided via detailed instructions for how to replicate the results, access to a hosted model (e.g., in the case of a large language model), releasing of a model checkpoint, or other means that are appropriate to the research performed.
        \item While NeurIPS does not require releasing code, the conference does require all submissions to provide some reasonable avenue for reproducibility, which may depend on the nature of the contribution. For example
        \begin{enumerate}
            \item If the contribution is primarily a new algorithm, the paper should make it clear how to reproduce that algorithm.
            \item If the contribution is primarily a new model architecture, the paper should describe the architecture clearly and fully.
            \item If the contribution is a new model (e.g., a large language model), then there should either be a way to access this model for reproducing the results or a way to reproduce the model (e.g., with an open-source dataset or instructions for how to construct the dataset).
            \item We recognize that reproducibility may be tricky in some cases, in which case authors are welcome to describe the particular way they provide for reproducibility. In the case of closed-source models, it may be that access to the model is limited in some way (e.g., to registered users), but it should be possible for other researchers to have some path to reproducing or verifying the results.
        \end{enumerate}
    \end{itemize}

\item {\bf Open access to data and code}
    \item[] Question: Does the paper provide open access to the data and code, with sufficient instructions to faithfully reproduce the main experimental results, as described in supplemental material?
    \item[] Answer: \answerYes{} 
    \item[] Justification: Code is available at the repository URL. All experiments use publicly available benchmarks with citations, and implementation details are thoroughly documented in the Appendix.
    \item[] Guidelines:
    \begin{itemize}
        \item The answer NA means that paper does not include experiments requiring code.
        \item Please see the NeurIPS code and data submission guidelines (\url{https://nips.cc/public/guides/CodeSubmissionPolicy}) for more details.
        \item While we encourage the release of code and data, we understand that this might not be possible, so “No” is an acceptable answer. Papers cannot be rejected simply for not including code, unless this is central to the contribution (e.g., for a new open-source benchmark).
        \item The instructions should contain the exact command and environment needed to run to reproduce the results. See the NeurIPS code and data submission guidelines (\url{https://nips.cc/public/guides/CodeSubmissionPolicy}) for more details.
        \item The authors should provide instructions on data access and preparation, including how to access the raw data, preprocessed data, intermediate data, and generated data, etc.
        \item The authors should provide scripts to reproduce all experimental results for the new proposed method and baselines. If only a subset of experiments are reproducible, they should state which ones are omitted from the script and why.
        \item At submission time, to preserve anonymity, the authors should release anonymized versions (if applicable).
        \item Providing as much information as possible in supplemental material (appended to the paper) is recommended, but including URLs to data and code is permitted.
    \end{itemize}

\item {\bf Experimental setting/details}
    \item[] Question: Does the paper specify all the training and test details (e.g., data splits, hyperparameters, how they were chosen, type of optimizer, etc.) necessary to understand the results?
    \item[] Answer: \answerYes{}
    \item[] Justification: Sections \ref{sec4-1-1} and \ref{sec4-1-2} detail the experimental setup, with hyperparameters listed in Table \ref{tab:hyperparameters}. RLHF experiments specify base models, datasets, and reward mechanisms used.
    \item[] Guidelines:
    \begin{itemize}
        \item The answer NA means that the paper does not include experiments.
        \item The experimental setting should be presented in the core of the paper to a level of detail that is necessary to appreciate the results and make sense of them.
        \item The full details can be provided either with the code, in appendix, or as supplemental material.
    \end{itemize}

\item {\bf Experiment statistical significance}
    \item[] Question: Does the paper report error bars suitably and correctly defined or other appropriate information about the statistical significance of the experiments?
    \item[] Answer: \answerYes{}
    \item[] Justification: Results are reported across 9 benchmarks for Best-of-N evaluations, and RLHF experiments show performance across multiple algorithms and tasks, providing robust evidence of consistent improvements. We also provide the reward growth log for RLHF in Figure \ref{fig:rlhf_reward_comparison}.
    \item[] Guidelines:
    \begin{itemize}
        \item The answer NA means that the paper does not include experiments.
        \item The authors should answer "Yes" if the results are accompanied by error bars, confidence intervals, or statistical significance tests, at least for the experiments that support the main claims of the paper.
        \item The factors of variability that the error bars are capturing should be clearly stated (for example, train/test split, initialization, random drawing of some parameter, or overall run with given experimental conditions).
        \item The method for calculating the error bars should be explained (closed form formula, call to a library function, bootstrap, etc.)
        \item The assumptions made should be given (e.g., Normally distributed errors).
        \item It should be clear whether the error bar is the standard deviation or the standard error of the mean.
        \item It is OK to report 1-sigma error bars, but one should state it. The authors should preferably report a 2-sigma error bar than state that they have a 96\% CI, if the hypothesis of Normality of errors is not verified.
        \item For asymmetric distributions, the authors should be careful not to show in tables or figures symmetric error bars that would yield results that are out of range (e.g. negative error rates).
        \item If error bars are reported in tables or plots, The authors should explain in the text how they were calculated and reference the corresponding figures or tables in the text.
    \end{itemize}

\item {\bf Experiments compute resources}
    \item[] Question: For each experiment, does the paper provide sufficient information on the computer resources (type of compute workers, memory, time of execution) needed to reproduce the experiments?
    \item[] Answer: \answerYes{}
    \item[] Justification: Computational requirements are detailed in Appendix \ref{app:hyperparameter}.
    \item[] Guidelines:
    \begin{itemize}
        \item The answer NA means that the paper does not include experiments.
        \item The paper should indicate the type of compute workers CPU or GPU, internal cluster, or cloud provider, including relevant memory and storage.
        \item The paper should provide the amount of compute required for each of the individual experimental runs as well as estimate the total compute. 
        \item The paper should disclose whether the full research project required more compute than the experiments reported in the paper (e.g., preliminary or failed experiments that didn't make it into the paper). 
    \end{itemize}
    
\item {\bf Code of ethics}
    \item[] Question: Does the research conducted in the paper conform, in every respect, with the NeurIPS Code of Ethics \url{https://neurips.cc/public/EthicsGuidelines}?
    \item[] Answer: \answerYes{}
    \item[] Justification: The research uses publicly available benchmarks and models, doesn't involve human subjects, doesn't encourage harmful applications, and properly cites all prior work.
    \item[] Guidelines:
    \begin{itemize}
        \item The answer NA means that the authors have not reviewed the NeurIPS Code of Ethics.
        \item If the authors answer No, they should explain the special circumstances that require a deviation from the Code of Ethics.
        \item The authors should make sure to preserve anonymity (e.g., if there is a special consideration due to laws or regulations in their jurisdiction).
    \end{itemize}

\item {\bf Broader impacts}
    \item[] Question: Does the paper discuss both potential positive societal impacts and negative societal impacts of the work performed?
    \item[] Answer: \answerNA{}
    \item[] Justification: The paper doesn't include a dedicated discussion of societal impacts, focusing primarily on technical contributions without addressing potential benefits or risks to society.
    \item[] Guidelines:
    \begin{itemize}
        \item The answer NA means that there is no societal impact of the work performed.
        \item If the authors answer NA or No, they should explain why their work has no societal impact or why the paper does not address societal impact.
        \item Examples of negative societal impacts include potential malicious or unintended uses (e.g., disinformation, generating fake profiles, surveillance), fairness considerations (e.g., deployment of technologies that could make decisions that unfairly impact specific groups), privacy considerations, and security considerations.
        \item The conference expects that many papers will be foundational research and not tied to particular applications, let alone deployments. However, if there is a direct path to any negative applications, the authors should point it out. For example, it is legitimate to point out that an improvement in the quality of generative models could be used to generate deepfakes for disinformation. On the other hand, it is not needed to point out that a generic algorithm for optimizing neural networks could enable people to train models that generate Deepfakes faster.
        \item The authors should consider possible harms that could arise when the technology is being used as intended and functioning correctly, harms that could arise when the technology is being used as intended but gives incorrect results, and harms following from (intentional or unintentional) misuse of the technology.
        \item If there are negative societal impacts, the authors could also discuss possible mitigation strategies (e.g., gated release of models, providing defenses in addition to attacks, mechanisms for monitoring misuse, mechanisms to monitor how a system learns from feedback over time, improving the efficiency and accessibility of ML).
    \end{itemize}
    
\item {\bf Safeguards}
    \item[] Question: Does the paper describe safeguards that have been put in place for responsible release of data or models that have a high risk for misuse (e.g., pretrained language models, image generators, or scraped datasets)?
    \item[] Answer: \answerNA{}
    \item[] Justification: The paper introduces a decoding algorithm rather than high-risk models or datasets, so safeguards beyond those in the base models aren't necessary.
    \item[] Guidelines:
    \begin{itemize}
        \item The answer NA means that the paper poses no such risks.
        \item Released models that have a high risk for misuse or dual-use should be released with necessary safeguards to allow for controlled use of the model, for example by requiring that users adhere to usage guidelines or restrictions to access the model or implementing safety filters. 
        \item Datasets that have been scraped from the Internet could pose safety risks. The authors should describe how they avoided releasing unsafe images.
        \item We recognize that providing effective safeguards is challenging, and many papers do not require this, but we encourage authors to take this into account and make a best faith effort.
    \end{itemize}

\item {\bf Licenses for existing assets}
    \item[] Question: Are the creators or original owners of assets (e.g., code, data, models), used in the paper, properly credited and are the license and terms of use explicitly mentioned and properly respected?
    \item[] Answer: \answerYes{}
    \item[] Justification: All datasets, models and embedding models are properly cited with references.
    \item[] Guidelines:
    \begin{itemize}
        \item The answer NA means that the paper does not use existing assets.
        \item The authors should cite the original paper that produced the code package or dataset.
        \item The authors should state which version of the asset is used and, if possible, include a URL.
        \item The name of the license (e.g., CC-BY 4.0) should be included for each asset.
        \item For scraped data from a particular source (e.g., website), the copyright and terms of service of that source should be provided.
        \item If assets are released, the license, copyright information, and terms of use in the package should be provided. For popular datasets, \url{paperswithcode.com/datasets} has curated licenses for some datasets. Their licensing guide can help determine the license of a dataset.
        \item For existing datasets that are re-packaged, both the original license and the license of the derived asset (if it has changed) should be provided.
        \item If this information is not available online, the authors are encouraged to reach out to the asset's creators.
    \end{itemize}

\item {\bf New assets}
    \item[] Question: Are new assets introduced in the paper well documented and is the documentation provided alongside the assets?
    \item[] Answer: \answerYes{}
    \item[] Justification: We will release code with well-documented instructions.
    \item[] Guidelines:
    \begin{itemize}
        \item The answer NA means that the paper does not release new assets.
        \item Researchers should communicate the details of the dataset/code/model as part of their submissions via structured templates. This includes details about training, license, limitations, etc. 
        \item The paper should discuss whether and how consent was obtained from people whose asset is used.
        \item At submission time, remember to anonymize your assets (if applicable). You can either create an anonymized URL or include an anonymized zip file.
    \end{itemize}

\item {\bf Crowdsourcing and research with human subjects}
    \item[] Question: For crowdsourcing experiments and research with human subjects, does the paper include the full text of instructions given to participants and screenshots, if applicable, as well as details about compensation (if any)? 
    \item[] Answer: \answerNA{}
    \item[] Justification: The research doesn't involve human subjects or crowdsourcing; all evaluations use automated metrics on benchmark datasets.
    \item[] Guidelines:
    \begin{itemize}
        \item The answer NA means that the paper does not involve crowdsourcing nor research with human subjects.
        \item Including this information in the supplemental material is fine, but if the main contribution of the paper involves human subjects, then as much detail as possible should be included in the main paper. 
        \item According to the NeurIPS Code of Ethics, workers involved in data collection, curation, or other labor should be paid at least the minimum wage in the country of the data collector. 
    \end{itemize}

\item {\bf Institutional review board (IRB) approvals or equivalent for research with human subjects}
    \item[] Question: Does the paper describe potential risks incurred by study participants, whether such risks were disclosed to the subjects, and whether Institutional Review Board (IRB) approvals (or an equivalent approval/review based on the requirements of your country or institution) were obtained?
    \item[] Answer: \answerNA{}
    \item[] Justification: No human subjects were involved in this research.
    \item[] Guidelines:
    \begin{itemize}
        \item The answer NA means that the paper does not involve crowdsourcing nor research with human subjects.
        \item Depending on the country in which research is conducted, IRB approval (or equivalent) may be required for any human subjects research. If you obtained IRB approval, you should clearly state this in the paper. 
        \item We recognize that the procedures for this may vary significantly between institutions and locations, and we expect authors to adhere to the NeurIPS Code of Ethics and the guidelines for their institution. 
        \item For initial submissions, do not include any information that would break anonymity (if applicable), such as the institution conducting the review.
    \end{itemize}

\item {\bf Declaration of LLM usage}
    \item[] Question: Does the paper describe the usage of LLMs if it is an important, original, or non-standard component of the core methods in this research? Note that if the LLM is used only for writing, editing, or formatting purposes and does not impact the core methodology, scientific rigorousness, or originality of the research, declaration is not required.
    \item[] Answer:\answerNA{} 
    \item[] Justification: We just use LLMs to refine the grammar.
    \item[] Guidelines:
    \begin{itemize}
        \item The answer NA means that the core method development in this research does not involve LLMs as any important, original, or non-standard components.
        \item Please refer to our LLM policy (\url{https://neurips.cc/Conferences/2025/LLM}) for what should or should not be described.
    \end{itemize}

\end{enumerate}

\newpage
\appendix

\section{Analysis of Probability-Based Quality Assessment}
\label{app:prob-quality}

\subsection{Experimental Setup}
To investigate the relationship between token probabilities and answer quality, we conducted a series of experiments using mathematical reasoning and common-sense question answering tasks. We sampled from three datasets: GSM8K \cite{cobbe2021training}, ARC-Challenge \cite{clark2018think}, and BBH \cite{suzgun2022challenging}, selecting 500 problems from each. For each problem, we generated 100 candidate answers using nucleus sampling with $p=0.95$ and temperature $T=1.0$ by Qwen-2.5-3B. Each generated answer was evaluated using two methods:

\textbf{LLM-as-Judge}: A large language model evaluated each answer for correctness (binary classification).

\textbf{Probability Thresholding}: Answers were classified as correct/incorrect based on various log probability thresholds.

For LLM-as-Judge, we used a stronger model than the one generating the answers to ensure reliable evaluation, including Qwen-2.5-3B, 7B, and 70B.

\subsection{Probability-Quality Relationship}
Our analysis revealed a non-linear relationship between sequence probability and answer quality. As shown in Figure \ref{fig:accuracy-threshold}, there exists a critical threshold region where quality assessment accuracy changes dramatically.

For log probability thresholds below $-2.5$, accuracy of quality assessment increases rapidly with higher threshold values. This suggests that extremely low-probability sequences strongly correlate with incorrect answers. However, accuracy plateaus in the region between $-2.5$ and $-1.0$, indicating diminishing returns for higher probability thresholds. When log probability thresholds exceed $-1.0$, we observed a significant decrease in assessment accuracy. This unexpected decline occurs because many correct answers were misclassified as incorrect at these higher thresholds. The finding suggests that probability is better suited as a lower bound to filter out low-quality outputs rather than as a maximization target.

Additionally, we observed that model size significantly impacts the effectiveness of the LLM-as-judge approach. The 70B model achieved an accuracy of 0.8185. While smaller 7B and 3B models reached only 0.7685 and 0.7417 respectively, comparable to probability thresholding. However, larger models inevitably introduce substantial inference costs, making them impractical for real-time evaluation during the decoding process.

\subsection{Position and Length Bias Analysis}
Further analysis revealed systematic biases in token probabilities that compromise fair quality assessment:

\subsubsection{Position Bias}
We computed average log probabilities for tokens at different positions across all generated sequences. As shown in Figure \ref{fig:position_logprob}, a clear upward trend emerges: tokens appearing later in sequences receive consistently higher probability scores. This effect is particularly pronounced after position 20, where average log probabilities increase by approximately 0.15 per 10 tokens.

This position bias creates an unfair advantage for longer responses in probability-based quality assessment, as their aggregate scores benefit from the artificially inflated probabilities of later tokens.

\textbf{Initial Token Bias.} We also observed significantly higher confidence (i.e., higher probability) for initial tokens across generated responses. This phenomenon occurs because language models are trained on common opening phrases such as "To solve this problem...", "According to the given information...", or "Let's approach this step by step...".

\subsubsection{Sentence Progress Bias}
Similar patterns appear within sentence structures. We analyzed the relationship between a token's distance from the most recent sentence-ending punctuation and its assigned probability. Figure \ref{fig:punct_distance_logprob} illustrates that tokens appearing further into a sentence receive progressively higher probabilities, with an average increase of 0.08 per 5 tokens after punctuation. This sentence-level bias favors certain syntactic structures and further distorts quality assessment when using raw probability scores.

\subsection{Debiasing Methodology Evaluation}
To address these biases, we implemented the position-debiased probability described in Section \ref{Debiased Probability}. We conducted an ablation study to evaluate the effectiveness of our debiasing approach on SemDiD, which used the Qwen-2.5-3B model with 10 samples per question. The combined approach with saturation threshold achieved the highest accuracy, validating our debiasing methodology. 

\begin{table}[h]
\centering
\caption{Performance comparison of different debiasing approaches under 10 samples per question.}
\label{tab:debiasing}
\begin{tabular}{lcc}
\toprule
\textbf{Method} & \textbf{Avg. Coverage} & \textbf{Avg. Accuracy by LLM-as-Judge} \\
\midrule
Raw Probability & 67.85\% & 61.02\% \\
Position Debiasing Only & 68.41\% & 61.90\% \\
Sentence Debiasing Only & 67.95\% & 61.23\% \\
Combined Debiasing & 68.64 \% & 62.36\% \\
+ Saturation Threshold & 68.76 \% & 62.47\% \\
\bottomrule
\end{tabular}
\end{table}

\section{Analysis of Semantic Diversity Assessment} \label{app2}
\subsection{Experimental Setup}
To investigate the value of semantic diversity in response generation, we conducted experiments comparing different selection strategies for Best-of-N evaluation. Using GSM8K and BBH reasoning problems as our testbed, we generated 100 candidate responses for each problem using nucleus sampling with $p=0.95$ and temperature $T=1.0$ with Qwen-2.5-3B. We then applied various selection strategies to identify the most representative diverse subsets.

\subsection{Selection Strategies}
We compared three distinct selection methods:

\textbf{Highest Probability}: The standard approach of selecting answers based solely on their log probability scores, taking the top-k candidates with highest probabilities.

\textbf{N-gram Clustering}: Candidates were clustered based on character-level n-gram similarity. For each cluster, we selected the representative with the highest probability score. This approach ensures lexical diversity but may not capture deeper semantic differences.

\textbf{Semantic Clustering}: We embedded all 100 candidates using 0.5B and 1.5B parameter sentence embedding models, performed K-means clustering in this semantic space, and selected the highest-probability candidate from each cluster.

\subsection{Semantic Space Visualization}

Figure \ref{fig:cluster_visual} provides a t-SNE visualization of the semantic embedding space for a specific GSM8K problem, where correct answers (circles) and incorrect answers (crosses) are scattered across distinct semantic clusters. The problem analyzed is:

\begin{tcolorbox}[colback=gray!10, colframe=black!30!gray, title=GSM8K Problem Example, fonttitle=\bfseries]
A rectangular band formation is a formation with $m$ band members in each of $r$ rows, where $m$ and $r$ are integers. A particular band has less than 100 band members. The director arranges them in a rectangular formation and finds that he has two members left over. If he increases the number of members in each row by 1 and reduces the number of rows by 2, there exactly enough places in the new formation for each band member. What is the largest number of members the band could have?
\end{tcolorbox}

For this problem, 100 responses were generated, of which 22 were correct answers. These responses were clustered into 10 semantic groups. Notably, 9 of the 10 clusters contained at least one correct answer, and in 5 clusters, the highest-probability response was correct. Several key observations emerge:

\textbf{Multiple Valid Solution Paths}: Correct answers appear in multiple distinct clusters rather than concentrated in a single region, confirming that multiple valid reasoning paths exist for solving mathematical problems.

\textbf{Probability-Correctness Correlation}: Circle sizes represent probability magnitudes, showing that while higher probability generally correlates with correctness within clusters, this relationship is not uniform across the entire semantic space.

\textbf{Cluster Purity}: Some semantic clusters contain predominantly correct answers (e.g., purple, blue, and orange clusters), while others show a mix of correct and incorrect solutions (e.g., red and yellow clusters). This suggests that certain reasoning approaches are inherently more reliable than others.

\textbf{Distribution Patterns}: Incorrect answers also form clusters, indicating systematic error patterns that could potentially be useful for model improvement and error analysis.

\begin{figure}[htbp]
\centering
\begin{minipage}{0.32\textwidth}
\centering
\includegraphics[width=\textwidth]{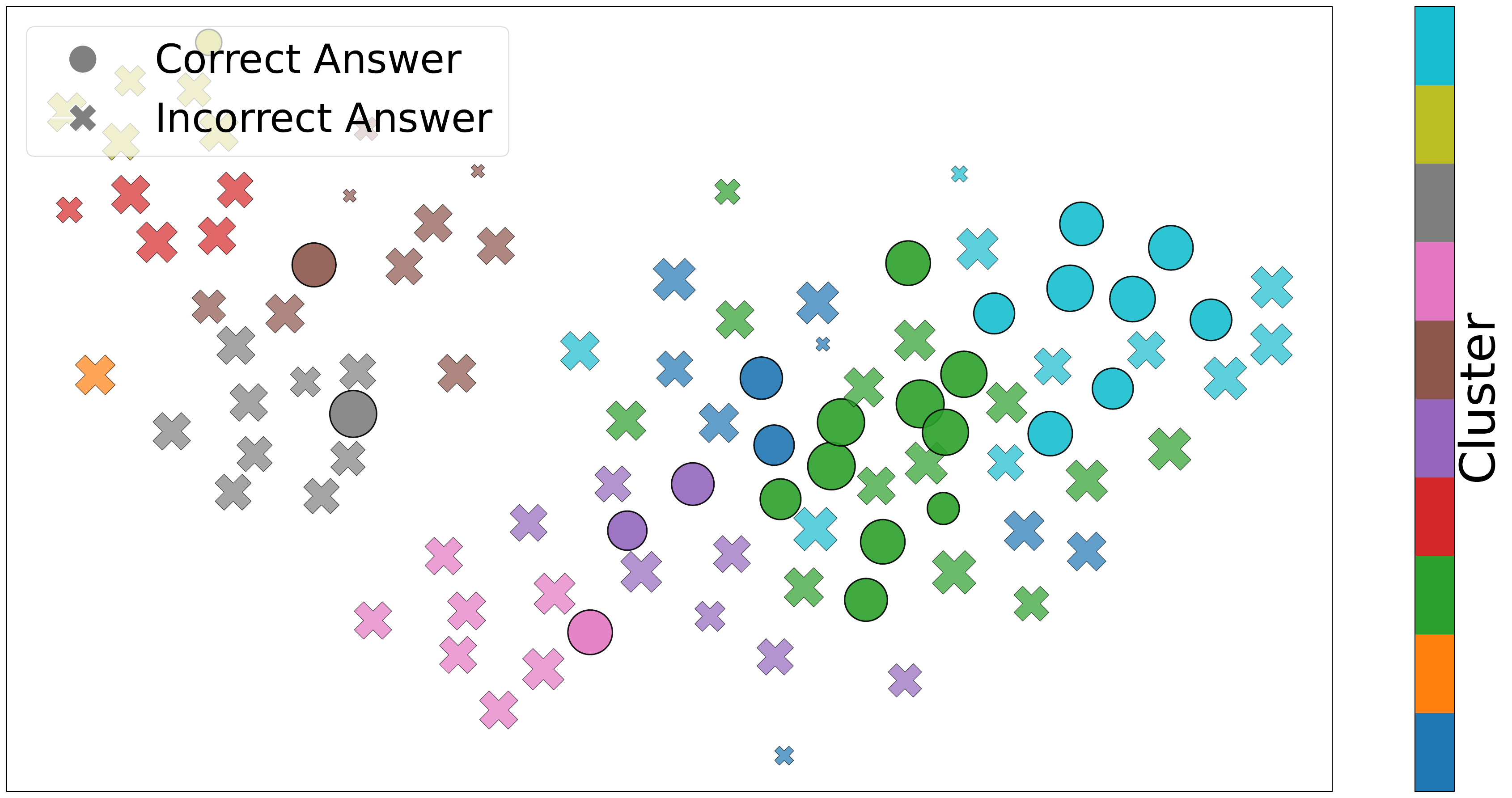}
\end{minipage}
\hfill
\begin{minipage}{0.32\textwidth}
\centering
\includegraphics[width=\textwidth]{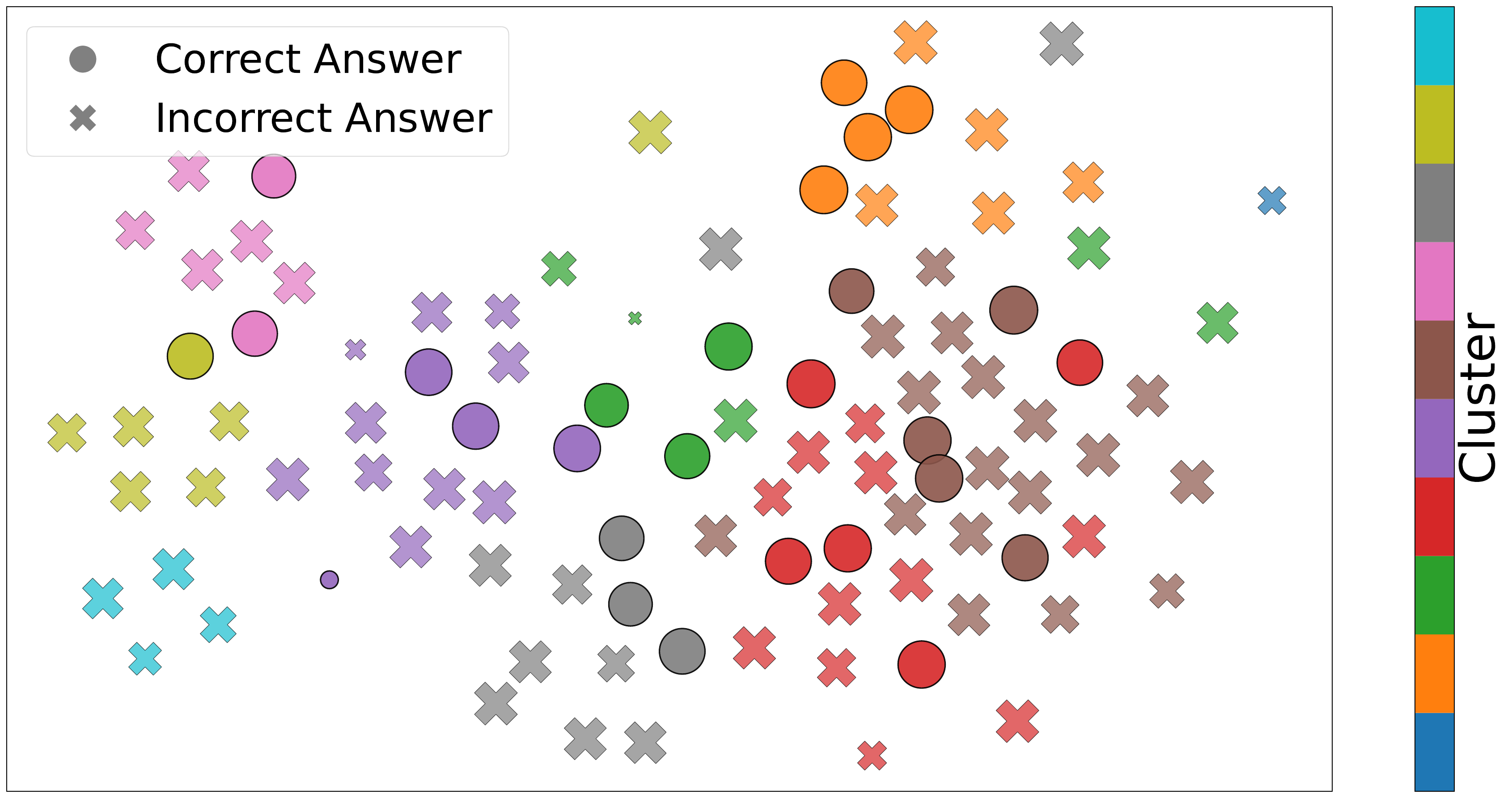}
\end{minipage}
\hfill
\begin{minipage}{0.32\textwidth}
\centering
\includegraphics[width=\textwidth]{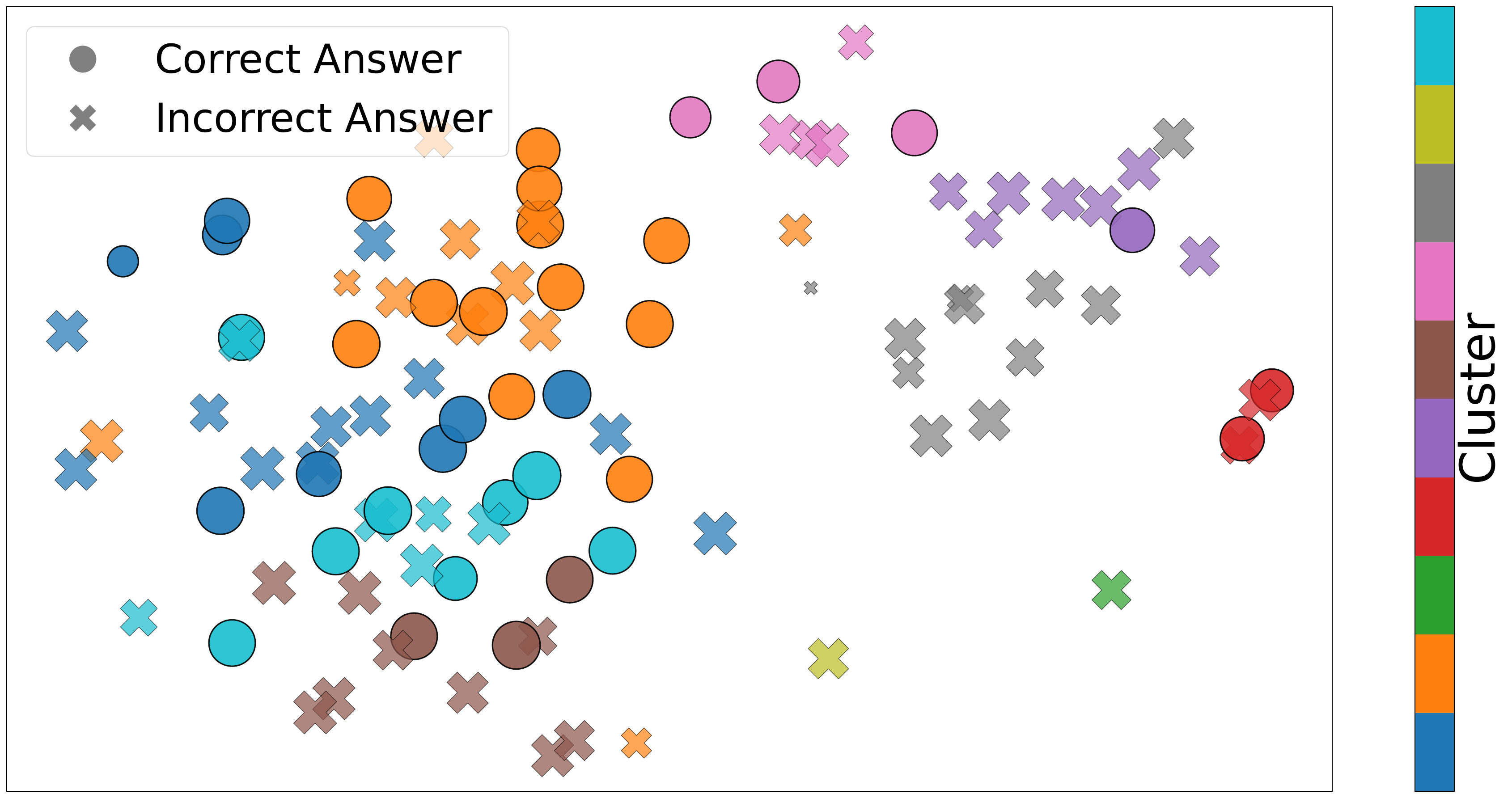}
\end{minipage}
\caption{t-SNE visualization of semantic embedding space for GSM8K problems, where correct answers (circles) and incorrect answers (crosses) are scattered across distinct semantic clusters. Circle sizes represent probability magnitudes.}
\label{fig-cluster-extra}
\end{figure}

More examples are provided in Figure \ref{fig-cluster-extra}. This visualization highlights the importance of exploring different semantic regions during decoding rather than simply maximizing token probabilities, as valuable correct answers may exist across diverse semantic clusters rather than being concentrated in a single high-probability region.

\subsection{Results and Analysis}
As shown in Figure \ref{fig:filtering-comparison}, semantic clustering consistently outperforms both probability-based ranking and character-level approaches across all sample sizes. When selecting top-k diverse candidates from 100 samples per problem, semantic clustering with even the smaller 0.5B model achieves up to 95\% coverage at k=25, compared to 92\% for regular probability-based selection. The performance gap is particularly pronounced at lower sample counts (k=3,5,7), where resource efficiency is most critical. At k=3, semantic clustering provides a 3-4\% absolute improvement in coverage compared to highest probability selection.

\textbf{Embedding Model Size Impact}: Surprisingly, we found minimal difference between the 0.5B and 1.5B embedding models in terms of clustering effectiveness. The 0.5B model achieved 91.7\% coverage at k=10, only 0.1\% below the 1.5B model's 91.8\%. This finding suggests that even lightweight embedding models can effectively capture the semantic distinctions necessary for diversity assessment, allowing for computational efficiency without sacrificing diversity quality.

\textbf{Implications for Diverse Decoding.} Our analysis demonstrates that semantic diversity provides significant advantages over lexical diversity or probability-based selection in Best-of-N scenarios. The findings support the design decisions in SemDiD, particularly the use of directional guidance to explore distinct semantic regions rather than relying solely on token-level diversity, the importance of balancing quality (via probability) with semantic diversity, and the viability of using lightweight embedding models for semantic assessment during the decoding process. Furthermore, these results validate the efficacy of semantic clustering as a post-hoc selection strategy when multiple candidate responses are available. For practical applications where generating a large number of candidates is feasible, semantic clustering provides an effective mechanism for identifying a diverse and representative subset.

\section{Algorithm and Optimization Details} \label{app3}

\subsection{SemDiD Algorithm}
Algorithm \ref{alg:semdid} provides a detailed overview of the Semantic-guided Diverse Decoding (SemDiD) procedure. The algorithm simultaneously manages $k$ groups of beam searches, with the first group following greedy decoding to establish a quality baseline.
\begin{algorithm}
\caption{Semantic-guided Diverse Decoding (SemDiD)}
\label{alg:semdid}
\begin{algorithmic}[1]
\Require Query $q$, language model $M$, embedding model $\mathcal{E}$, number of groups $k$, beam size $b$, quality relaxation $\gamma$, transition step $T_{trans}$, harmonic strength $\lambda$
\Ensure $k$ semantically diverse responses $\{y_1, y_2, \ldots, y_k\}$

\State Initialize $Y^1_0 \gets \{q\}$ \Comment{Greedy group has single beam}
\State Initialize $Y^g_0 \gets \{q\}$ for groups $g \in [2, k]$ \Comment{Diverse groups start with query}
\State Initialize $\vec{r}_g \gets \text{RandomUnitVector}()$ for all groups $g \in [1, k]$ \Comment{Random vectors}
\State Initialize group directions $\vec{d}_g \gets \text{null}$ for all groups $g \in [1, k]$

\For{$t = 1, 2, \ldots, T_{\max}$}
    \State $\alpha_t \gets \min(1, \frac{t}{T_{trans}})$ \Comment{Update transition weight}
    
    \For{$g = 1, 2, \ldots, k$} \Comment{Process each group in parallel}
        \State $C^g_t \gets \emptyset$ \Comment{Candidates for group $g$ at step $t$}
        
        \If{$t \bmod T_{update} = 1$} \Comment{Update directions periodically}
            \If{$g = 1$}
                \State Let $y^1$ be the sequence in group 1
                \State $\vec{d}_1 \gets \frac{\mathcal{E}(y^1) - \mathcal{E}(q)}{||\mathcal{E}(y^1) - \mathcal{E}(q)||}$ \Comment{First group direction}
            \Else
                \State $\vec{v}_g \gets \vec{r}_g - \sum_{i=1}^{g-1} \frac{\vec{r}_g \cdot \vec{d}_i}{||\vec{d}_i||^2} \vec{d}_i$ \Comment{Gram-Schmidt}
                \State $\vec{d}_g \gets \frac{\vec{v}_g}{||\vec{v}_g||}$ \Comment{Normalize direction}
            \EndIf
        \EndIf
        
        \If{$g = 1$} \Comment{Greedy group with single beam}
            \State Let $y$ be the sequence in $Y^1_{t-1}$
            \State $y' \gets \text{GreedyLookahead}(y, L_t)$ \Comment{Greedy lookahead}
            \State $S_{\text{quality}}(y') \gets \text{DebiasedProb}(p_M(y'|y), |y'|-|y|, d_{punct})$
            \State $Y^1_t \gets \{y'\}$ \Comment{Maintain single beam}
            \State Update $\epsilon \gets \min(\epsilon, S_{\text{quality}}(y') \cdot \gamma)$ \Comment{Update threshold}
        \Else \Comment{Diverse groups with multiple beams}
            \For{$y \in Y^g_{t-1}$} \Comment{For each beam in the group}
                \State $E_t \gets \text{DetermineExplorationWidth}(y)$ \Comment{Dynamic exploration width}
                
                \For{$e = 1, 2, \ldots, E_t$} \Comment{Generate $E_t$ lookahead sequences}
                    \State $y' \gets \text{GenerateLookahead}(y, L_t)$ \Comment{Generate one path}
                    
                    \State $S_{\text{quality}}(y') \gets \text{DebiasedProb}(p_M(y'|y), |y'|-|y|, d_{punct})$
                    \State $S_{\text{dir}}(y') \gets \cos(\mathcal{E}(y') - \mathcal{E}(q), \vec{d}_g)$ \Comment{Directional score}
                    \State $S_{\text{rep}}(y') \gets -\max_{g' \neq g} \langle\mathcal{E}(y'), \mathcal{E}(\text{Best}(Y^{g'}_{t-1}))\rangle$ \Comment{Repulsion}
                    
                    \State $C^g_t \gets C^g_t \cup \{(y', S_{\text{quality}}(y'), S_{\text{dir}}(y'), S_{\text{rep}}(y'))\}$
                \EndFor
            \EndFor
            
            \State Use percentile-based normalization for $S_{\text{quality}}\rightarrow \tilde{S}_{\text{quality}}, S_{\text{dir}}\rightarrow \tilde{S}_{\text{dir}},S_{\text{rep}}\rightarrow \tilde{S}_{\text{rep}}$
            
            \For{each candidate $(y', *, *, *)$ in $C^g_t$}
                \State $\tilde{S}_{\text{div}}(y') \gets (1 - \alpha_t) \cdot \tilde{S}_{\text{dir}}(y') + \alpha_t \cdot \tilde{S}_{\text{rep}}(y')$
                
                \If{$S_{\text{quality}}(y') < \epsilon$}
                    \State $S_{\text{combined}}(y') \gets -\infty$ \Comment{Below quality threshold}
                \Else
                    \State $S'_{\text{quality}}(y') \gets \max(0, \tilde{S}_{\text{quality}}(y') - \epsilon)$ \Comment{Quality surplus}
                    \State $S_{\text{combined}}(y') \gets \frac{\lambda \cdot S'_{\text{quality}}(y') \cdot \tilde{S}_{\text{div}}(y')}{S'_{\text{quality}}(y') + \tilde{S}_{\text{div}}(y')}$ \Comment{Harmonic}
                \EndIf
            \EndFor
            
            \State $Y^g_t \gets \text{SelectTopK}(C^g_t, b)$ \Comment{Keep top-b candidates per group}
        \EndIf
    \EndFor
    
    \If{all groups have complete responses or reached max length}
        \State \textbf{break}
    \EndIf
\EndFor

\State \Return $\{Y^1_T, \text{Best}(Y^2_T), \ldots, \text{Best}(Y^k_T)\}$ \Comment{Return best from each diverse group}

\end{algorithmic}
\end{algorithm}

\subsection{Efficient Implementation}
A naive implementation of the SemDiD algorithm would incur substantial computational overhead due to duplicated embedding calculations. To address this, we implemented several optimization techniques:

\subsubsection{Dynamic Multi-step Lookahead}
Unlike traditional Diverse Beam Search methods that rely on lightweight heuristic rules for token selection, operating in the semantic space requires embedding computations that could become prohibitively expensive. Traditional approaches evaluate all tokens in the vocabulary (typically around 100K tokens) against beam scoring functions, which is computationally feasible for simple n-gram penalties.

With SemDiD, evaluating the semantic impact of each vocabulary token through embedding calculations is impractical. Moreover, the semantic change from a single token is often too subtle for reliable diversity assessment. To address this, we implement dynamic multi-step lookahead that explores $E_t$ sequences per beam, each extending forward by $L_t$ steps. It allows us to evaluate more substantial semantic deviations rather than token-level variations.

The exploration width $E_t$ is dynamically adjusted based on sentence position \cite{heo2024llms,kuhn2023semantic}:

\begin{equation}
E_t = 
\begin{cases}
E_{base} \cdot \mu_{p}, & \text{if position $t$ follows punctuation} \\
E_{base} \cdot \mu_{n}, & \text{otherwise}
\end{cases}
\end{equation}

where $\mu_{p}$ and $\mu_{n}$ are hyperparameters, set to 1.5 and 0.7 respectively in our implementation. Setting smaller values helps reduce the computational cost of SemDiD. Alternatively, a linear function of sentence position can also be used to adjust these parameters dynamically.

We set a maximum lookahead depth of $L_{max}$, but terminate the exploration early if a punctuation mark is encountered:

\begin{equation}
\text{ExploreDepth}(y) = \min(L_{max}, \text{StepsUntilNextPunctuation}(y))
\end{equation}

This adaptive approach ensures that we explore complete semantic units (sentences or clauses) without wasting computational resources on unnecessarily deep lookahead. Our analysis shows that approximately 26\% of decoding steps qualify for increased exploration width. While we considered using entropy-based dynamic adjustments, token-level entropy exhibits high variance and can introduce noise, so we opted for the simpler punctuation-based approach.

Each group functions as an independent beam search representing a distinct search direction. The groups explore asynchronously to reduce computational bottlenecks. After every $T_{update}$ steps of greedy decoding, we update the reference direction and guidance vectors for each group.

\subsubsection{KV-Cache Utilization for Embedding Efficiency}
To further reduce computational overhead, we leverage the key-value (KV) cache mechanism when computing embeddings. We recommend using autoregressive embedding models such as Salesforce/SFR-Embedding-2\_R \cite{SFR-embedding-2} or NovaSearch/stella\_en\_1.5B\_v5 \cite{zhang2025jasperstelladistillationsota} (we chose in this paper), as they allow for efficient reuse of computation in sequential token processing. These models can share the same KV cache architecture used in language model decoding.

Alternatively, small BERT-style models like the 30M all-MiniLM-L6-v2 \footnote{https://huggingface.co/sentence-transformers/all-MiniLM-L6-v2} can be used without introducing excessive computational costs. Our experiments indicate that embedding model size has minimal impact on diversity quality, making lightweight models a practical choice for production deployments.

\subsection{Computational Complexity Analysis} \label{time-complexity}
\begin{theorem}[Computational Complexity]
The time complexity of SemDiD is:
\begin{equation}
O(k \cdot b \cdot T \cdot E_t \cdot (L_t \cdot C_{LM} + C_{E}))
\end{equation}
where $k$ is the number of groups, $b$ is the beam size, $T$ is the total decoding steps, $E_t$ is the exploration width per beam, $L_t$ is the lookahead depth, $C_{LM}$ is the cost of a language model forward pass, and $C_{E}$ is the cost of an embedding model forward pass.
\end{theorem}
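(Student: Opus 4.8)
The plan is to prove the bound by a direct line-by-line accounting of Algorithm~\ref{alg:semdid}, isolating the dominant cost and checking that every auxiliary operation is subdominant. First I would fix the nesting: the main loop runs at most $T=T_{\max}$ iterations, and each iteration processes $k$ groups. For a diverse group $g\in[2,k]$ the algorithm iterates over the $b$ beams in $Y^g_{t-1}$, and for each beam it performs $E_t$ independent lookahead rollouts. Each call to \texttt{GenerateLookahead} extends the current sequence by up to $L_t$ tokens, costing $L_t$ language-model forward passes, i.e. $O(L_t\cdot C_{LM})$; scoring that rollout then requires one embedding forward pass $O(C_E)$ to obtain $\mathcal{E}(y')$ for $S_{\mathrm{dir}}$ and $S_{\mathrm{rep}}$. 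The embeddings $\mathcal{E}(\mathrm{Best}(Y^{g'}_{t-1}))$ needed for repulsion are the same $k$ vectors for every candidate in a step, so they are computed once per step ($O(k\,C_E)$, absorbed) and the per-candidate repulsion reduces to $O(k)$ dot products in $\mathrm{dim}$ coordinates, which I fold into the embedding term. Multiplying the nested counts $T\cdot k\cdot b\cdot E_t$ by the per-rollout cost $O(L_t C_{LM}+C_E)$ yields the claimed $O(k\cdot b\cdot T\cdot E_t\cdot(L_t\cdot C_{LM}+C_E))$.

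Next I would argue the remaining bookkeeping does not change the order. Each diverse group produces $O(b\,E_t)$ candidates in $C^g_t$; percentile-based normalization, the per-candidate harmonic-gain evaluation, and \texttt{SelectTopK} together cost $O(b\,E_t\log(b\,E_t))$ scalar operations, which is dominated by the $O(b\,E_t\,(L_t C_{LM}+C_E))$ already spent generating and embedding those candidates, since a single model forward pass dwarfs a constant number of arithmetic operations. The direction updates occur only every $T_{update}$ steps and cost $O(k^2\,\mathrm{dim})$ each via Gram--Schmidt, contributing $O((T/T_{update})\,k^2\,\mathrm{dim})$ overall, which is subdominant under the mild assumption that $\mathrm{dim}$ and the embedding/LM forward passes are the expensive primitives. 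The greedy group $g=1$ keeps a single beam with one $L_t$-step lookahead and performs no embedding-based diversity scoring, so its per-step cost $O(L_t C_{LM})$ is subsumed by the $k\cdot b$ factor. Summing over all $k$ groups and all $T$ steps then gives the stated expression.

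The main obstacle I anticipate is purely a matter of making the statement precise, since $E_t$ and $L_t$ are \emph{dynamic}: $E_t$ alternates between $E_{base}\mu_p$ and $E_{base}\mu_n$ depending on sentence position, and the lookahead depth is truncated at the next punctuation via $\min(L_{max},\texttt{StepsUntilNextPunctuation}(y))$. A fully rigorous version would either (i) replace $E_t,L_t$ by their worst-case values $\max_t E_t\le E_{base}\mu_p$ and $\max_t L_t\le L_{max}$, giving a clean upper bound, or (ii) keep the tighter sum $\sum_{t=1}^{T}E_t\,(L_t C_{LM}+C_E)$ and note that the displayed product is the natural shorthand when $E_t,L_t$ are read as representative values. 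I would state this convention at the outset so the one-line product is unambiguous; with that convention fixed, the accounting above goes through without further subtlety, and the KV-cache reuse described in the implementation section only improves the hidden constants in $C_E$ without affecting the asymptotic form.
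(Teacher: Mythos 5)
Your proposal is correct and follows essentially the same argument as the paper: a direct nested-loop accounting of $T$ steps $\times$ $k$ groups $\times$ $b$ beams $\times$ $E_t$ rollouts, each costing $L_t\cdot C_{LM}$ for generation plus $C_E$ for embedding, with the greedy group's cheaper contribution absorbed into the bound. Your additional care in bounding the subdominant bookkeeping (normalization, top-$k$ selection, periodic Gram--Schmidt updates) and in pinning down the convention for the dynamic quantities $E_t$ and $L_t$ goes beyond what the paper makes explicit, but does not change the route.
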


\begin{proof}
SemDiD maintains $k$ parallel groups executing beam search until sequence completion, requiring $T$ total decoding steps. The first group performs greedy decoding with a single beam ($b=1$), while groups 2 through $k$ maintain $b$ beams each.

At each decoding step $t$, the greedy group generates one lookahead sequence extending $L_t$ tokens from its current state. This requires $L_t$ sequential forward passes through the language model, resulting in $L_t \cdot C_{LM}$ computational cost per step.

For diverse groups (2 through $k$), each beam explores $E_t$ different continuation paths at every step. Each continuation generates a lookahead sequence of length $L_t$, requiring $L_t$ language model forward passes. After generating each complete lookahead sequence, the algorithm computes its embedding for semantic diversity assessment. Therefore, each beam in diverse groups incurs $E_t \cdot (L_t \cdot C_{LM} + C_{E})$ cost per step.

The total computational cost across all $T$ steps is:
\begin{align}
\text{Cost} &= T \cdot \left[\text{Greedy Group} + \text{Diverse Groups}\right] \\
&= T \cdot \left[L_t \cdot C_{LM} + (k-1) \cdot b \cdot E_t \cdot (L_t \cdot C_{LM} + C_{E})\right]
\end{align}

Since the greedy group can be viewed as having $E_t = 1$ and $b = 1$, and noting that it doesn't require embedding computation, this simplifies to the stated complexity $O(k \cdot b \cdot T \cdot E_t \cdot (L_t \cdot C_{LM} + C_{E}))$.

KV-cache optimization significantly reduces the effective cost of forward passes by reusing previously computed key-value states. Early termination at punctuation marks bounds the actual lookahead depth below $L_{max}$, while dynamic adjustment of $E_t$ based on sentence position further optimizes resource utilization.
\end{proof}

Traditional Diverse Beam Search operates with complexity $O(k \cdot b \cdot T \cdot (L_t\cdot C_{LM} + V \cdot C_{score}))$, where $V$ is the vocabulary size and $C_{score}$ is the cost of evaluating diversity penalties. At each step, DBS performs a forward pass through the language model to obtain token probabilities, then evaluates diversity penalties across all $V$ vocabulary tokens. While $C_{score} \ll C_{LM}$, scoring 100K tokens creates small overhead. The key distinction is that DBS applies shallow, token-level diversity penalties, whereas SemDiD invests computation in deeper semantic exploration through multi-token lookahead.

Temperature sampling achieves $O(k\cdot T \cdot L_t \cdot  C_{LM})$ complexity per sequence, appearing more efficient. However, achieving comparable semantic diversity requires generating more independent samples where $k_{temp} \gg k_{semDiD} \cdot b$, resulting in total complexity that may exceed SemDiD's cost.

In practice, despite introducing threefold computational requirements, the actual end-to-end latency of SemDiD increases by only 25-35\% compared to group beam search (huggingface official implementation) with typical parameters ($k=25$, $b=3$, $E_t \approx 3$, $L_t \approx 10$), making it a practical solution for real-world applications. This modest increase in time cost delivers substantial improvements in semantic diversity and downstream performance, creating a favorable trade-off between resource consumption and quality gains for applications where diverse, high-quality outputs are critical. Under conditions of unlimited computational resources or for high-value applications such as RLHF training, the significant performance improvements that SemDiD delivers greatly outweigh its marginal computational costs.

\section{Hyperparameter Settings and Analysis} \label{app:hyperparameter}
\subsection{Default Hyperparameter Settings}
Table \ref{tab:hyperparameters} presents the default hyperparameters used in our experiments. These values were determined through extensive grid search optimization on a held-out validation set. All experiments were conducted on a cluster of 8 NVIDIA H800 GPUs. The Best-of-N evaluations across all benchmarks required approximately 70 hours, while each RLHF training took approximately 40 hours for Qwen-2.5-3B.

\begin{table}[h]
\centering
\caption{Default hyperparameter settings for SemDiD}
\label{tab:hyperparameters}
\begin{tabular}{llc}
\toprule
\textbf{Parameter} & \textbf{Description} & \textbf{Default Value} \\
\midrule
$N$ & Number of groups & \textbf{k} or k/2 \\
$b$ & Beam size per group & \textbf{5} / 3 \\
$E_{base}$ & Base exploration width & \textbf{4} / 3 \\
$L_{max}$ & Maximum lookahead depth & 20 / \textbf{10} / 5 \\
$T_{trans}$ & Transition step for $\alpha_t$ & 10 \\
$\gamma$ & Quality relaxation parameter & 0.25 \\
$\lambda$ & Harmonic gain strength & 2.0 \\
$\beta_{seq}$ & Sequence position decay rate & 0.001 \\
$\beta_{sent}$ & Sentence position decay rate & 0.005 \\
$\tau$ & Probability saturation threshold & -0.8 \\
Temperature & Sampling temperature & $1.0$ \\
Top-p & Top-p (nucleus sampling) threshold & $0.95$ \\
\bottomrule
\end{tabular}
\end{table}

The number of groups $N$ is adaptively set to either the target number of samples $k$ or $k/2$, depending on the desired trade-off between computational cost and diversity quality. When $N = k$, each group returns one candidate, maximizing inter-group diversity. When $N = k/2$, each group returns two candidates, reducing computational overhead while maintaining reasonable diversity.

\subsection{Exploration Width and Beam Size Analysis} 
\label{app-d-2}
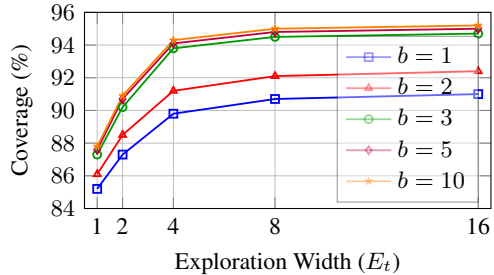
\begin{figure}[h]
\centering
\begin{tikzpicture}
\begin{axis}[
    width=0.5\textwidth,
    height=0.3\textwidth,
    xlabel={Exploration Width ($E_t$)},
    ylabel={Coverage (\%)},
    xmin=0.5, xmax=16.5,
    ymin=84, ymax=96,
    grid=both,
    grid style={line width=.1pt, draw=gray!10},
    major grid style={line width=.1pt,draw=gray!50},
    xtick={1,2,4,8,16},
    xticklabels={1,2,4,8,16},
    ytick={84,86,88,90,92,94,96},
    legend pos=south east,
    legend style={
        font=\small, 
        cells={anchor=west},
        fill opacity=0.4,
        draw opacity=0.4,
        text opacity=1
    },
    tick label style={font=\small},
    xlabel style={font=\small},
    ylabel style={font=\small},
]
\addplot[
    color=blue,
    mark=square,
    line width=0.7pt,
    mark size=1.5pt,
] coordinates {
    (1, 85.2)
    (2, 87.3)
    (4, 89.8)
    (8, 90.7)
    (16, 91.0)
};
\addlegendentry{$b=1$}
\addplot[
    color=red,
    mark=triangle,
    line width=0.7pt,
    mark size=1.5pt,
] coordinates {
    (1, 86.1)
    (2, 88.5)
    (4, 91.2)
    (8, 92.1)
    (16, 92.4)
};
\addlegendentry{$b=2$}
\addplot[
    color=green!60!black,
    mark=o,
    line width=0.7pt,
    mark size=1.5pt,
] coordinates {
    (1, 87.3)
    (2, 90.2)
    (4, 93.8)
    (8, 94.5)
    (16, 94.7)
};
\addlegendentry{$b=3$}
\addplot[
    color=purple,
    mark=diamond,
    line width=0.7pt,
    mark size=1.5pt,
] coordinates {
    (1, 87.6)
    (2, 90.7)
    (4, 94.1)
    (8, 94.8)
    (16, 95.0)
};
\addlegendentry{$b=5$}
\addplot[
    color=orange,
    mark=star,
    line width=0.7pt,
    mark size=1.5pt,
] coordinates {
    (1, 87.8)
    (2, 90.9)
    (4, 94.3)
    (8, 95.0)
    (16, 95.2)
};
\addlegendentry{$b=10$}
\end{axis}
\end{tikzpicture}
\caption{Impact of exploration width and beam size on coverage. Results are averaged over 500 GSM8K problems with 10 groups.}
\label{fig:exploration_beam}
\end{figure}
We conducted experiments to analyze the relationship between exploration width ($E_t$) and beam size ($b$) on the GSM8K dataset. Figure \ref{fig:exploration_beam} shows the coverage achieved with different parameter combinations.

Results show that increasing the exploration width from 1 to 4 yields significant improvements in coverage. For instance, with $b=3$, coverage improves from 87.3\% to 93.8\%, representing a 6.5 percentage point gain. However, further increases to $E_t = 8$ and $E_t = 16$ provide diminishing returns, with coverage plateauing at around 94.7\%. This saturation effect can be attributed to the limited semantic variance available in most problems, where exploring beyond a certain width captures only redundant variations.

Similarly, beam size shows a positive correlation with coverage up to $b = 3$, after which the gains become marginal. For example, at $E_t = 4$, increasing beam size from $b=1$ to $b=3$ improves coverage from 89.8\% to 93.8\%, but further increasing to $b=10$ only yields an additional 0.5 percentage points. The computational cost, however, increases linearly with both $E_t$ and $b$, suggesting that $E_t = 4$ and $b = 3$ offer the optimal trade-off between performance and efficiency.

\subsection{Group Structure Analysis}
\label{app-d-3}
We investigated an alternative group configuration where each group returns multiple candidates rather than a single representative. Table \ref{tab:group_structure} presents the coverage results when varying the number of outputs per group while maintaining a constant total of 10 outputs.

\begin{table}[h]
\centering
\caption{Coverage comparison with different group structures on GSM8K. Total outputs fixed at 10.}
\label{tab:group_structure}
\begin{tabular}{ccc}
\toprule
\textbf{Outputs per Group} & \textbf{Number of Groups} & \textbf{Coverage (\%)} \\
\midrule
1 & 10 & 94.8 \\
2 & 5 & 94.5 \\
3 & 3 (with 1 group returning 4) & 92.2 \\
5 & 2 & 87.9 \\
10 & 1 & 84.3 \\
\bottomrule
\end{tabular}
\end{table}

The results demonstrate that the single-output-per-group configuration achieves optimal coverage. When groups return multiple candidates, coverage remains relatively stable for 2-3 outputs per group but degrades significantly beyond that. This pattern suggests that inter-group diversity mechanisms are more effective than intra-group selection for maintaining semantic separation.

The decline in coverage with fewer groups can be explained by the reduced effectiveness of orthogonal direction guidance. With fewer groups, the semantic space cannot be partitioned as efficiently, leading to increased overlap between exploration regions. Additionally, the inter-group repulsion mechanism becomes less effective with fewer groups, as there are fewer distinct trajectories to maintain separation between.

The choice of using multiple groups with single outputs as the default configuration, while allowing flexibility to use $N = k/2$ when computational resources are constrained. The modest decrease in coverage (0.3\%) when using two outputs per group makes this an attractive option for resource-limited settings.

\subsection{Lookahead Depth Analysis}
\label{app-d-4}
Unlike character-level diversity methods that can evaluate each vocabulary token individually with minimal computational cost, semantic diversity assessment requires embedding model forward passes, making token-by-token evaluation prohibitively expensive. Additionally, single-token semantic changes are often too subtle for reliable diversity measurement. Therefore, SemDID introduces the lookahead depth parameter $L_{max}$ to control how many tokens ahead we explore when evaluating semantic diversity, allowing assessment of more substantial semantic deviations while managing computational overhead. 

We conducted experiments using Qwen-2.5-3B across different $L_{max}$ values to demonstrate its sensitivity and identify optimal settings. Table \ref{tab:lookahead_analysis} shows the performance and computational overhead analysis across different lookahead depths.

\begin{table}[h]
\centering
\caption{Performance and computational overhead analysis across $L_{max}$ values using Qwen-2.5-3B.}
\label{tab:lookahead_analysis}
\begin{tabular}{c|c|c|c}
\hline
$L_{max}$ & GSM8K Coverage (N=25) & ARC Coverage (N=25) & Computational Overhead \\
\hline
5 & 95.3\% & 80.1\% & +15\% \\
10 & 98.1\% & 82.4\% & +25\% \\
15 & 98.2\% & 82.6\% & +35\% \\
20 & 98.6\% & 82.7\% & +45\% \\
\hline
\end{tabular}
\end{table}

Performance saturates around $L_{max} = 10$, with diminishing returns for deeper lookahead while computational overhead continues to increase substantially. This suggests that 10 tokens provide sufficient context for reliable semantic diversity assessment without excessive computational cost.

\subsection{Quality-Diversity Balancing Analysis}
\label{app-d-5}
The quality relaxation parameter $\gamma$ and harmonic strength $\lambda$ control the trade-off between maintaining quality thresholds and pursuing semantic diversity. We conducted systematic sensitivity analysis by varying each parameter independently to understand their effects on both coverage and accuracy.

\textbf{Effect of $\gamma$ (with $\lambda=2.0$ fixed):} The quality relaxation parameter $\gamma$ determines how much the quality threshold can be relaxed to promote diversity. Lower values maintain stricter quality requirements, while higher values allow more quality sacrifice for diversity gains.

\begin{table}[h]
\centering
\caption{Sensitivity analysis of quality relaxation parameter $\gamma$ with $\lambda=2.0$ fixed.}
\label{tab:gamma_analysis}
\begin{tabular}{l|c|c|c|c|c}
\hline
Task & $\gamma=0.15$ & $\gamma=0.20$ & $\gamma=0.25$ & $\gamma=0.30$ & $\gamma=0.35$ \\
\hline
GSM8K Coverage (N=25) & 96.6\% & 97.1\% & 98.1\% & 97.8\% & 97.4\% \\
GSM8K Accuracy (N=25) & 75.9\% & 76.6\% & 77.5\% & 77.2\% & 77.2\% \\
WMT16 Coverage (N=25) & 36.7\% & 36.8\% & 37.2\% & 36.9\% & 36.7\% \\
WMT16 Accuracy (N=25) & 20.2\% & 20.4\% & 20.7\% & 20.5\% & 20.3\% \\
\hline
\end{tabular}
\end{table}

\textbf{Effect of $\lambda$ (with $\gamma=0.25$ fixed):} The harmonic strength parameter $\lambda$ controls the intensity of the harmonic gain mechanism that balances quality and diversity objectives. Higher values strengthen the diversity promotion effect, while lower values prioritize quality preservation.

\begin{table}[h]
\centering
\caption{Sensitivity analysis of harmonic strength parameter $\lambda$ with $\gamma=0.25$ fixed.}
\label{tab:lambda_analysis}
\begin{tabular}{l|c|c|c|c|c}
\hline
Task & $\lambda=1.0$ & $\lambda=1.5$ & $\lambda=2.0$ & $\lambda=2.5$ & $\lambda=3.0$ \\
\hline
GSM8K Coverage (N=25) & 96.8\% & 97.4\% & 98.1\% & 97.9\% & 97.6\% \\
GSM8K Accuracy (N=25) & 77.3\% & 77.1\% & 77.5\% & 77.3\% & 76.9\% \\
WMT16 Coverage (N=25) & 36.8\% & 36.5\% & 37.2\% & 36.9\% & 36.7\% \\
WMT16 Accuracy (N=25) & 20.3\% & 20.6\% & 20.7\% & 20.7\% & 20.6\% \\
\hline
\end{tabular}
\end{table}

The results demonstrate that our default settings ($\gamma=0.25$, $\lambda=2.0$) consistently achieve near-optimal performance across different tasks, with performance remaining stable within reasonable parameter ranges. The sensitivity analysis shows that SemDiD is robust to parameter variations, with performance degrading gracefully rather than sharply when moving away from optimal values.

\subsection{Transition Weight Analysis}
\label{app-d-6}
The parameter $T_{trans}$ controls the transition point from directional guidance to inter-group repulsion. We conducted experiments varying $T_{trans}$ across multiple datasets:

\setlength{\tabcolsep}{3pt}
\begin{table}[h]
\centering
\caption{Transition weight parameter $T_{trans}$ analysis across datasets, where Cov. denotes coverage and Acc. denotes accuracy.}
\label{tab:transition_analysis}
\begin{tabular}{l|c|c|c|c|c}
\hline
$T_{trans}$ & GSM8K Cov. & GSM8K Acc. & ARC Cov. & ARC Acc. & MMLU-Pro+ Cov. \\
\hline
5 & 97.2\% & 76.8\% & 81.2\% & 79.1\% & 81.4\% \\
10 (default) & 98.1\% & 77.5\% & 82.4\% & 82.0\% & 82.6\% \\
15 & 97.9\% & 77.2\% & 82.1\% & 81.7\% & 82.3\% \\
20 & 97.5\% & 76.9\% & 81.7\% & 81.2\% & 81.8\% \\
25 & 97.2\% & 76.8\% & 81.8\% & 80.9\% & 81.6\% \\
\hline
\end{tabular}
\end{table}

The results show clear optimal performance at $T_{trans} = 10$. When transition occurs too early ($T_{trans} = 5$), groups haven't established sufficient semantic differentiation before repulsion dominates, leading to suboptimal exploration. When transition is delayed ($T_{trans} \geq 20$), groups may converge to similar semantic regions before inter-group repulsion becomes effective.

\textbf{Intuitive Explanation:} $T_{trans} = 10$ corresponds to the typical number of tokens needed to establish meaningful semantic context. Most mathematical reasoning problems require 8-12 tokens to establish the early thought process for problem-solving before semantic trajectories become distinguishable.

\subsection{Guidance for Hyperparameter Settings}
\label{app-d-7}
Due to SemDiD's involvement of numerous hyperparameters, we provide the following guidelines for setting them based on the comprehensive analysis presented above.

Firstly, temperature, Top-p, $N$ (number of groups), and $b$ (beam size) are inherited from standard Group Beam Search parameters, not SemDID additions. The SemDID-specific parameters serve distinct purposes across three categories: semantic diversity assessment ($E_t$, $L_{max}$), quality assessment ($\beta_{seq}$, $\beta_{sent}$, $\tau$), and quality-diversity balancing ($T_{trans}$, $\gamma$, $\lambda$).

\textbf{Automatically Derivable Parameters:} Several key parameters can be systematically determined rather than manually tuned. The position bias parameters $\beta_{seq}$ and $\beta_{sent}$ can be automatically fitted using \texttt{scipy.curve\_fit} from probability-position curves shown in Figures \ref{fig:position_logprob} and \ref{fig:punct_distance_logprob}, as these patterns remain consistent across tasks. The saturation threshold $\tau$ is derived from probability-quality analysis as Figure \ref{fig:accuracy-threshold} and set to -0.8 for most tasks based on the empirical study. As demonstrated in Section \ref{app-d-6}, the transition point $T_{trans} = 10$ corresponds to the typical number of tokens needed to establish meaningful semantic context before inter-group repulsion becomes effective.

\textbf{Resource-Dependent Parameters:} The exploration parameters $E_t$, $b$, and $N$ balance exploration breadth versus computational cost. Our analysis in Sections \ref{app-d-2} and \ref{app-d-3} shows diminishing returns beyond $E_t = 4$ and $b = 3$, establishing clear "sweet spots" without extensive tuning requirements. The lookahead depth analysis in Section \ref{app-d-4} demonstrates that $L_{max} = 10$ provides optimal performance-cost balance.

\textbf{Quality-Diversity Balance:} The quality-diversity balancing analysis in Section \ref{app-d-5} reveals that our default settings ($\gamma=0.25$, $\lambda=2.0$) achieve robust performance across tasks, with graceful degradation when parameters deviate from optimal values. The transition weight analysis in Section \ref{app-d-6} confirms that $T_{trans} = 10$ provides optimal timing for the shift from directional guidance to inter-group repulsion.

For practitioners seeking to deploy SemDiD, we recommend starting with our provided default parameters for initial implementation, adjusting $E_t$, $b$, and $N$ based on available computational budget, and fine-tuning $\gamma$, $\lambda$, and $T_{trans}$ only for highly specialized applications where task-specific optimization is critical.

\section{Accuracy Evaluation in Best-of-N}
\label{app:additional_results}
\input{figure/BON-accuracy}
\input{figure/Llama-coverage}
\input{figure/pythia-rlhf}
To provide a comprehensive evaluation of our SemDiD approach, we conducted experiments measuring the accuracy of the selected answers under Best-of-N settings. Unlike coverage, which measures whether at least one correct answer exists among the N candidates, accuracy evaluates whether we can effectively identify the best answer from the generated candidates. For this purpose, we employed LLM-Blender PairRM \footnote{https://huggingface.co/llm-blender/PairRM} as our evaluation model.

For each query, we generated N responses using Qwen-2.5-3B. We then used the LLM-Blender PairRM model to evaluate and select the most likely correct answer from among the candidates. This process simulates a real-world scenario where a verification mechanism must determine the optimal answer from a set of alternatives.

Figure \ref{fig:accuracy_all_tasks} presents accuracy results across all benchmarks. The results demonstrate that SemDiD consistently outperforms baseline methods across all tasks, though by modest margins. This indicates that while various decoding strategies can increase the likelihood of generating at least one correct answer (coverage), the ability to identify the best answer (accuracy) remains challenging. Interestingly, we observe that performance differences between methods become more pronounced as the number of samples increases, suggesting that semantic diversity becomes increasingly valuable with larger candidate pools. Mathematical reasoning tasks (GSM8K and Minerva Math) show the greatest improvements with SemDiD, likely because these tasks benefit most from exploring diverse solution paths. Translation tasks exhibit the smallest performance gaps, potentially due to their more constrained solution spaces compared to open-ended reasoning tasks.

\section{Cross-Model Generalization Analysis}
In addition to our primary experiments with Qwen-2.5-3B for Best-of-N and Qwen-2.5-7B for RLHF, we extended our analysis to different model architectures and sizes to assess how SemDiD's effectiveness generalizes across different language models. Specifically, we evaluated SemDiD using Llama-3.1-8B for Best-of-N tasks and Pythia-1B for RLHF training.

\subsection{Llama-3.1-8B in Best-of-N}
For the Best-of-N experiments with Llama-3.1-8B, we followed the same experimental setup as with Qwen-2.5-3B, evaluating performance across all nine benchmarks. As shown in Figure \ref{fig:coverage_llama_8b}, SemDiD consistently outperforms baseline methods across all tasks with Llama-3.1-8B, demonstrating even more pronounced improvements compared to Qwen-2.5-3B. The performance gains are particularly striking on reasoning tasks, with SemDiD achieving 99.1\% coverage on GSM8K and 97.4\% coverage on BBH at just 25 samples - improvements of 0.5-1.8\% over other methods. For language understanding tasks, SemDiD shows exceptional performance on MMLU-Pro+ with 89.4\% coverage at 25 samples versus 86.9\% for Temperature sampling (T=0.5). On translation tasks, while Diverse Beam Search remains competitive, SemDiD maintains a clear advantage with 43.1\% and 51.3\% coverage on WMT16 English-German and German-English respectively at 25 samples. These results suggest that SemDiD's semantic diversity mechanisms are particularly effective when combined with larger, more capable models, further validating the architecture-agnostic nature of our approach.

\input{figure/fig-rlhf}

\subsection{Pythia-1B in RLHF}
To investigate SemDiD's effectiveness with smaller models in RLHF settings, we conducted experiments using Pythia-1B on the same GSM8K and TLDR datasets. As illustrated in Figure \ref{fig:pythia_rlhf_results}, despite the significant reduction in model size, SemDiD continues to outperform baseline methods across all RLHF algorithms, though with lower absolute performance values compared to Qwen-2.5-7B.

For TLDR summarization with Pythia-1B, SemDiD achieves win rates of 61.5\%, 62.1\%, and 60.0\% with Iterative-RLHF, GRPO, and RLOO respectively at 60 rollouts, which are approximately 10.6\%, 11.3\%, and 11.2\% lower than those achieved with Qwen-2.5-7B. Similarly, on GSM8K, SemDiD with Pythia-1B reaches accuracy scores of 71.7\% with Iterative-RLHF, 72.9\% with GRPO, and 68.6\% with RLOO, representing gaps of 13.8\%, 15.3\%, and 13.8\% compared to Qwen-2.5-7B.

Interestingly, the performance gap between SemDiD and baseline methods is proportionally larger with Pythia-1B than with Qwen-2.5-7B. This suggests that semantic diversity becomes even more critical for smaller models with limited representational capacity, as it enables more efficient exploration of the solution space during RLHF training. The results demonstrate that SemDiD's approach to diverse decoding is particularly valuable in resource-constrained scenarios where maximizing the utility of smaller models is essential.

\section{RLHF Reward Growth Analysis}
Figure \ref{fig:rlhf_reward_comparison} illustrates the reward growth curves during RLHF training across different decoding methods, model sizes, and RLHF algorithms. Our experiments evaluate performance on both the TLDR summarization dataset and the GSM8K mathematical reasoning dataset using Qwen-2.5-7B and the smaller Pythia-1B models. The results consistently demonstrate that SemDiD accelerates reward convergence compared to other decoding strategies. For Qwen-2.5-7B on TLDR, SemDiD (blue line) achieves faster initial reward growth within the first 750 training steps across all RLHF algorithms, reaching plateau performance approximately 15-20\% earlier than Temperature sampling (red line). This acceleration is more pronounced with Pythia-1B, where SemDiD maintains a consistent reward advantage throughout training, particularly evident in the RLOO implementation. On the more complex GSM8K dataset, SemDiD's advantage becomes even more significant, with reward curves showing both steeper initial growth and higher final convergence values. This improved learning efficiency can be attributed to SemDiD's ability to generate semantically diverse response groups that provide more informative training signals, allowing the model to explore a broader solution space and identify optimal policies more efficiently during RLHF training.

\section{Ablation Studies}
To understand the contribution of each component in SemDiD, we conducted systematic ablation experiments across both Best-of-N coverage (averaged across all tasks) and RLHF settings (using GRPO on GSM8K). Table \ref{tab:ablation} presents the results when removing individual components from the full SemDiD framework. For Best-of-N evaluations, we used Qwen-2.5-3B with 25 samples per query, while RLHF experiments employed Qwen-2.5-7B with 25 rollouts during training.

\begin{table}[ht]
\centering
\caption{Ablation study of SemDiD components.}
\label{tab:ablation}
\begin{tabular}{lccc}
\toprule
\textbf{Method} & \textbf{Avg. Coverage with 25 samples (\%)} & \textbf{GRPO-GSM8K Accuracy (\%)} \\
\midrule
Full SemDiD & 74.2 & 81.6 \\
- Directional Guidance & 72.1 (-2.1) & 79.9 (-1.7) \\
- Inter-Group Repulsion & 71.5 (-2.7) & 79.6 (-2.0) \\
- Debiased Probability & 73.3 (-0.9) & 80.5 (-1.1) \\
- Harmonic Gain & 71.2 (-3.0) & 79.3 (-2.3) \\
Only Probability (Greedy) & 57.2 (-17.0) & 71.8 (-9.8) \\
\bottomrule
\end{tabular}
\end{table}

Our ablation results demonstrate the essential contribution of each SemDiD component, with harmonic gain showing the largest impact by reducing coverage 3.0\% when removed, highlighting its effectiveness in balancing quality-diversity trade-offs. Inter-group repulsion exerts significant influence, decreasing coverage 2.7\% and RLHF accuracy 2.0\% when ablated, confirming the importance of maintaining semantic distances between candidates. Directional guidance particularly benefits RLHF scenarios with an accuracy reduction of 1.7\% when removed, demonstrating its value in enabling diverse exploration for policy improvement. Debiased probability, though having the smallest individual impact with coverage decreasing 0.9\% upon removal, ensures fair quality assessment across varied sequence structures. Most notably, greedy decoding alone produces dramatically worse results with coverage dropping 17.0\% and accuracy declining 9.8\%, highlighting probability maximization's inability to explore diverse semantic regions. These findings validate SemDiD's integrated approach to semantic diversity, substantially outperforming simple token-level diversity methods.

\section{Comparison to Contrastive Decoding and Scalability Analysis}
\label{app:baseline-comparison}

We conducted additional experiments comparing SemDiD against contrastive decoding methods and embedding-based post-hoc clustering across multiple model sizes.

\subsection{Contrastive Decoding and Clustering Baselines}

We evaluated SemDiD against DoLa (Decoding by Contrasting Layers) \cite{chuang2023dola} and embedding-based post-hoc clustering using Qwen-2.5-3B across all benchmarks. The clustering approach generates 100 independent samples, clusters them semantically, and selects representatives from different clusters.

\begin{table}[h]
\centering
\caption{Comprehensive baseline comparison with computational overhead analysis using Qwen-2.5-3B}
\label{tab:extended_baseline_comparison}
\resizebox{\textwidth}{!}{
\begin{tabular}{l|cccccccc|cc}
\toprule
\textbf{Method (N=25)} & \textbf{ARC} & \textbf{BBH} & \textbf{GSM8K} & \textbf{Minerva} & \textbf{CoQA} & \textbf{PubMed} & \textbf{MMLU+} & \textbf{WMT16} & \textbf{Compute} & \textbf{Latency} \\
\midrule
SemDiD (Ours) & 82.4\% & 85.6\% & 98.1\% & 86.1\% & 46.7\% & 82.6\% & 82.6\% & 40.95\% & +200\% & +27\% \\
DoLa Contrastive & 80.3\% & 81.7\% & 95.2\% & 82.8\% & 43.2\% & 78.4\% & 77.9\% & 37.85\% & -66\% & -66\% \\
Clustering (100→25) & 80.8\% & 83.9\% & 96.4\% & 84.3\% & 45.7\% & 80.0\% & 80.7\% & 39.35\% & +33\% & +32\% \\
Temperature (T=1.0) & 78.9\% & 82.1\% & 94.8\% & 81.7\% & 44.1\% & 77.8\% & 76.3\% & 36.2\% & -66\% & -66\% \\
Diverse Beam Search & 79.8\% & 83.2\% & 95.5\% & 83.1\% & 45.3\% & 79.2\% & 78.9\% & 38.1\% & +0\% & +0\% \\
\bottomrule
\end{tabular}
}
\end{table}

DoLa consistently underperforms SemDiD because it lacks explicit diversity mechanisms and often converges toward similar high-confidence solutions. This limitation is particularly evident on tasks requiring diverse reasoning strategies, where DoLa's conservative token-by-token contrasting favors safe, conventional solutions rather than exploring semantic diversity.

The clustering approach shows competitive performance but becomes increasingly inefficient as N increases. While it sometimes matches SemDiD's performance at smaller sample sizes, it requires generating 100 independent sequences without coordination, making KV-cache reuse impossible and substantially increasing computational overhead.

\subsection{Large Model Scalability Analysis}

We evaluated SemDiD's effectiveness and computational efficiency on Qwen-2.5-70B to assess scalability to larger models:

\begin{table}[h]
\centering
\caption{Performance and computational analysis on Qwen-2.5-70B}
\label{tab:70b_scalability}
\resizebox{\textwidth}{!}{
\begin{tabular}{l|cccccccc|cc}
\toprule
\textbf{Method (N=25)} & \textbf{ARC} & \textbf{BBH} & \textbf{GSM8K} & \textbf{Minerva} & \textbf{CoQA} & \textbf{PubMed} & \textbf{MMLU+} & \textbf{WMT16} & \textbf{Compute} & \textbf{Latency} \\
\midrule
SemDiD (Ours) & 84.7\% & 89.2\% & 99.8\% & 93.4\% & 52.1\% & 89.3\% & 91.2\% & 47.05\% & +180\% & +18\% \\
DoLa Contrastive & 82.1\% & 86.4\% & 98.9\% & 90.7\% & 48.6\% & 86.2\% & 87.9\% & 43.6\% & -66\% & -66\% \\
Clustering (100→25) & 83.9\% & 88.1\% & 99.0\% & 91.8\% & 51.4\% & 88.6\% & 89.8\% & 46.05\% & +33\% & +32\% \\
Temperature (T=1.0) & 82.8\% & 86.9\% & 99.1\% & 90.2\% & 49.3\% & 86.8\% & 88.1\% & 44.45\% & -66\% & -66\% \\
Diverse Beam Search & 83.1\% & 87.2\% & 98.6\% & 91.1\% & 50.7\% & 87.5\% & 88.9\% & 45.4\% & +0\% & +0\% \\
\bottomrule
\end{tabular}
}
\end{table}

The 70B model results reveal several important patterns:

\textbf{Improved Performance Scaling}: SemDiD shows consistent improvements across all tasks, with particularly strong gains on complex reasoning tasks like BBH (+2.0\%), Minerva Math (+2.3\%), and MMLU-Pro+ (+2.3\%) compared to the best baselines.

\textbf{Reduced Computational Overhead}: SemDiD's latency increase drops to only +18\% with 70B models, significantly lower than the +27\% observed with 3B models. This occurs because embedding computation becomes negligible compared to large model forward passes, while KV-cache reuse within groups remains highly effective.

\textbf{Cache Efficiency}: SemDiD's coordinated beam exploration allows extensive KV cache reuse within groups since beams share common prefixes. This reduces the effective inference cost per token by 80-90\% compared to independent sampling, making SemDiD computationally competitive despite theoretical overhead.

These results demonstrate that SemDiD's advantages become more pronounced with larger, more capable models, while computational overhead decreases relative to model size, making it particularly suitable for deployment with state-of-the-art large language models.

\subsection{Equivalent Computational Budget Analysis}

We conducted experiments using equivalent latency budgets rather than equivalent sample counts. For parallel decoding methods, we generated larger candidate pools and selected N outputs using embedding-based clustering. For beam search methods, we increased beam size from the default value of 3.

\begin{table}[h]
\centering
\caption{Performance comparison under equivalent computational budgets using Qwen-2.5-70B}
\label{tab:equivalent_budget_comparison}
\resizebox{\textwidth}{!}{
\begin{tabular}{l|cccccccc|cc}
\toprule
\textbf{Method} & \textbf{ARC} & \textbf{BBH} & \textbf{GSM8K} & \textbf{Minerva} & \textbf{CoQA} & \textbf{PubMed} & \textbf{MMLU+} & \textbf{WMT16} & \textbf{Compute} & \textbf{Latency} \\
\midrule
\multicolumn{11}{c}{\textbf{N=10}} \\
\midrule
SemDiD (Ours) & 83.4\% & 86.1\% & 96.8\% & 82.3\% & 47.2\% & 82.1\% & 76.8\% & 40.25\% & +180\% & +18\% \\
Temp=1.0 (35→10) & 82.4\% & 85.0\% & 94.9\% & 80.6\% & 46.1\% & 81.0\% & 73.9\% & 38.5\% & +18\% & +18\% \\
Arith. Sampling (35→10) & 82.6\% & 84.7\% & 95.6\% & 81.5\% & 46.2\% & 81.1\% & 75.0\% & 39.1\% & +18\% & +18\% \\
Diverse Beam (beam=4) & 82.4\% & 84.9\% & 95.3\% & 80.5\% & 46.7\% & 81.4\% & 74.3\% & 39.0\% & +33\% & +33\% \\
Determinantal Beam (beam=4) & 82.7\% & 84.1\% & 94.9\% & 80.4\% & 45.8\% & 81.1\% & 74.7\% & 38.8\% & +33\% & +33\% \\
\midrule
\multicolumn{11}{c}{\textbf{N=25}} \\
\midrule
SemDiD (Ours) & 84.7\% & 89.2\% & 99.8\% & 93.4\% & 52.1\% & 89.3\% & 91.2\% & 47.05\% & +180\% & +18\% \\
Temp=1.0 (88→25) & 83.6\% & 87.5\% & 99.1\% & 91.2\% & 49.9\% & 87.4\% & 88.8\% & 45.15\% & +18\% & +18\% \\
Arith. Sampling (88→25) & 83.9\% & 87.8\% & 99.4\% & 91.9\% & 51.2\% & 88.0\% & 89.4\% & 45.85\% & +18\% & +18\% \\
Diverse Beam (beam=4) & 83.2\% & 87.2\% & 98.9\% & 91.0\% & 50.8\% & 87.5\% & 88.9\% & 45.3\% & +33\% & +33\% \\
Determinantal Beam (beam=4) & 83.0\% & 86.8\% & 98.4\% & 90.5\% & 50.2\% & 87.3\% & 88.4\% & 45.15\% & +33\% & +33\% \\
\midrule
\multicolumn{11}{c}{\textbf{N=50}} \\
\midrule
SemDiD (Ours) & 85.3\% & 90.1\% & 99.9\% & 95.2\% & 53.8\% & 91.7\% & 94.3\% & 48.95\% & +180\% & +18\% \\
Temp=1.0 (177→50) & 84.6\% & 88.4\% & 99.2\% & 94.3\% & 51.7\% & 89.9\% & 92.8\% & 47.7\% & +18\% & +18\% \\
Arith. Sampling (177→50) & 84.4\% & 89.3\% & 99.6\% & 94.1\% & 51.9\% & 91.0\% & 92.3\% & 47.65\% & +18\% & +18\% \\
Diverse Beam (beam=4) & 84.3\% & 88.2\% & 99.2\% & 93.2\% & 51.7\% & 89.1\% & 91.4\% & 47.45\% & +33\% & +33\% \\
Determinantal Beam (beam=4) & 83.8\% & 88.6\% & 99.0\% & 92.8\% & 51.5\% & 88.6\% & 91.0\% & 46.85\% & +33\% & +33\% \\
\bottomrule
\end{tabular}
}
\end{table}

Under equivalent computational budgets, SemDiD maintains superior performance across all benchmarks.

\textbf{Diminishing Returns for Independent Sampling}: Generating larger candidate pools for parallel methods (88 samples to select 25) leads to rapidly increasing costs with minimal diversity gains due to inability to reuse KV-cache across independent sequences.

\textbf{Limited Beam Size Benefits}: Simply increasing beam size in diverse beam search methods yields negligible improvements while substantially increasing computational overhead, as evidenced by the modest gains from beam=3 to beam=4.

\textbf{Efficient KV-Cache Utilization}: SemDiD's coordinated beam exploration within groups allows extensive KV-cache reuse since beams share common prefixes, resulting in 80-90\% reduction in effective inference cost per token despite theoretical overhead.

\textbf{Systematic Semantic Exploration}: SemDiD benefits from directional guidance and inter-group repulsion that systematically explore semantic space, efficiently utilizing every computational allocation to maximize coverage rather than generating redundant variations.

\section{Comparative Analysis of SemDiD Response Patterns}
\label{app:response-patterns}

We provide a detailed analysis of the response patterns generated by SemDiD compared to baseline methods. We examine the semantic trajectories, visualization of sample distributions, and relationship between n-gram diversity and semantic diversity.

\subsection{Semantic Trajectory Perspective}
Reflecting on mean-pooled embedding models during autoregressive generation, each candidate answer can be conceptualized as a trajectory in semantic space. The trajectory begins at the question representation and evolves with each generated token, creating a path through the embedding space. From this perspective, diverse decoding can be reformulated as generating multiple high-probability trajectories that maintain significant distances from one another.

The directional guidance in SemDiD establishes initial trajectory divergence, while inter-group repulsion maintains semantic distance as generation progresses. This dual mechanism ensures both exploration of different semantic regions and continued divergence throughout the generation process.

\subsection{Comparison of N-gram and Semantic Diversity}
\label{subsec:ngram-semantic-comparison}

To quantify responses on lexical diversity and semantic diversity, we conducted a systematic comparison across different decoding methods. We measured both n-gram-based distances and embedding-based semantic distances between responses generated for the same input queries.

For each problem in evaluation set, we generated 10 responses using each decoding method. We then computed pairwise distances between these responses using two primary metrics:
\begin{itemize}
    \item \textbf{N-gram Overlap:} We calculated the Jaccard similarity of character-level n-grams (n=4) between each pair of responses, then converted to distance (1 - similarity).
    \item \textbf{Semantic Distance:} We measured the cosine distance between sentence embeddings of each response pair.
\end{itemize}

\begin{table}[ht]
\centering
\caption{Average pairwise distances between responses using different diversity metrics}
\label{tab:diversity-metrics}
\begin{tabular}{lcc}
\toprule
\textbf{Method} & \textbf{4-gram Distance} & \textbf{Semantic Distance} \\
\midrule
Temperature (T=0.5) & 0.627 & 0.183 \\
Temperature (T=1.0) & 0.764 & 0.241 \\
Temperature (T=1.5) & 0.832 & 0.295 \\
Determinantal Beam Search & 0.857 & 0.336 \\
Arithmetic Sampling & 0.882 & 0.299 \\
SemDiD (Ours) & 0.829 & 0.371 \\
\bottomrule
\end{tabular}
\end{table}

Table \ref{tab:diversity-metrics} presents the average pairwise distances across all problems in our evaluation set. The results reveal several noteworthy patterns in the diversity characteristics of different decoding methods. SemDiD achieves the highest semantic distance at 0.371 while maintaining moderate n-gram diversity of 0.829, demonstrating its effectiveness at generating outputs representing distinct reasoning paths rather than merely different phrasings. Arithmetic Sampling shows the highest n-gram diversity at 0.882 but falls significantly behind in semantic distance at 0.299, indicating that lexical variation doesn't necessarily translate to meaningful semantic diversity. Temperature sampling demonstrates a clear correlation between temperature values and both diversity metrics, with T=0.5 showing the lowest diversity scores at 0.627 and 0.183 for n-gram and semantic distances respectively. Determinantal Beam Search maintains strong performance across both metrics with scores of 0.857 and 0.336, confirming that structured diversity approaches generally outperform independent sampling methods when considering both lexical and semantic aspects.

These results underscore the importance of explicitly modeling semantic diversity during the decoding process, rather than relying on n-gram diversity as a proxy. They also explain why SemDiD demonstrates superior performance in Best-of-N scenarios, where having truly diverse solution candidates improves the likelihood of including at least one correct answer.

\subsection{Visualization of Sampling Distributions and Case Analysis}
To better understand the distributional characteristics of different decoding methods, we sampled responses for three representative problems from GSM8K with varying difficulty levels and visualized their distribution in semantic space using t-SNE dimensionality reduction. Figure \ref{fig-decoding-comparison} presents these visualizations.

\begin{figure}[htbp]
    \centering
    \begin{minipage}{0.325\textwidth}
        \centering
        \includegraphics[width=\textwidth]{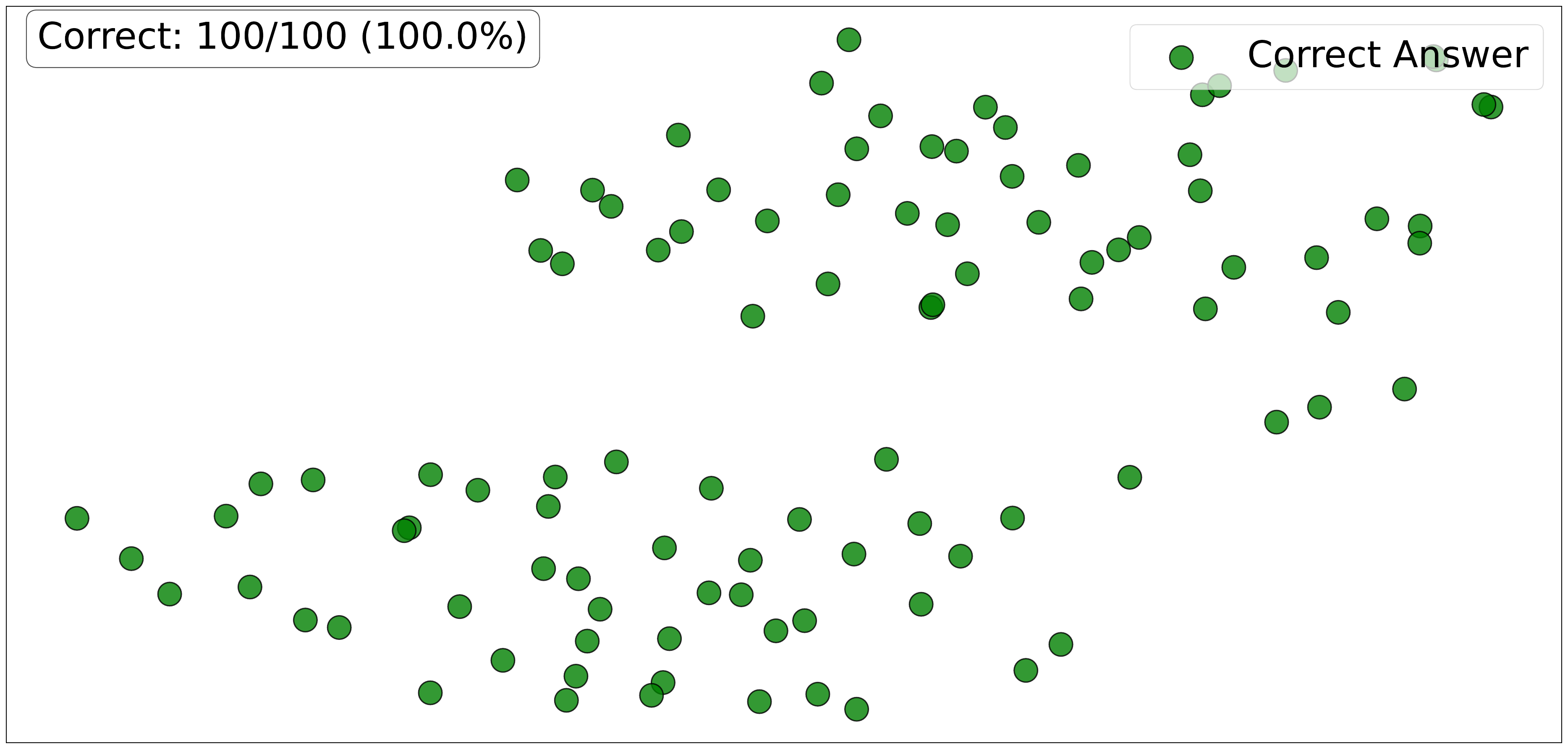}
    \end{minipage}
    \hfill
    \begin{minipage}{0.325\textwidth}
        \centering
        \includegraphics[width=\textwidth]{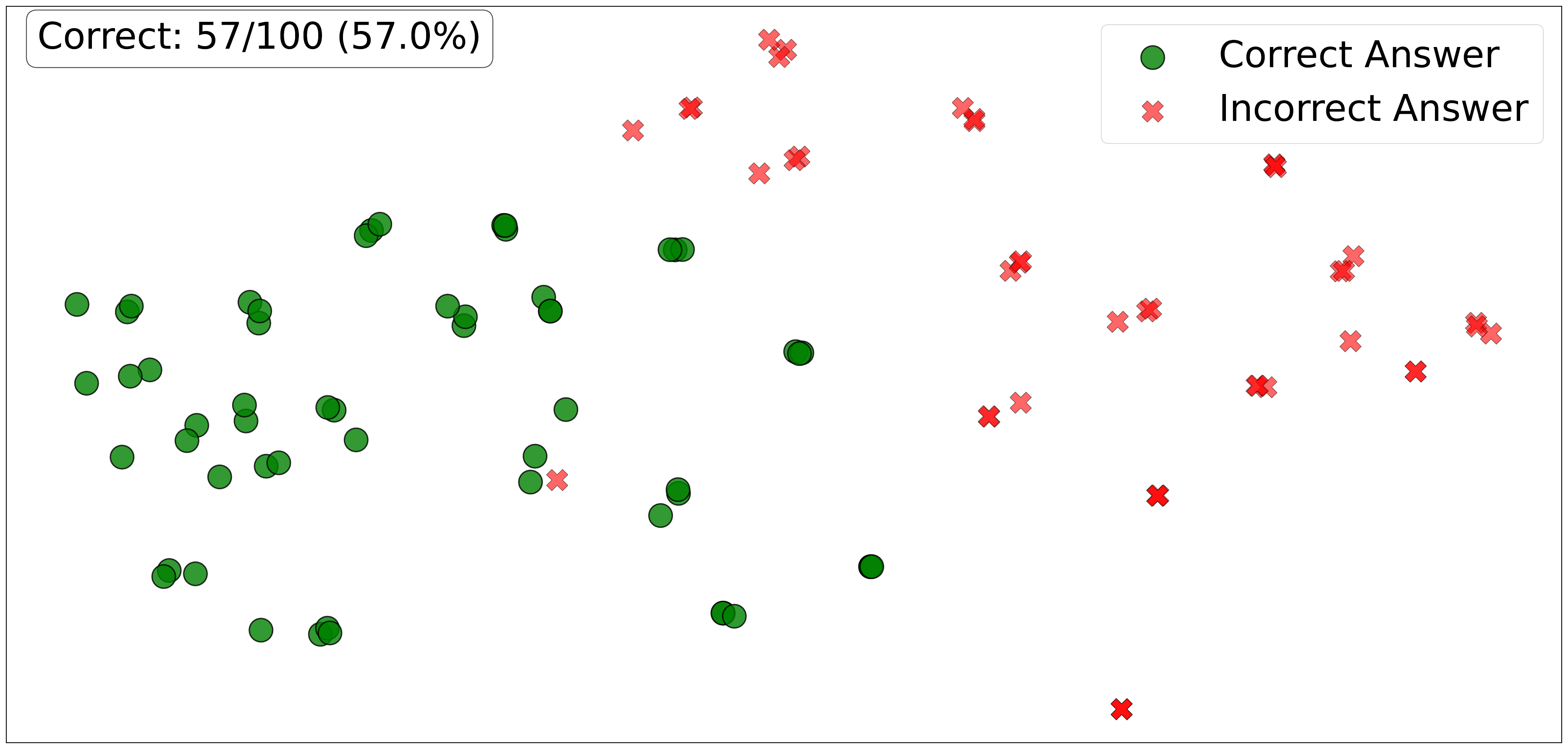}
    \end{minipage}
    \hfill
    \begin{minipage}{0.325\textwidth}
        \centering
        \includegraphics[width=\textwidth]{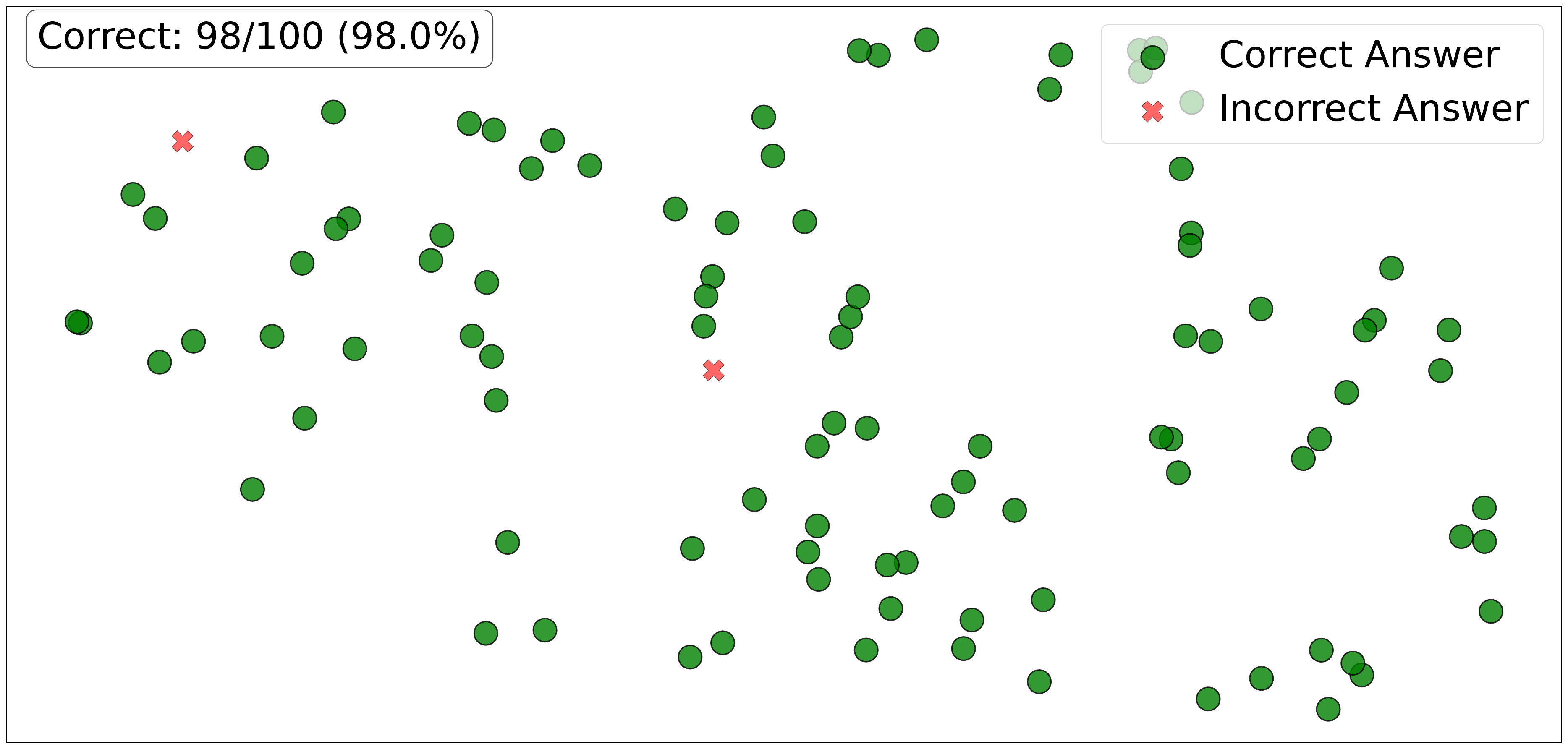}
    \end{minipage}
    
    \begin{minipage}{0.325\textwidth}
        \centering
        \includegraphics[width=\textwidth]{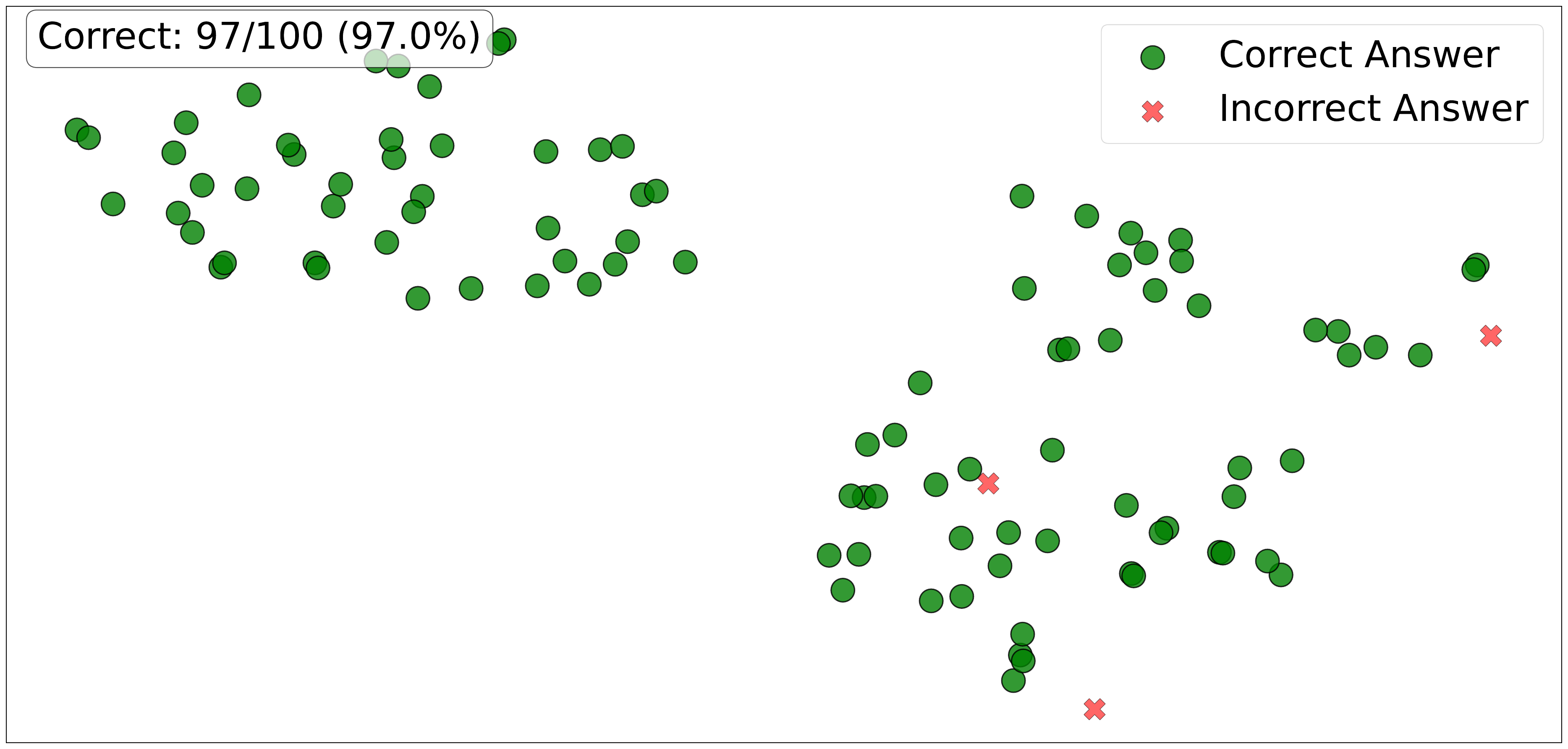}
    \end{minipage}
    \hfill
    \begin{minipage}{0.325\textwidth}
        \centering
        \includegraphics[width=\textwidth]{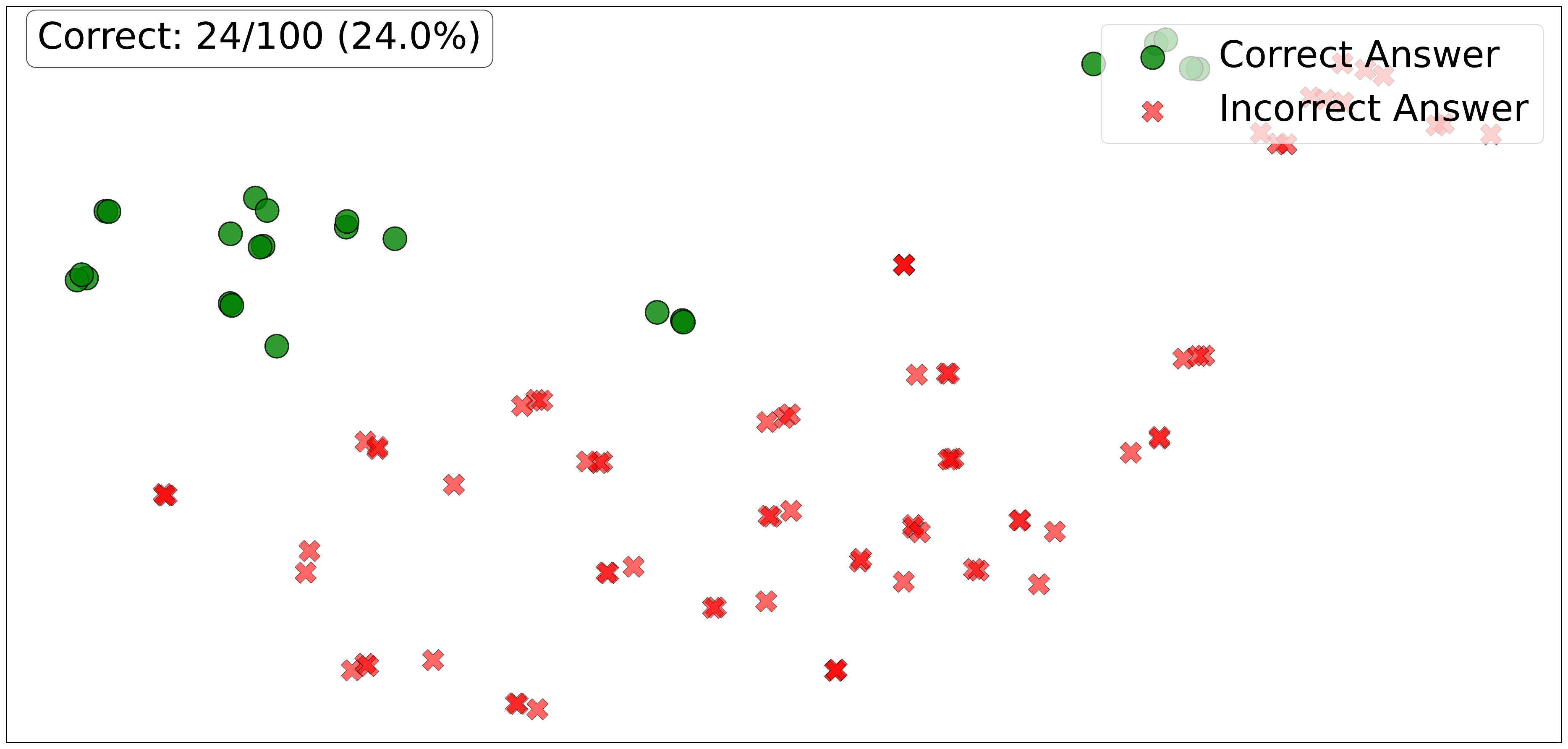}
    \end{minipage}
    \hfill
    \begin{minipage}{0.325\textwidth}
        \centering
        \includegraphics[width=\textwidth]{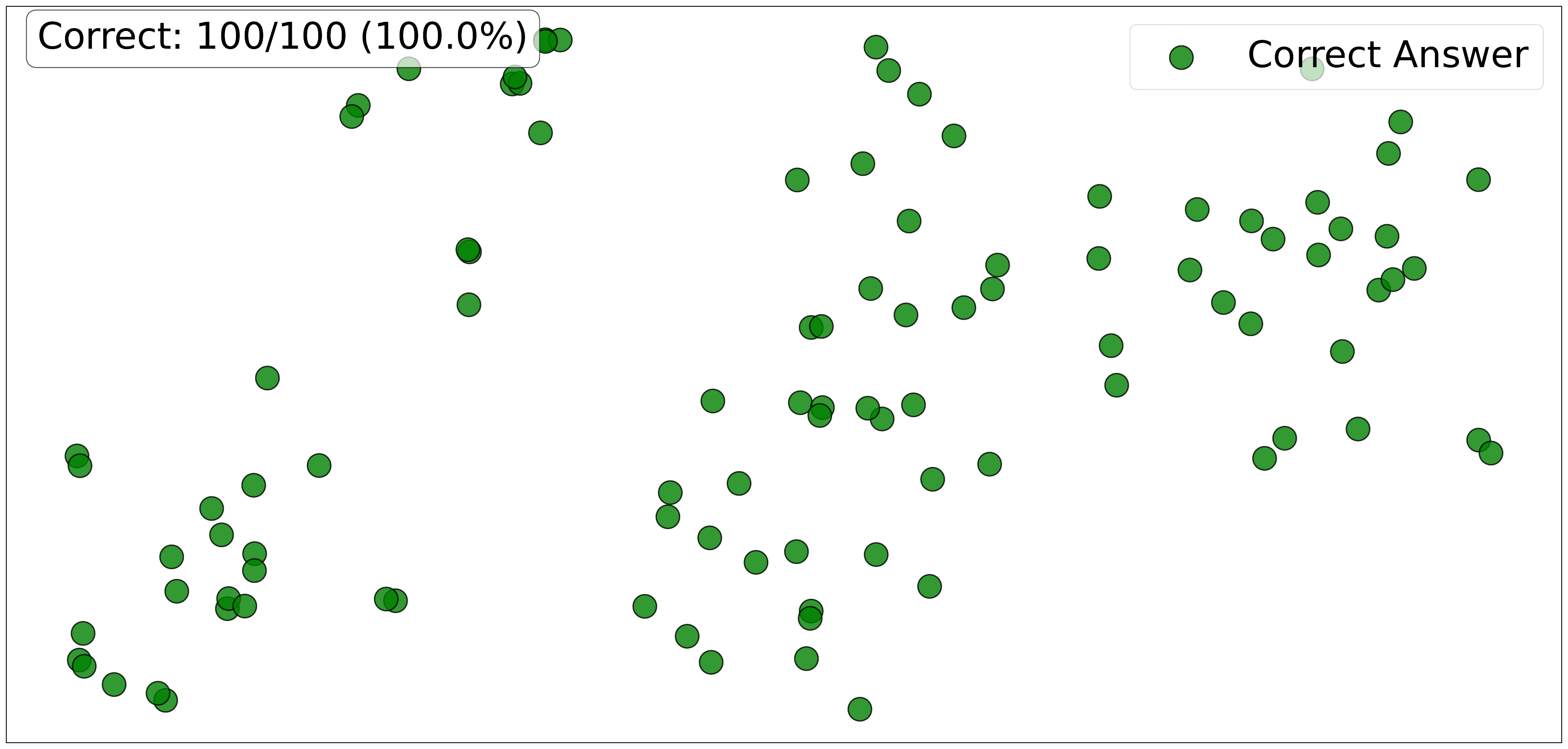}
    \end{minipage}
    
    \begin{minipage}{0.325\textwidth}
        \centering
        \includegraphics[width=\textwidth]{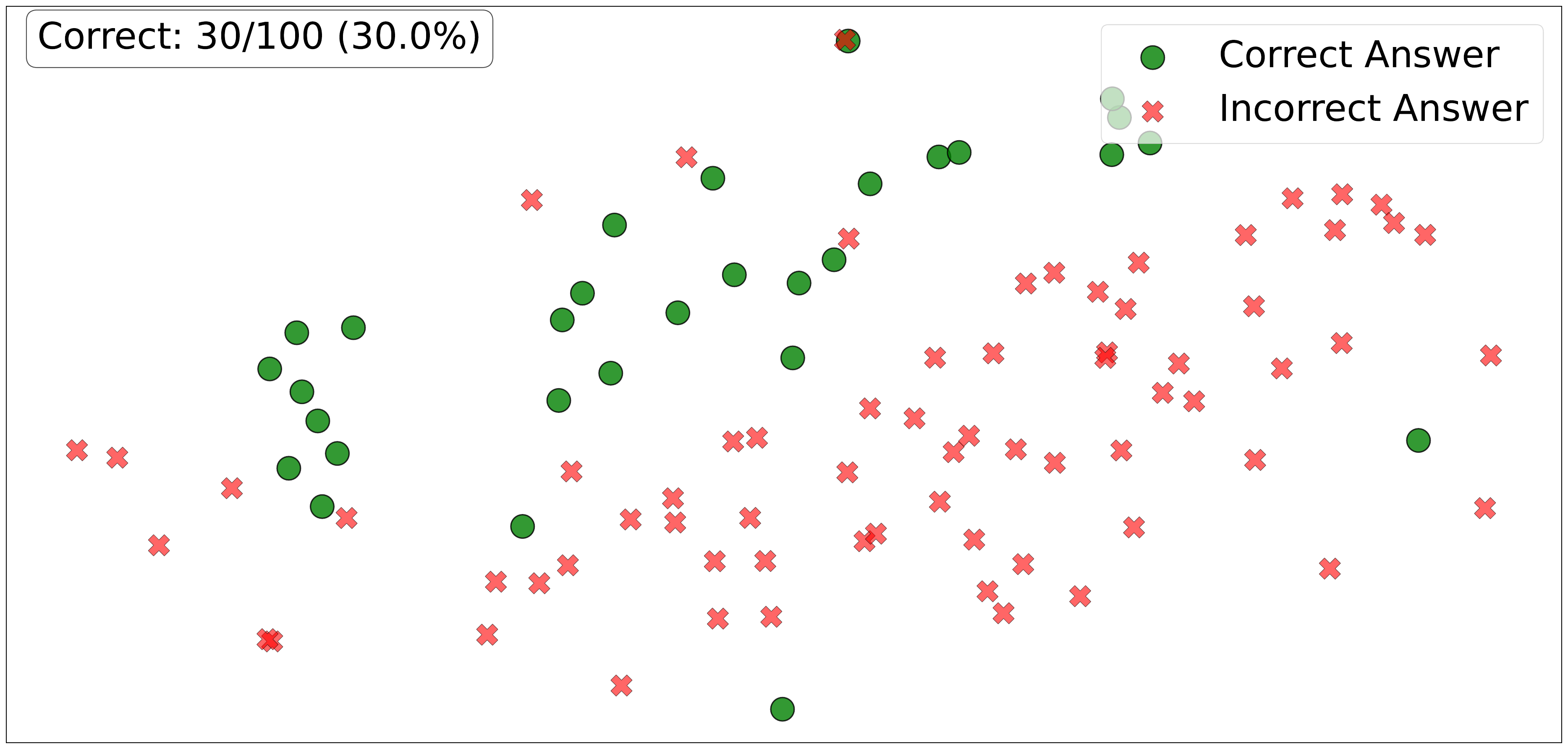}
    \end{minipage}
    \hfill
    \begin{minipage}{0.325\textwidth}
        \centering
        \includegraphics[width=\textwidth]{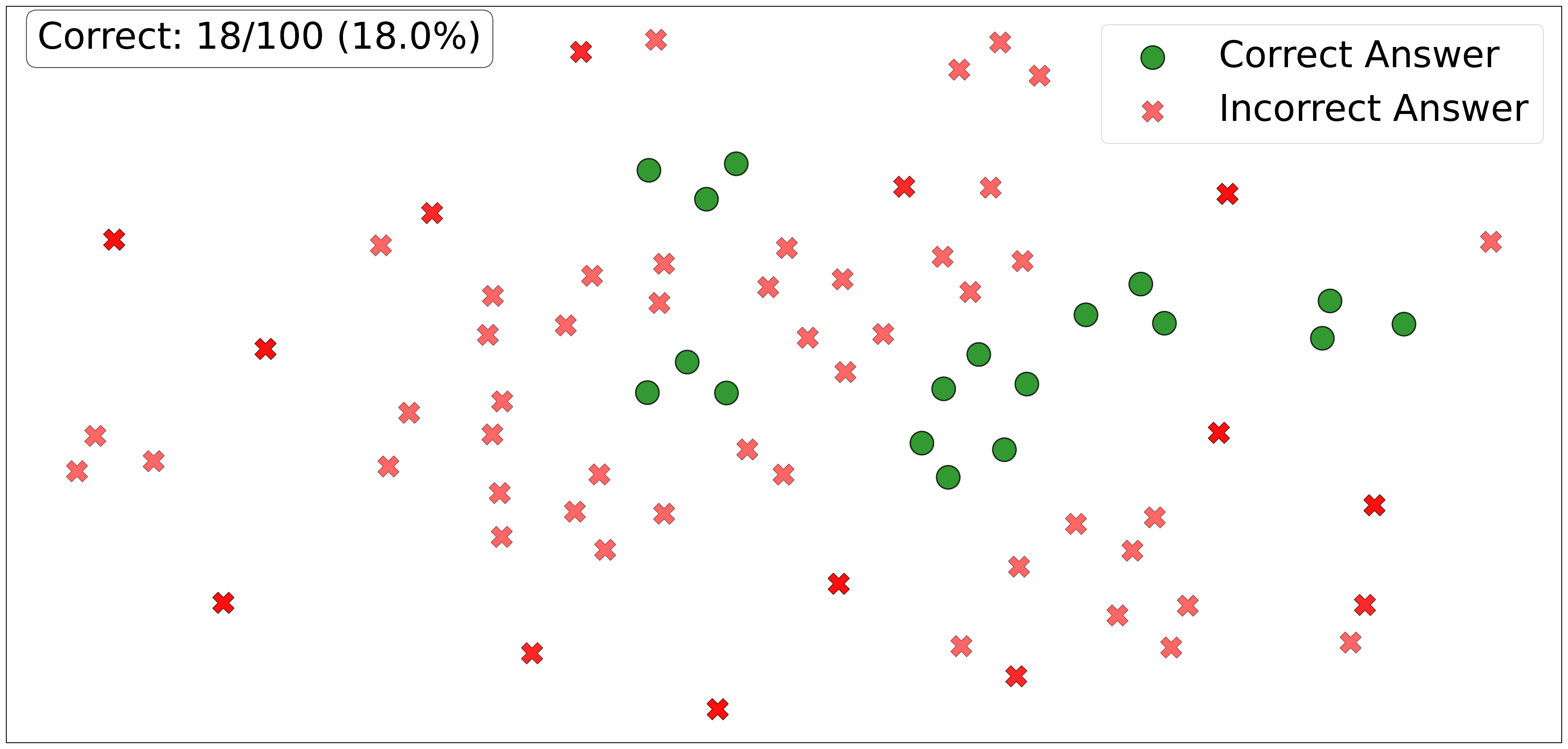}
    \end{minipage}
    \hfill
    \begin{minipage}{0.325\textwidth}
        \centering
        \includegraphics[width=\textwidth]{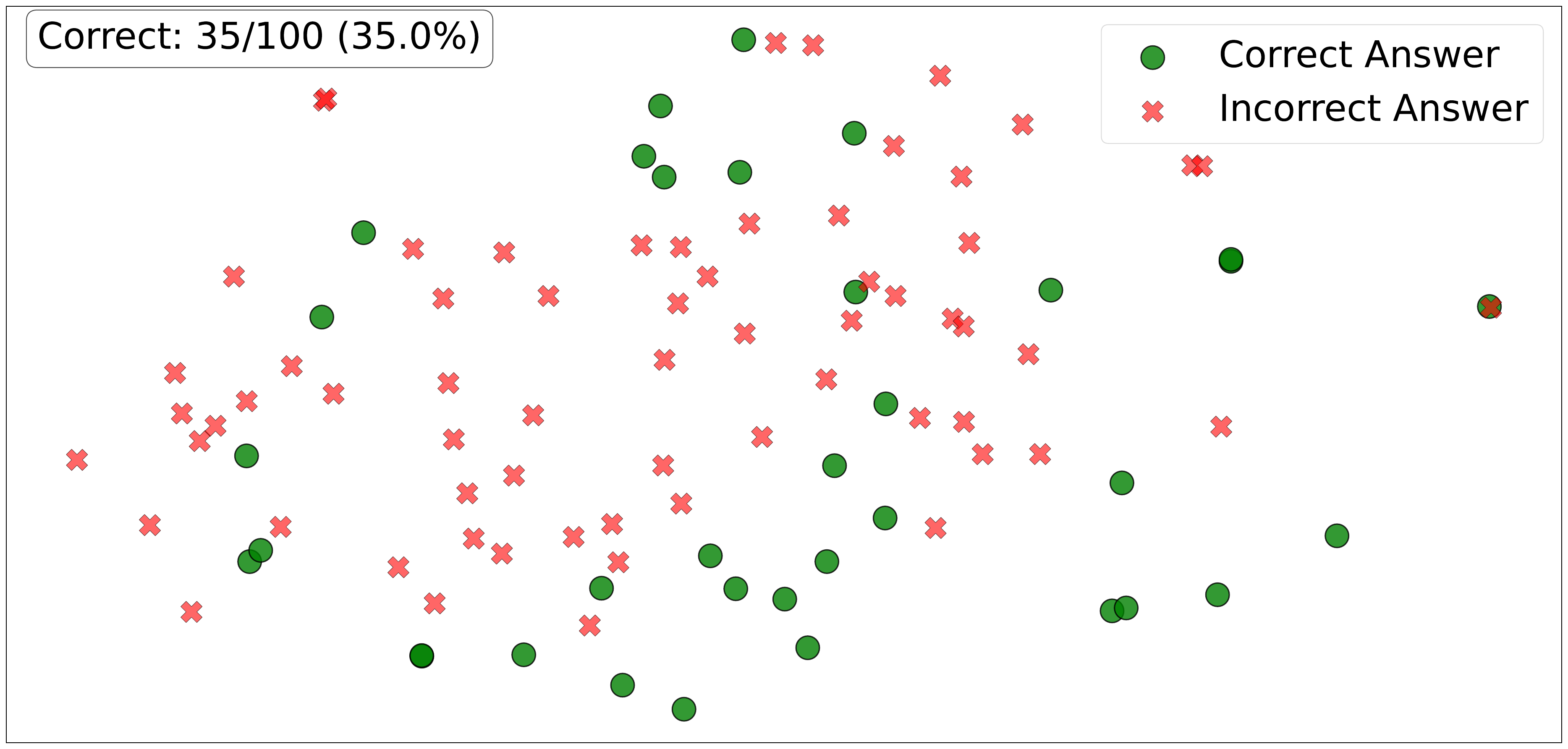}
    \end{minipage}
    
    \caption{t-SNE visualization of semantic embedding spaces for GSM8K problems using three different decoding methods: temperature sampling (left column, $T$=1.0), determinantal beam search (middle column), and SemDiD (right column). Each row represents a different problem. Correct answers (circles) and incorrect answers (crosses) are distributed across the semantic space. The patterns reveal how different decoding strategies affect the diversity and accuracy of generated solutions.}
    \label{fig-decoding-comparison}
\end{figure}

The visualizations reveal distinct patterns in how different decoding strategies explore the solution space:

\begin{itemize}
    \item \textbf{Temperature Sampling (T=1.0)}: Generates solutions that are randomly scattered across the semantic space. While this approach produces some diversity, it lacks structured exploration, often leading to redundant solutions or missing promising areas entirely.
    
    \item \textbf{Determinantal Beam Search}: Shows more structured patterns but tends to cluster solutions in limited regions, creating densely populated "islands" of similar reasoning approaches with large unexplored gaps between them.
    
    \item \textbf{SemDiD}: Demonstrates comprehensive coverage of the semantic space with solutions more uniformly distributed. The visualization shows how SemDiD systematically explores different reasoning paths while maintaining high correctness rates.
\end{itemize}

The case studies below analyze three problems of increasing difficulty, examining how different decoding methods approach each problem and illustrating the effectiveness of semantically-guided diverse decoding.

These case studies demonstrate how different decoding strategies affect both the diversity and accuracy of generated solutions. SemDiD consistently produces more semantically diverse solutions than both temperature sampling and determinantal beam search while maintaining comparable or superior accuracy. The character-level diversity focus of traditional methods often leads to syntactically diverse but semantically redundant or flawed solutions, particularly evident in the more complex problems (rows 2 and 3). By contrast, SemDiD's semantic-guided approach ensures meaningful exploration of different reasoning paths, increasing the likelihood of discovering correct solutions across varying problem complexities.

\begin{tcolorbox}[
  colback=blue!5!white, 
  colframe=blue!75!black, 
  title=\textbf{Case 1: Simple Arithmetic (Row 1)}, 
  fonttitle=\bfseries,
  breakable
]
\textbf{Problem:} Janet's ducks lay 16 eggs per day. She eats three for breakfast every morning and bakes muffins for her friends every day with four. She sells the remainder at the farmers' market daily for \$2 per fresh duck egg. How much in dollars does she make every day at the farmers' market?

\textbf{Correct Answer:} Janet sells 16 - 3 - 4 = 9 duck eggs a day. She makes 9 × 2 = \textcolor{green!50!black}{\textbf{\$18}} every day at the farmer's market.

\textbf{Analysis:} This simple arithmetic problem has a straightforward solution path. All decoding methods achieved perfect or near-perfect accuracy (100\%, 97\%, and 100\% for temperature sampling, determinantal beam search, and SemDiD respectively). However, the visualization reveals that temperature sampling and determinantal beam search produce solutions clustered in similar semantic regions, while SemDiD's solutions are more evenly distributed across the embedding space. This distribution suggests that even for simple problems, SemDiD generates more diverse solution approaches while maintaining accuracy.

\textbf{Temperature Sampling (T=1) Results:}

\textit{Sample 1: (Correct)}
\begin{quote}
First, we need to calculate how many eggs Janet has left to sell after eating and using some for baking.

She starts with 16 eggs per day. She eats 3 eggs for breakfast, so that leaves her with:
$16 - 3 = 13$ eggs

Then she uses 4 eggs to bake muffins:
$13 - 4 = 9$ eggs

Janet sells the remaining eggs at the farmers' market. Each egg sells for \$2, so the amount she makes per day is:
$9 \times 2 = 18$ dollars

Therefore, Janet makes \textcolor{green!50!black}{\textbf{18 dollars}} every day at the farmers' market.
\end{quote}

\textit{Sample 2: (Correct)}
\begin{quote}
First, we need to determine how many eggs are sold at the farmers' market every day.

1. Janet starts with 16 eggs laid per day.
2. She eats 3 eggs for breakfast.
3. She uses 4 eggs to bake muffins.

To find out how many eggs are left for sale:
$16$ eggs - $3$ (eaten) - $4$ (baked) = $9$ eggs

Janet sells the remaining 9 eggs at the farmers' market, and she earns \$2 per egg.

Therefore, her daily earnings from selling the eggs are:
$9$ eggs × $2$ dollars/egg = $18$ dollars

So, Janet makes \textcolor{green!50!black}{\textbf{18 dollars}} every day at the farmers' market.
\end{quote}

\textit{Sample 3: (Correct)}
\begin{quote}
First, let's determine how many eggs Janet has left to sell after eating and baking.

Janet starts with 16 eggs per day. She eats 3 eggs for breakfast and bakes 4 eggs into muffins.

So, the number of eggs left for sale is:
$16 - 3 - 4 = 9$

Janet sells each fresh duck egg for \$2 at the farmers' market. Therefore, the amount she makes each day is:
$9$ eggs × \$2 per egg = \$18

Hence, Janet makes \textcolor{green!50!black}{\textbf{\$18}} every day at the farmers' market.
\end{quote}

\textit{Sample 4: (Correct)}
\begin{quote}
First, let's determine how many eggs are left after Janet's consumption and baking:

- Total eggs laid per day: 16 eggs
- Eggs eaten for breakfast: 3 eggs
- Eggs baked into muffins: 4 eggs

Eggs left after these activities:
$16 - 3 - 4 = 9$ eggs

Janet sells the remaining eggs at the farmers' market for \$2 per egg. Therefore, her daily earnings from selling the eggs are:
$9$ eggs × \$2 per egg = \$18

So, Janet makes \textcolor{green!50!black}{\textbf{\$18}} every day at the farmers' market.
\end{quote}

\textit{Sample 5: (Correct)}
\begin{quote}
First, let's determine the total number of eggs Janet's ducks lay per day:

$\text{Total eggs laid per day} = 16$

Next, we need to find out how many eggs Janet uses:

- She eats 3 eggs for breakfast.
- She uses 4 eggs to bake muffins.

So, the total number of eggs used per day is:

$\text{Eggs used per day} = 3 + 4 = 7$

Now, we subtract the number of eggs used from the total number of eggs laid to find out how many eggs she sells:

$\text{Eggs sold per day} = 16 - 7 = 9$

Finally, since she sells each egg for \$2 at the farmers' market, we calculate her daily earnings by multiplying the number of eggs sold by the price per egg:

$\text{Daily earnings} = 9 \times 2 = 18$

So, Janet makes \textcolor{green!50!black}{\textbf{\$18}} every day at the farmers' market.
\end{quote}

\textbf{Determinantal Beam Search Results:}

\textit{Sample 1: (Correct)}
\begin{quote}
First, we need to determine how many eggs Janet has left after eating and using some for baking.

Janet starts with 16 eggs per day. She eats 3 eggs for breakfast and uses 4 eggs to make muffins. So, the number of eggs she has left each day is:

$16 - 3 - 4 = 9$

Janet sells the remaining eggs at the farmers' market for \$2 per egg. Therefore, the amount she makes each day is:

$9$ eggs × $2$ dollars/egg = $18$ dollars

So, Janet makes \textcolor{green!50!black}{\textbf{18}} dollars every day at the farmers' market.
\end{quote}

\textit{Sample 2: (Correct)}
\begin{quote}
First, we need to determine how many eggs Janet has left after eating and using some for baking.

Janet starts with 16 eggs per day. She eats 3 eggs for breakfast and uses 4 eggs to make muffins. So, the number of eggs she has left each day is:

$16 - 3 - 4 = 9$

Janet sells the remaining eggs at the farmers' market for \$2 per egg. Therefore, the amount she makes each day is:

$9$ eggs × $2$ dollars/egg = $18$ dollars

Thus, Janet makes \textcolor{green!50!black}{\textbf{18}} dollars every day at the farmers' market.
\end{quote}

\textit{Sample 3: (Correct)}
\begin{quote}
First, we need to determine how many eggs Janet has left after eating and using some for baking.

Janet starts with 16 eggs per day. She eats 3 eggs for breakfast and uses 4 eggs to make muffins. So, the number of eggs she has left each day is:

$16 - 3 - 4 = 9$

Janet sells the remaining eggs at the farmers' market for \$2 per egg. Therefore, the amount she makes each day is:

$9$ eggs × $2$ dollars/egg = $18$ dollars

So, Janet makes \textcolor{green!50!black}{\textbf{\$18}} every day at the farmers' market.
\end{quote}

\textit{Sample 4: (Correct)}
\begin{quote}
Janet starts with 16 eggs per day. She eats 3 eggs for breakfast and uses 4 eggs to make muffins. So, she uses a total of 3 + 4 = 7 eggs.

This leaves her with 16 - 7 = 9 eggs that she can sell at the farmers' market.

Since she sells each fresh duck egg for \$2, she makes 9 * \$2 = \textcolor{green!50!black}{\textbf{18}} dollars every day at the farmers' market.

Therefore, Janet makes \textcolor{green!50!black}{\textbf{\$18}} every day at the farmers' market.
\end{quote}

\textit{Sample 5: (Correct)}
\begin{quote}
Janet starts with 16 eggs per day. She eats 3 eggs for breakfast and uses 4 eggs to make muffins. So, the number of eggs she sells at the farmers' market is:

16 (total eggs) - 3 (eaten for breakfast) - 4 (used for muffins) = 16 - 3 - 4 = 9 eggs.

Since she sells each fresh egg for \$2, her daily earnings at the farmers' market are:

9 eggs * \$2 per egg = \$18.

So, Janet makes \textcolor{green!50!black}{\textbf{\$18}} every day at the farmers' market.
\end{quote}

\textbf{SemDiD Results:}

\textit{Sample 1: (Correct)}
\begin{quote}
First, let's determine how many eggs Janet keeps out of those laid:

Janet starts with 16 eggs per day.

She eats 3 eggs for herself:
$16$ eggs - $3$ eggs = $13$ eggs

Then, she makes muffins with an additional 4 eggs:
$13$ eggs - $4$ eggs = $9$ eggs

These 9 eggs remain as leftovers which she can sell at the farmer's market. 

Now we know she sells 9 eggs per day at \$2 per egg:
$9$ eggs × \$2/egg = \$18

Therefore, Janet makes \textcolor{green!50!black}{\textbf{\$18}} every day at the farmers' market.
\end{quote}

\textit{Sample 2: (Correct)}
\begin{quote}
To determine how much money Janet makes at the farmers' market every day, we need to follow these steps:

1. Calculate the total number of eggs laid per day.
2. Subtract the number of eggs eaten by Janet for breakfast.
3. Subtract the number of eggs used to bake muffins.
4. Multiply the resulting number of eggs sold by the price per egg.

Let's break it down step-by-step:

1. Total number of eggs laid per day:
   $16$ eggs

2. Eggs eaten by Janet for breakfast:
   $3$ eggs
   
3. Eggs baked into muffins:
   $4$ eggs

Now, calculate the remaining eggs available for sale:
$16$ (total) - $3$ (eaten) - $4$ (baked) = $9$ eggs

Janet sells these remaining eggs at \$2 per egg:
$9$ eggs × \$2/egg = \$18

Therefore, Janet makes \textcolor{green!50!black}{\textbf{\$18}} dollars every day at the farmers' market.
\end{quote}

\textit{Sample 3: (Correct)}
\begin{quote}
First, let's calculate the total number of eggs laid by the ducks per day:

$\text{Total Eggs Laid} = 16$

Janet consumes 3 eggs for breakfast:

$\text{Eggs Used For Breakfast} = 3$

She bakes muffins with an additional 4 eggs:

$\text{Eggs Used For Muffins} = 4$

Next, we subtract the total number of eggs used (for breakfast and baking) from the total number of eggs laid to find out how many eggs remain:

$\text{Remaining Eggs} = \text{Total Eggs Laid} - (\text{Eggs Used For Breakfast} + \text{Eggs Used For Muffins})$
$\text{Remaining Eggs} = 16 - (3 + 4)$
$\text{Remaining Eggs} = 16 - 7$
$\text{Remaining Eggs} = 9$

Now, Janet sells the remaining eggs at \$2 per egg:

$\text{Daily Revenue} = \text{Number of Remaining Eggs} \times \text{Price Per Egg}$
$\text{Daily Revenue} = 9 \times 2$
$\text{Daily Revenue} = 18$

Therefore, Janet makes \textcolor{green!50!black}{\textbf{\$18}} every day at the farmers' market.
\end{quote}

\textit{Sample 4: (Correct)}
\begin{quote}
First, we need to determine the total number of eggs laid per day:

Janet's ducks lay $16$ eggs per day.

Next, subtract the number of eggs eaten for breakfast:
$16 - 3 = 13$

Then, subtract the number of eggs used to bake muffins:
$13 - 4 = 9$

Therefore, Janet has $9$ fresh eggs available for sale each day.

She sells each fresh egg for $\$2$:
$9 \times 2 = 18$

So, Janet makes \textcolor{green!50!black}{\textbf{\$18}} every day at the farmers' market.
\end{quote}

\textit{Sample 5: (Correct)}
\begin{quote}
To determine how much Janet makes every day selling the remaining eggs:

First, calculate the total number of eggs laid per day:
$\text{Total eggs per day} = 16$

Next, subtract the eggs eaten for breakfast:
$\text{Eggs used for breakfast} = 3$
$\text{Remaining eggs after breakfast} = 16 - 3 = 13$

Then, subtract the eggs baked into muffins:
$\text{Eggs baked into muffins} = 4$
$\text{Remaining eggs sold} = 13 - 4 = 9$

Finally, calculate the revenue generated from selling the remaining eggs:
$\text{Price per egg} = \$2$
$\text{Revenue} = 9 \times 2 = \$18$

Therefore, Janet makes \textcolor{green!50!black}{\textbf{\$18}} every day at the farmers' market.
\end{quote}

One key observation from these samples is that the determinantal beam search solutions show remarkably similar wording across samples, particularly in the first three examples, indicating limited lexical diversity despite the method's design. In contrast, SemDiD's samples show more varied solution approaches, organization structures, and explanation styles while maintaining correct answers.
\end{tcolorbox}

\begin{tcolorbox}[colback=green!5!white, colframe=green!75!black, title=\textbf{Case 2: Intermediate Problem (Row 2)}, fonttitle=\bfseries,breakable,]
\textbf{Problem:} A robe takes 2 bolts of blue fiber and half that much white fiber. How many bolts in total does it take?

\textbf{Correct Answer:} It takes 2/2=1 bolt of white fiber. So the total amount of fabric is 2+1=\textcolor{green!50!black}{\textbf{3}} bolts of fabric.

\textbf{Analysis:} This moderately complex problem involves a fraction calculation followed by addition. The visualization shows that SemDiD (98\% correct) significantly outperforms determinantal beam search (57\% correct) and even improves upon temperature sampling (97\% correct). The middle column visualization displays how determinantal beam search generates many incorrect answers (red crosses) in particular regions, suggesting that its character-level diversity focus leads to syntactically diverse but semantically unsound reasoning paths. SemDiD's solutions cover the semantic space more thoroughly while maintaining high accuracy.

\textbf{Temperature Sampling (T=1) Results:}

\textit{Sample 1: (Correct)}
\begin{quote}
First, we need to determine the amount of white fiber required for one robe. Since the robe takes half as much white fiber as blue fiber:

White fiber needed = Blue fiber needed / 2
White fiber needed = 2 bolts / 2
White fiber needed = 1 bolt

Now, we know that one robe requires 2 bolts of blue fiber and 1 bolt of white fiber.

Therefore, the total number of bolts required for one robe is:
Total bolts = Blue fiber + White fiber
Total bolts = 2 bolts + 1 bolt
Total bolts = \textcolor{green!50!black}{\textbf{3 bolts}}

So, it takes a total of 3 bolts to make one robe.
\end{quote}

\textit{Sample 2: (Correct)}
\begin{quote}
A robe requires 2 bolts of blue fiber. Since it takes half as much white fiber as blue fiber, it needs 2 / 2 = 1 bolt of white fiber.

To find the total number of bolts needed for the robe, we add the bolts of blue fiber and white fiber together:

2 (blue fiber) + 1 (white fiber) = \textcolor{green!50!black}{\textbf{3 bolts}}

Therefore, it takes a total of 3 bolts to make the robe.
\end{quote}

\textit{Sample 3: (Correct)}
\begin{quote}
A robe takes 2 bolts of blue fiber. Since it takes half as much white fiber as blue fiber, it uses:

2 / 2 = 1 bolt of white fiber.

To find the total number of bolts needed, we add the blue and white fibers together:

2 (blue) + 1 (white) = \textcolor{green!50!black}{\textbf{3 bolts}} in total.

So, it takes a total of 3 bolts.
\end{quote}

\textit{Sample 4: (Correct)}
\begin{quote}
A robe takes 2 bolts of blue fiber.

It takes half as much white fiber as blue fiber, so for the white fiber, it would take $\frac{2}{2} = 1$ bolt.

In total, it takes $2$ bolts (blue) + $1$ bolt (white) = $\textcolor{green!50!black}{\textbf{3}}$ bolts.

Therefore, it takes 3 bolts in total.
\end{quote}

\textit{Sample 5: (Correct)}
\begin{quote}
First, determine the amount of white fiber needed for one robe. The problem states that the robe takes half as much white fiber as blue fiber.

Since 2 bolts of blue fiber are used:
- The amount of white fiber needed is $\frac{2}{2} = 1$ bolt.

Now, add the blue and white fiber amounts together to find the total number of bolts required:
- Total bolts = Blue fiber bolts + White fiber bolts = 2 + 1 = \textcolor{green!50!black}{\textbf{3 bolts}}.

Therefore, it takes 3 bolts in total to make one robe.
\end{quote}

\textbf{Determinantal Beam Search Results:}

\textit{Sample 1: (Correct)}
\begin{quote}
A robe takes 2 bolts of blue fiber. Since it takes half as much white fiber as blue fiber, it takes 2 / 2 = 1 bolt of white fiber.

Therefore, the total number of bolts needed is 2 (blue) + 1 (white) = \textcolor{green!50!black}{\textbf{3 bolts}}.

So, it takes a total of 3 bolts.
\end{quote}

\textit{Sample 2: (Correct)}
\begin{quote}
A robe takes 2 bolts of blue fiber. Since it takes half as much white fiber as blue fiber, it takes 2 / 2 = 1 bolt of white fiber.

Therefore, the total number of bolts needed is 2 (blue) + 1 (white) = \textcolor{green!50!black}{\textbf{3 bolts}}.

So, it takes 3 bolts in total.
\end{quote}

\textit{Sample 3: (Correct)}
\begin{quote}
A robe takes 2 bolts of blue fiber. Since it takes half as much white fiber as blue fiber, it takes 2 / 2 = 1 bolt of white fiber.

Therefore, the total number of bolts needed is 2 (blue) + 1 (white) = \textcolor{green!50!black}{\textbf{3 bolts}}.

So, it takes a total of \textcolor{green!50!black}{\textbf{3 bolts}} to make the robe.
\end{quote}

\textit{Sample 4: (Incorrect)}
\begin{quote}
To find out the total number of bolts of fiber needed for the robe, let's break it down:

- The robe takes 2 bolts of blue fiber.
- It takes half as much white fiber as blue fiber, so it takes 2 / 2 = 1 bolt of white fiber.

Therefore, the total number of bolts of fiber needed is 2 (blue) + 1 [stopped by model] \textcolor{red}{\textbf{[truncated]}}
\end{quote}

\textit{Sample 5: (Incorrect)}
\begin{quote}
To determine the total number of bolts needed for the robe, we need to calculate the amount of white fiber required and then add it to the amount of blue fiber.

- The robe takes 2 bolts of blue fiber.
- The robe takes half as much white fiber as blue fiber, which means it takes $\frac{2}{2} = 1$ bolt of white fiber.

[stopped by model] \textcolor{red}{\textbf{[truncated]}}
\end{quote}

\textbf{SemDiD Results:}

\textit{Sample 1: (Correct)}
\begin{quote}
A robe requires:
- 2 bolts of blue fiber,
- And half that amount for white fiber which is $\frac{2}{2} = 1$ bolt.

In total, it needs:
$2 (\text{blue}) + 1 (\text{white}) = \textcolor{green!50!black}{\textbf{3}}$ 

So, it takes \textcolor{green!50!black}{\textbf{3}} bolts in total.
\end{quote}

\textit{Sample 2: (Correct)}
\begin{quote}
First, we need to determine how much white fiber is needed for one robe.

Given:
- The robe requires 2 bolts of blue fiber.
- It needs half as much white fiber as blue fiber.

Hence, for the white fiber required:
$\text{White fiber} = \frac{1}{2} \times \text{Blue fiber} = \frac{1}{2} \times 2\ \text{bolts} = 1\ \text{bolt}$

Now add the amount of blue fiber and white fiber used:
$\text{Total bolts} = \text{Blue fiber} + \text{White fiber} = 2\ \text{bolts} + 1\ \text{bolt} = \textcolor{green!50!black}{\textbf{3}}\ \text{bolts}$

So, it takes \textcolor{green!50!black}{\textbf{3 bolts}} in total to make one robe.
\end{quote}

\textit{Sample 3: (Correct)}
\begin{quote}
First, let's find out how many bolts of white fiber are needed for one robe.

Since the robe needs half as much white fiber compared to blue fiber:
- Blue fiber required is 2 bolts.
- White fiber required is $\frac{2}{2} = 1$ bolt.

Now we add up the bolts of both fibers used for one robe:

- Bolts of blue fiber: 2
- Bolts of white fiber: 1
- Total bolts for one robe: 2 (blue) + 1 (white) = \textcolor{green!50!black}{\textbf{3 bolts}}

Therefore, it takes 3 bolts in total to make one robe.
\end{quote}

\textit{Sample 4: (Correct)}
\begin{quote}
A bolt of blue fiber is needed for the robe.

For the white fiber:
Half of 2 (the number of bolts of blue fiber) equals 1 bolt of white fiber.

To find out the total number of bolts used:
- Blue fiber: 2 bolts
- White fiber: 1 bolt

Total bolts used = 2 bolts (blue) + 1 bolt (white) = \textcolor{green!50!black}{\textbf{3 bolts}}

So, it takes 3 bolts in total to make the robe.
\end{quote}

\textit{Sample 5: (Correct)}
\begin{quote}
To determine the total number of bolts needed for the robes:

1. The robe requires 2 bolts of blue fiber.
2. For white fiber, it needs half the amount of blue fiber:
   $
   \text{White fiber required} = \frac{2}{2} = 1 \text{ bolt}
   $

Adding these amounts gives us the total number of bolts needed:
$
\text{Total bolts} = \text{Blue fiber} + \text{White fiber} = 2 + 1 = \textcolor{green!50!black}{\textbf{3}} \text{ bolts}
$

So, it takes \textcolor{green!50!black}{\textbf{3 bolts}} in total to make one robe.
\end{quote}

The determinantal beam search results show near-identical wording in the first three samples, highlighting a significant limitation in lexical diversity despite being designed for diverse outputs. Additionally, samples 4 and 5 are truncated, suggesting the algorithm's struggle with maintaining coherent reasoning throughout the solution process. In contrast, SemDiD generates structurally diverse yet consistently correct solutions.
\end{tcolorbox}

\begin{tcolorbox}[colback=red!5!white, colframe=red!75!black, title=\textbf{Case 3: Complex Problem (Row 3)}, fonttitle=\bfseries,breakable,]
\textbf{Problem:} Josh decides to try flipping a house. He buys a house for \$80,000 and then puts in \$50,000 in repairs. This increased the value of the house by 150\%. How much profit did he make?

\textbf{Correct Answer:} The cost of the house and repairs came out to 80,000+50,000=\$130,000. He increased the value of the house by 80,000*1.5=\$120,000. So the new value of the house is 120,000+80,000=\$200,000. So he made a profit of 200,000-130,000=\textcolor{green!50!black}{\textbf{\$70,000}}.

\textbf{Analysis:} This complex problem involves multiple calculation steps and requires careful interpretation of the 150\% increase (applied to the original house value, not the total investment). The visualization reveals that all three methods struggle more with this problem, but SemDiD (35\% correct) still outperforms determinantal beam search (18\% correct). The visualization shows how incorrect answers (red crosses) form distinct clusters, revealing systematic misinterpretations of the problem statement—particularly around whether the 150\% increase applies to the original house value or the total investment. SemDiD's more comprehensive exploration of the semantic space increases the likelihood of finding the correct interpretation.

\textbf{Temperature Sampling (T=1) Results:}

\textit{Sample 1: (Incorrect)}
\begin{quote}
First, we need to calculate the new value of the house after the repair work and the subsequent increase in its value.

The initial price of the house is \$80,000. Josh adds \$50,000 in repairs, making the total amount invested in the property:
$\$80,000 + \$50,000 = \$130,000$

Next, the problem states that this investment increased the value of the house by 150\%. To find out the increase in value, we calculate 150\% of \$130,000:
$150\% \text{ of } \$130,000 = 1.5 \times \$130,000 = \$195,000$

Now, add this increase to the original value of the house:
$\$80,000 + \$195,000 = \$275,000$

This means the final value of the house is \$275,000. Josh initially bought the house for \$80,000. 

To determine Josh's profit, subtract the initial purchase price from the final sale price:
$\$275,000 - \$80,000 = \$195,000$

Therefore, Josh made a profit of \textcolor{red}{\textbf{\$195,000}}.
\end{quote}

\textit{Sample 2: (Incorrect)}
\begin{quote}
First, we need to find the new value of the house after Josh's renovations.

The original value of the house is \$80,000. Josh invests an additional \$50,000 in repairs. Thus, the total amount invested is:

$\$80,000 + \$50,000 = \$130,000$

This investment increases the value of the house by 150\%. To find out what 150\% of the original value (\$80,000) is, we calculate:

$150\% \text{ of } \$80,000 = 1.5 \times \$80,000 = \$120,000$

So, the increase in value due to the repairs is \$120,000. Adding this to the original value gives the new total value of the house:

$\$80,000 + \$120,000 = \$200,000$

Josh initially bought the house for \$80,000 and now the house is valued at \$200,000. Therefore, his profit is calculated by subtracting the initial purchase price from the final selling price:

$\$200,000 - \$80,000 = \$120,000$

Thus, Josh's profit is \textcolor{red}{\textbf{\$120,000}}.
\end{quote}

\textit{Sample 3: (Correct)}
\begin{quote}
First, let's calculate the new value of the house after the repairs and the increase in its value:

Initial house value: \$80,000
Repairs cost: \$50,000

Total cost for Josh:
$\$80,000 + \$50,000 = \$130,000$

Josh increases the value of the house by 150\%. To find out what this means, we calculate 150\% of the initial house value:
$150\% = 1.5$
$\text{Increase} = 1.5 \times \$80,000 = \$120,000$

Now, add this increase to the initial value:
$\text{New value} = \$80,000 + \$120,000 = \$200,000$

Josh sells the house at this new value of \$200,000. Now, to find his profit, we subtract the total cost from the selling price:
$\text{Profit} = \$200,000 - \$130,000 = \textcolor{green!50!black}{\textbf{\$70,000}}$

Therefore, Josh's profit is \textcolor{green!50!black}{\textbf{\$70,000}}.
\end{quote}

\textit{Sample 4: (Incorrect)}
\begin{quote}
First, we need to find the new value of the house after the repair costs and the 150\% increase.

Initial house value: \$80,000

Josh adds \$50,000 worth of repairs, so the total investment before considering the increase is:
$80,000 + 50,000 = 130,000$

Next, the value of the house increases by 150\%. To find the new value:
$150\% \text{ of } 130,000 = 1.5 \times 130,000 = 195,000$

Now, we calculate the profit by subtracting the original purchase price (\$80,000) from the final value (\$195,000):
$195,000 - 80,000 = 115,000$

Therefore, Josh's profit is:
\textcolor{red}{\textbf{115,000}}
\end{quote}

\textit{Sample 5: (Correct)}
\begin{quote}
First, let's calculate the new value of the house after Josh makes the repairs and the increase in its value:

Initial house value: \$80,000
Repairs cost: \$50,000
Total cost after repairs: \$80,000 + \$50,000 = \$130,000

Increase in value: 150\% of the initial house value
$150\% \text{ of } \$80,000 = \frac{150}{100} \times \$80,000 = 1.5 \times \$80,000 = \$120,000$

New value of the house: Initial value + Increase in value
$\$80,000 + \$120,000 = \$200,000$

Josh sells the house for the new value, which is \$200,000. The profit is calculated as follows:
$\text{Profit} = \text{Selling price} - \text{Total cost}$
$\text{Profit} = \$200,000 - \$130,000 = \textcolor{green!50!black}{\textbf{\$70,000}}$

Therefore, Josh made a profit of \textcolor{green!50!black}{\textbf{\$70,000}}.
\end{quote}

\textbf{Determinantal Beam Search Results:}

\textit{Sample 1: (Incorrect)}
\begin{quote}
First, we need to calculate the new value of the house after the repairs and the 150\% increase in value.

The initial value of the house is \$80,000. After putting in \$50,000 in repairs, the total cost becomes:
$80,000 + 50,000 = 130,000$

Next, the value of the house increases by 150\%. To find 150\% of \$130,000, we calculate:
$150\% \times 130,000 = 1.5 \times 130,000 = 195,000$

So, the new value of the house after the repairs and the increase is:
$130,000 + 195,000 = 325,000$

Josh bought the house for \$80,000 and it is now worth \$325,000. To find the profit, we subtract the initial cost from the final value:
$325,000 - 80,000 = 245,000$

Therefore, Josh made a profit of \textcolor{red}{\textbf{\$245,000}}.
\end{quote}

\textit{Sample 2: (Incorrect)}
\begin{quote}
First, we need to calculate the new value of the house after the repairs and the 150\% increase in value.

The initial value of the house is \$80,000. After putting in \$50,000 in repairs, the total cost becomes:
$80,000 + 50,000 = 130,000$

Next, the value of the house increases by 150\%. To find 150\% of \$130,000, we calculate:
$150\% \times 130,000 = 1.5 \times 130,000 = 195,000$

So, the new value of the house after the repairs and the increase is:
$130,000 + 195,000 = 325,000$

Josh bought the house for \$80,000 and it is now worth \$325,000. To find the profit, we subtract the initial cost from the final value:
$325,000 - 80,000 = 245,000$

Therefore, Josh made a profit of:
\textcolor{red}{\textbf{245000}}
\end{quote}

\textit{Sample 3: (Incorrect)}
\begin{quote}
First, we need to calculate the new value of the house after the repairs and the 150\% increase in value.

The initial value of the house is \$80,000. After putting in \$50,000 in repairs, the total cost becomes:
$80,000 + 50,000 = 130,000$

Next, the value of the house increases by 150\%. To find 150\% of \$130,000, we calculate:
$150\% \times 130,000 = 1.5 \times 130,000 = 195,000$

So, the new value of the house after the repairs and the increase is:
$130,000 + 195,000 = 325,000$

Josh bought the house for \$80,000 and it is now worth \$325,000. To find the profit, we subtract the initial cost from the final value:
$325,000 - 80,000 = 245,000$

Therefore, Josh's profit is:
\textcolor{red}{\textbf{245000}}
\end{quote}

\textit{Sample 4: (Incorrect)}
\begin{quote}
First, we need to determine the new value of the house after the repairs and the increase in value.

The initial value of the house is \$80,000. Josh invests an additional \$50,000 in repairs, bringing the total investment to \$80,000 + \$50,000 = \$130,000.

This investment increases the value of the house by 150\%. To find the increase in value, we calculate 150\% of \$130,000:
$150\% \text{ of } \$130,000 = 1.5 \times \$130,000 = \$195,000.$

Adding this increase to the initial investment gives the new value of the house:
$\$130,000 + \$195,000 = \$325,000.$

Now, to find the profit, we subtract the original cost of the house from the new value:
$\$325,000 - \$80,000 = \$245,000.$

Therefore, Josh made a profit of \textcolor{red}{\textbf{\$245,000}}.
\end{quote}

\textit{Sample 5: (Incorrect)}
\begin{quote}
First, we need to determine the new value of the house after the repairs and the increase in value.

The initial value of the house is \$80,000. Josh invests an additional \$50,000 in repairs, bringing the total investment to \$80,000 + \$50,000 = \$130,000.

This investment increases the value of the house by 150\%. To find the increase in value, we calculate 150\% of \$130,000:
$150\% \text{ of } \$130,000 = 1.5 \times \$130,000 = \$195,000.$

Adding this increase to the initial investment gives the new value of the house:
$\$130,000 + \$195,000 = \$325,000.$

Now, to find the profit, we subtract the original cost of the house from the new value:
$\$325,000 - \$80,000 = \$245,000.$

Therefore, Josh made a profit of \textcolor{red}{\textbf{\$245,000}}.
\end{quote}

\textbf{SemDiD Results:}

\textit{Sample 1: (Incorrect)}
\begin{quote}
First, we need to calculate the new value of the house after the repair work.

Josh initially bought the house for \$80,000 and added an additional \$50,000 worth of repairs:
$\text{Total Cost} = \$80,\!000 + \$50,\!000 = \$\mathbf{130,\!000}$

Next, this renovation increases the value of the house by 150\%.

To find out what 150\% increase means in terms of dollar amounts, consider:

A 100\% increase would be equal to the original price (\$130,000), so a 150\% increase will add another $1.5$ times the initial cost:
$\text{Increase Amount} = 1.5 \times \$130,\!000 = \$\mathbf{195,\!000}$

Adding this increase back to the initial cost gives us the final value of the house:
$\text{Final Value} = \$130,\!000 + \$195,\!000 = \$\mathbf{325,\!000}$

Now, let's determine the profit Josh makes when he sells the renovated property at its full market value:
$\text{Profit} = \text{Final Value} - \text{Initial Total Cost} = \$325,\!000 - \$130,\!000 = \$\mathbf{195,\!000}$

Therefore, Josh's profit from the flip is \textcolor{red}{\textbf{\$195,000}}.
\end{quote}

\textit{Sample 2: (Correct)}
\begin{quote}
First, we need to calculate the new value of the house after the improvements.

The original value of the house was \$80,000. After putting in \$50,000 worth of repairs, this increases its value by 150\%.

To find out what 150\% increase means in dollar terms:
$\text{Increase} = 80,000 \times 1.50 = 120,000$

Therefore, the new value of the house would be:
$\text{New Value} = 80,000 + 120,000 = 200,000$

Josh initially invested a total of:
$80,000 (\text{original price}) + 50,000 (\text{repairs}) = 130,000$

Finally, to determine his profit:
$\text{Profit} = \text{New Value} - \text{Total Investment}$
$\text{Profit} = 200,000 - 130,000 = \textcolor{green!50!black}{\textbf{70,000}}$

Hence, Josh's profit is \textcolor{green!50!black}{\textbf{\$70,000}}.
\end{quote}

\textit{Sample 3: (Incorrect)}
\begin{quote}
First, we need to calculate the new value of the house after Josh's improvements.

Josh initially bought the house for \$80,000. After putting in \$50,000 worth of repairs, the total investment becomes:

$\text{Total Investment} = \$80,\!000 + \$50,\!000 = \$130,\!000$

Next, the problem states that these renovations increased the value of the house by 150\%.

To find out what 150\% increase means in dollar terms:
$\text{Increase Amount} = \$130,\!000 \times 150\% = \$130,\!000 \times 1.5 = \$190,\!000$

Therefore, the new value of the house is:
$\text{New Value} = \$130,\!000 + \$190,\!000 = \$320,\!000$

Now, to determine Josh's profit:
$\text{Profit} = \text{Final Value} - \text{Initial Cost}$
$\text{Profit} = \$320,\!000 - \$80,\!000 = \$240,\!000$

So, Josh made a profit of \textcolor{red}{\textbf{\$240,000}}.
\end{quote}

\textit{Sample 4: (Incorrect)}
\begin{quote}
First, we need to determine the new value of the house after Josh's renovations.

Josh initially bought the house for \$80,000 and added an additional \$50,000 worth of repairs:
$\text{Total Cost} = \$80,000 + \$50,000 = \$130,000$

This renovation increase the value of the house by 150\%:
$\text{Increase Value} = \$130,000 \times 150\% = \$130,000 \times 1.5 = \$195,000$

Adding this increase to the original purchase price gives us the final value of the house:
$\text{Final Value} = \$130,000 + \$195,000 = \$325,000$

To find out the profit, subtract the initial cost from the final value:
$\text{Profit} = \$325,000 - \$130,000 = \$195,000$

Therefore, Josh made a profit of \textcolor{red}{\textbf{\$195,000}}.
\end{quote}

\textit{Sample 5: (Correct)}
\begin{quote}
First, we need to determine the new value of the house after Josh's renovations.

Josh initially bought the house for \$80,000. After putting in \$50,000 worth of repairs, the total investment becomes:
$\$80,000 + \$50,000 = \$130,000$

Next, we calculate the increase in the value of the house due to these improvements. We know this increase is 150\% of the initial purchase price (\$80,000).

To find 150\% of \$80,000:
$150\% \times \$80,000 = 1.5 \times \$80,000 = \$120,000$

Adding this increase to the original purchase price gives us the new value of the house:
$\$80,000 + \$120,000 = \$200,000$

Now, to determine the profit, subtract the total investment from the final value of the house:
$\$200,000 - \$130,000 = \textcolor{green!50!black}{\textbf{\$70,000}}$

Therefore, Josh made a profit of \textcolor{green!50!black}{\textbf{\$70,000}}.
\end{quote}
\end{tcolorbox}


\end{document}